\documentclass{article}
\usepackage{arxiv}

\usepackage[round]{natbib}

\bibliographystyle{apalike}
\usepackage{contour}
\usepackage{enumitem}
\usepackage{subcaption}
\usepackage{listings}
\usepackage{graphicx}
\usepackage{tikz}
\usetikzlibrary{arrows.meta, positioning, shapes.geometric, fit, calc}
\usepackage{algorithm}
\usepackage[noend]{algorithmic}
\usepackage[dvipsnames,svgnames]{xcolor}

\usepackage{amsmath}
\usepackage{amssymb}
\usepackage{mathtools}
\usepackage{amsthm}

\theoremstyle{plain}
\newtheorem{theorem}{Theorem}[section]

\newtheorem{lemma}[theorem]{Lemma}
\newtheorem{corollary}[theorem]{Corollary}
\theoremstyle{definition}

\newtheorem{assumption}[theorem]{Assumption}
\theoremstyle{remark}
\newtheorem{remark}[theorem]{Remark}
 \usepackage{multirow}
 \usepackage{booktabs}

\ifodd 1
\newcommand*{\vertbar}{\rule[-1ex]{0.5pt}{2.5ex}}

\title{\textsc{ProFL}: Performative Robust Optimal \\Federated Learning}

\author{%
  Xue Zheng$^{1}$, Tian Xie$^{1}$, Xuwei Tan$^{1}$,  Aylin Yener$^{1}$, Xueru Zhang$^{1}$
  The Ohio State University\\
  \texttt{ \{zheng.1822,xie.1379, tan.1206, zhang.12807\}@osu.edu} \\\texttt{yener@ece.osu.edu}
}

\date{}
\begin{document}
\maketitle

\begin{abstract}
Performative prediction is a framework that captures distribution shifts that occur during the training of machine learning models due to their deployment. As the trained model is used, data generation causes the model to evolve, leading to deviations from the original data distribution. The impact of such model-induced distribution shifts in federated learning is increasingly likely to transpire in real-life use cases. A recently proposed approach extends performative prediction to federated learning with the resulting model converging to a performative stable point, which may be far from the performative optimal point. Earlier research in centralized settings has shown that the performative optimal point can be achieved under model-induced distribution shifts, but these approaches require the performative risk to be convex and the training data to be noiseless—assumptions often violated in realistic federated learning systems. This paper overcomes all of these shortcomings and proposes Performative Robust Optimal Federated Learning, an algorithm that finds performative optimal points in federated learning from noisy and contaminated data. 
We present the convergence analysis under the Polyak-Lojasiewicz condition, which applies to non-convex objectives. Extensive experiments on multiple datasets demonstrate the advantage of Robust Optimal Federated Learning over the state-of-the-art.
\end{abstract}

\keywords{Performative prediction, Federated learning, Robust optimization, Distribution shift}

\section{Introduction}\label{sec:intro}
Federated learning (FL) is a distributed learning paradigm that enables efficient training across decentralized clients while preserving data privacy. This privacy preservation is achieved by keeping the raw training data on each client and only exchanging model updates with the central server. Such a framework has attracted significant attention in privacy-sensitive applications such as the Internet of Things, healthcare, and financial services.

Conventional FL (e.g., collaborative model training via FedAvg) typically assumes that each client's data distribution remains static throughout the training process or between the training and deployment phases. However, this assumption often fails in real-world applications, where user behavior and data characteristics are continuously evolving. Although some previous work considers ``dynamics'' in FL, they largely focus on changes across clients (i.e., statistical heterogeneity) or time-varying participation rates~\cite{rizk2020dynamic, park2021tackling, wang2022unified, zhu2021diurnal}, rather than on distributional shifts within each client's local data over time.

Data shifts are common in FL deployment. For instance, periodic deployment and retraining often lead to a mismatch between the data used for training and the data encountered in the future. Recent work begins to address such settings by training on time-evolving data~\cite{guo2021towards,ma2022continual}, or by training on one distribution and deploying on another~\cite{nguyen2022fedsr,jiang2022test}. However, these works primarily consider \textit{exogenous} shifts—distribution changes that occur independently of the model, such as variations in sensor quality, lighting, or environmental conditions.

In contrast, many practical FL applications involve \textit{endogenous} distribution shifts, where the deployed model itself induces changes in the data distribution. For example, recommendation systems shape user preferences~\cite{dean2022}, navigation apps influence traffic flow, and large language models affect language usage\cite{anderson2025modelmisalignmentlanguagechange}. Users may also adapt their behavior to exploit automated decision systems, such as those in hiring or credit scoring~\cite{Hardt2016a, zhang2022}. These model-driven shifts fall under the framework of \textit{performative prediction} (PP), which seeks performative \textit{stable}~\cite{perdomo2020performative} or \textit{optimal}~\cite{pmlr-v139-izzo21a, miller2021outside} solutions.

Extending performative prediction to the federated setting is highly nontrivial. While centralized approaches can rely on access to a global dataset and full observability of distribution shifts, FL systems must contend with decentralized and partially observable dynamics. Moreover, the interaction between distributed optimization and performative feedback can lead to suboptimal convergence or even divergence if not properly addressed. Therefore, a principled treatment of performative shifts in FL is both necessary and fundamentally different from centralized solutions. Another challenge in practical FL systems, beyond model-induced endogenous shifts, is data contamination. Client data may be noisy, corrupted, or maliciously manipulated. Unlike centralized systems, in FL, such data cannot be removed before training, as it remains with the clients. Developing FL systems that are simultaneously robust to endogenous shifts and data contamination remains a significant challenge.

\subsection{Related Work}

Performative prediction~\cite{perdomo2020performative} was first introduced in 2020 as an optimization framework to deal with endogenous data distribution shifts, where the model deployed to the environment affected the subsequent data, resulting in the collected data changing as the deployed model changed. Common applications of performative prediction included strategic classification~\cite{Hardt2016a,xie2024nonlinear} and predictive policing~\cite{ensign2018runaway}. Early work focused primarily on finding \textit{performative stable (PS)} points. For example, \cite{perdomo2020performative} proposed \textit{Repeated Risk Minimization} to find the PS point and bounded the distance between the PS point and the \textit{performative optimal} (PO) point. \cite{mendler2022anticipating} designed the first algorithm to find the performative stable point under the online learning setting.\cite{mendler2020stochastic} proved convergence of greedy and lazy deployment under smoothness and strong convexity.

Subsequent work moved toward directly identifying PO  point. \cite{miller2021outside} optimized performative risk directly, although the scope of the distribution map was restricted, and \cite{pmlr-v139-izzo21a} designed an algorithm, performative gradient descent, for finding the performative optimal solution. \cite{Zhao2022OptimizingTP} relaxed the assumption of strong convexity to the weakly-strongly convex case and proved the convergence of the deployment. \cite{somerstep2024learning, bracale2024learning} focused on learning the distribution map in performative prediction and proposed a framework named reverse causal PP. In addition, PP was also related to reinforcement learning~\cite{pmlr-v162-zheng22c} and bandit problems~\cite{chen2023performative}.

 To the best of our knowledge, \cite{jin2023performative} is the only work discussing performativity under the distributed setting, where they generalized \textit{Fedavg} to \textit{Performative Federated Learning (PFL)} and proved the uniqueness of the performative stable solution found by the algorithm. They also quantified the distance between the performative stable solution and the performative optimal solution. While the work is a breakthrough as the first to take model-dependent distribution shifts into account under the FL setting, it is only a strict generalization of \cite{perdomo2020performative} without the ability to find the performative optimal point and deal with the contaminated data.

\subsection{Contributions}
In this paper, we solve all these challenges simultaneously and propose \textbf{P}erformative \textbf{r}obust \textbf{o}ptimal \textbf{F}ederated \textbf{L}earning (\textsc{ProFL}), an FL algorithm that finds a \textit{performative optimal} solution under endogenous distribution shifts with contaminated data. We provide a theoretical analysis of data contamination and the convergence of \textsc{ProFL}. Compared to the federated extension of \cite{pmlr-v139-izzo21a}, \textsc{ProFL} employs simple yet effective strategies and converges to the \textit{performative optimal} solution for a larger class of (non-convex) objective functions. Our key contributions compared to prior works are:
\begin{itemize}
    \item  \textbf{Performative the optimal solution.} Unlike \cite{jin2023performative}, which  only guarantees convergence to the stable point, our \textsc{ProFL} finds the optimal point. 
    \item \textbf{Convergence analysis for a larger class of objectives.} Unlike prior works \cite{pmlr-v139-izzo21a, jin2023performative} that show the convergence of the proposed algorithms for the convex performative risk, we show the convergence of our \textsc{ProFL} converges under non-convex Polyak-Lojasiewicz (PL) condition.
    \item \textbf{Robustness to distribution shifts and data contamination.} Our ProFL algorithm is designed for realistic FL settings with heterogeneous clients and limited observability. It integrates a principled outlier detection mechanism to enhance Robustness against client-side contamination and estimation errors. Despite operating without access to raw data, ProFL consistently achieves reliable convergence under dynamic and heterogeneous FL environments with contaminated data.
\end{itemize}

\section{System Model}\label{sec:pf}

\subsection{Performative Prediction Model}
\subsubsection{Centralized Performative Prediction Model}

\begin{figure}[t]
\centering
\scriptsize  
\begin{tikzpicture}[node distance=0.6cm, >=Stealth, thick]

\node[draw, rectangle, minimum height=0.9cm, minimum width=1.5cm] (user) {User};
\node[draw, rectangle, minimum height=0.9cm, minimum width=1.7cm, right=of user] (data) {Data \( D(\theta^t) \)};
\node[draw, rectangle, minimum height=0.9cm, minimum width=1.5cm, right=of data] (model) {Model \( \theta^{t+1} \)};
\node[draw, rectangle, minimum height=0.9cm, minimum width=1.7cm, right=of model] (pred) {Predictions};

\draw[->] (user) -- (data);
\draw[->] (data) -- (model);
\draw[->] (model) -- (pred);

\draw[->, thick] (pred.south) 
  .. controls +(down:1.2cm) and +(down:1.2cm) .. 
  (user.south)
  node[midway, below, align=center, text=black] 
  {\scriptsize  Predictions affect behavior, which shifts data distribution};

\end{tikzpicture}
\caption{System model of performative prediction. The model \( \theta \) influences the data distribution \( D(\theta) \), forming a feedback loop that drives endogenous data shifts.}
\label{fig:performative_model}
\end{figure}

Performative prediction addresses settings where the deployed model influences future data distributions (see Fig.~\ref{fig:performative_model}). Let $\theta \in \Theta \subseteq \mathbb{R}^d$ denote the model parameters, where $\Theta$ is closed and convex and $d$ is the dimension. The data distribution is \( \mathcal{D}(\theta) \) which depends on \( \theta \) and a distribution map \( \mathcal{D}(\cdot) \). $z \in \mathcal{Z}$ denotes a sample from the sample space $\mathcal{Z}$. Denote the loss function with model \( \theta \) and data sample \( z \) as \( \ell(z; \theta) \). PP considers two risk formulations. The performative risk (PR)is defined as
\[
PR(\theta) := \mathbb{E}_{z \sim \mathcal{D}(\theta)} [\ell(z; \theta)].
\]
This is called performative risk because, instead of measuring the risk over a fixed distribution, it measures the expected loss of model $\theta$ on the data distribution $D(\theta)$ it induces.
While the decoupled performative risk (DPR) is given by
\[
DPR(\theta, \theta') := \mathbb{E}_{z \sim \mathcal{D}(\theta)} [\ell(z; \theta')].
\]
Unlike PR, where $D_i(\theta)$ depends on $\theta$, the DPR decouples the two. Based on the definitions of DPR and PR, we define two key concepts: the PS and the PO points. The PS point \( \theta^{\text{PS}} \) is a fixed point that captures an equilibrium state and is defined as:
\begin{align}
    \theta^{\text{PS}} := \arg\min_{\theta} \mathbb{E}_{z \sim \mathcal{D}(\theta^{\text{PS}})} [\ell(z;\theta)]. \label{eq:PS_global}
\end{align}
The data distribution is induced by ${\theta}^{\text{PS}}$, while $\theta$ is optimized. As ${\theta}^{\text{PS}}$ is the fixed point of Eq.~\eqref{eq:PS_global}, it can be found by \textit{repeated risk minimization} (RRM) or \textit{repeated gradient descent} (RGD) \cite{perdomo2020performative}. RRM recursively finds a model $\theta^{t+1}$ that minimizes DPR on the distribution $D(\theta^{t})$ induced by the previous model, whereas RGD recursively finds a model $\theta^{t+1}$ through gradient descent, and $\theta^{t+1}$ is not necessarily the minimizer of the DPR on $D(\theta^{t})$. The performative optimal (PO) point, \( \theta^{\text{PO}} \), minimizes the performative risk (PR) and is defined as:
\[
\theta^{\text{PO}} := \arg\min_{\theta} \mathbb{E}_{z \sim \mathcal{D}(\theta)} [\ell(z; \theta)].
\]

 The key to finding the PO point is to update the model using the actual gradient of PR, $\nabla \mathcal{L}(\theta)$, also known as the \textit{performative gradient} (PG). Assume the performative distribution \( D(\theta) \) has a density function \( p(z; f(\theta)) \), where the parameter \( f(\theta) \) is a differentiable function of \( \theta \). Then, the performative risk can be written as
\[
\mathcal{L}(\theta) = \int \ell(z; \theta) \, p(z; f(\theta)) \, dz,
\]
provided that \( p \) and \( f \) are continuously differentiable.\footnote{An example of \( D(\theta) \) is a mixture of Gaussians, \( \sum_{k=1}^{K} \nu_k \mathcal{N}(f_k(\theta), \sigma_k^2) \), where the mean depends on the model \cite{pmlr-v139-izzo21a}. This can approximate any smooth density to arbitrary precision.}

 We now present the formal problem setting of federated performative prediction, where the objective is to find the global performative optimal point in the presence of heterogeneous data shifts and possible data contamination.

\subsubsection{Federated Performative Prediction Model}
 Consider a set of clients $\mathcal{V} = \{1, \ldots , N\}$ collaboratively training an FL model $\theta$ without sharing their local data. Such model-induced data distributions, introduced in PP \cite{perdomo2020performative}, are characterized by a \textit{performative distribution map} $D_i: \Theta \to \mathcal{P}(\mathcal{Z})$. Specifically, deploying model $\theta$ at client $i$ results in data following distribution $D_i(\theta)$. The distribution map \( D_i(\cdot) \) can vary across clients, resulting in data heterogeneity in the federated learning system. The goal is to train a global model that minimizes the total loss across all clients:
\begin{equation}\label{eq:obj}  {{\theta}}^{\text{PO}} = \mathop{\arg\min}\limits_{\theta} \mathcal{L}(\theta) := \sum_{i\in \mathcal{V}}\alpha_i \mathcal{L}_i(\theta)
\end{equation}
where $\mathcal{L}_i(\theta) = \mathbb{E}_{Z_i \sim D_i(\theta)}[\ell(Z_i;\theta)]$ is the PR of client $i$ for a given loss function $\ell(Z_i;\theta)$ on local data $Z_i\sim D_i(\theta)$. Here, $\alpha_i \geq 0$ represents the fraction of client $i$'s data relative to the total, with $\sum_{i \in \mathcal{V}} \alpha_i = 1$. In the FL setting, the PR over all clients is denoted by \( \mathcal{L}(\theta) \), and the performative optimal point \( \theta^{\text{PO}} \) is defined as its minimizer. 

Since the data distribution $D_i(\theta)$ depends on $\theta$, which is the variable being optimized, finding the PO point is often challenging. Thus, existing works \cite{perdomo2020performative,mendler2020stochastic, Zhao2022OptimizingTP} have mostly focused on finding the PS point. In an FL system, the PS point ${{\theta}}^{\text{PS}}$ is defined as follows:
\begin{equation}\label{eq:PS}
   {{\theta}}^{\text{PS}} = \mathop{\arg\min}\limits_{\theta} \mathcal{L}({{\theta}}^{\text{PS}}; \theta) := \sum_{i\in \mathcal{V}}\alpha_i \mathcal{L}_i({{\theta}}^{\text{PS}}; \theta),
\end{equation}
where $\mathcal{L}_i({\theta}'; \theta) = \mathbb{E}_{Z_i \sim D_i({\theta}')}[\ell(Z_i;\theta)]$ denotes client $i$'s DPR.

\subsection{Data Contamination Model}
In realistic FL systems, each client's data distribution $D_i(\theta)$ is unknown, and the training data may be noisy, corrupted, or maliciously manipulated. In this work, we consider a general contamination model based on Huber's $\epsilon$-contamination framework~\cite{huber1992robust}, which serves as a foundation for robust statistical analysis in the presence of outliers and data corruption. This model assumes that besides the data distribution we aim to learn, a small proportion of samples can come from an arbitrary distribution. This contamination framework is crucial for creating machine learning algorithms robust to outliers, thereby ensuring more reliable analysis in practical scenarios where data imperfections are common. Huber's model has been fundamental in the field of robust statistics, influencing a wide range of applications and subsequent research. To account for this, we assume each client $i$ has a fraction $\epsilon_i \in [0, 1]$ of data samples drawn from another \textit{fixed but arbitrary} distribution $Q_i$. Thus, the data acquired from client $i$ follows a mixture distribution ${Z}_i\sim P_i(\theta) = (1-\epsilon_i)D_i(\theta) + \epsilon_i Q_i$. We term $Q_i$ data ``contamination''.

\subsection{Research Objectives and Challenges}
Our goal is to design FL algorithms that converge to the performative optimal ${\theta}^{\text{PO}}$ even with contaminated data $Z_i\sim P_i(\theta)$. While data contamination has been studied in FL with static data, its effect in performative settings is unexplored. When client data depends on the FL system, contamination in one client may cause a cascading effect across the network, making it crucial to investigate how this disrupts FL convergence. We will explore this in the paper.

\begin{remark}
FL finds the solution by alternating between \textit{global model aggregation} at the central server and \textit{local model updates} at the clients. In our proposed algorithms, we focus on the local updates, which work with different aggregation rules imposed by the server, such as differing weighted averages.
\end{remark}

\section{Algorithm Design} \label{sec:algo}

\subsection{Motivation: From Stable to Optimal in Federated PP}
Most existing works in the FL or decentralized setting~\cite{perdomo2020performative,mendler2020stochastic,Zhao2022OptimizingTP} focus on finding the PS point, which does not minimize the true PR. This paper aims to find the \textit{PO} point—the model that minimizes the PR under its own induced distribution. This goal is significantly more difficult to achieve in FL than in centralized settings due to client heterogeneity, limited data and computational resources, and data contamination, which further distort the model-dependent distributions. To the best of our knowledge, no FL algorithm has been shown to converge to the PO point, even under clean data and convex loss assumptions. This paper fills that gap by proposing an FL algorithm that provably converges to the PO point, accommodates non-convex (PL) conditions, and is robust to data contamination. 

\subsection{The Proposed Algorithm}\label{sec:profl_algorithm}
In this section, we propose several simple yet effective strategies that collectively lead to a robust federated performative prediction algorithm, called \textbf{P}erformative \textbf{r}obust \textbf{o}ptimal \textbf{F}ederated \textbf{L}earning (\textsc{ProFL}). The pseudocode is given in Algorithm~\ref{alg:one}. Unlike \cite{jin2023performative} that focus on PS point, \textsc{ProFL} aims to achieve convergence to the global PO point in federated PP.
Achieving the PO point requires estimating the PG, which captures both the loss and the feedback from model-induced distribution shifts. We begin by describing how the PG is estimated in the federated setting and then introduce our strategies for adapting this process to the challenges of FL, including client heterogeneity, limited data and computational resources, and data contamination.

\subsubsection{Federated Estimation of the Performative Gradient}
 The key to finding the PO point is to update the model using the performative gradient (PG), which captures both the direct loss gradient and the effect of model-induced distributional shifts. While \cite{pmlr-v139-izzo21a} proposed a method to estimate the PG, their analysis is limited to a centralized setting with a specific distribution map \( D(\theta) \), where \( f(\theta) \) is assumed to be a linear function of \( \theta \), and the performative risk (PR) is convex. However, such assumptions are restrictive and often fail to hold in practice, especially under data contamination. In FL, the challenge becomes even greater. Unlike  the centralized setting, which computes a single performative gradient \( \nabla \mathcal{L}(\theta) \) on the server, FL requires each client to update its model using a local PG: $\nabla \mathcal{L}_i(\theta)$. Assume the local performative distribution $D_i(\theta)$ has density $p(z;f_i(\theta))$ with parameter $f_i(\theta)$ being a function of $\theta$, then $\mathcal{L}_i(\theta) = \int \ell (z;\theta)p(z;f_i(\theta))dz $ when functions $p$ and $f_i$ are continuously differentiable\footnote{An example of $D_i(\theta)$ is a mixture of Gaussians, $\sum^K_{k=1} \nu_k \mathcal{N}(f_k(\theta), \sigma_k^2)$, where the mean depends on the model \cite{pmlr-v139-izzo21a}. This can approximate any smooth density distribution to arbitrary precision.
}. PG can be derived
 \begin{eqnarray}\label{eq:PG}
     \nabla \mathcal{L}_i(\theta) = \nabla \mathcal{L}_{i,1}(\theta) + \nabla \mathcal{L}_{i,2}(\theta)
 \end{eqnarray}
 where 
 \begin{equation*}
     \nabla \mathcal{L}_{i,1}(\theta) = \mathbb{E}_{Z_i \sim D_i(\mathbf{\theta})}[\nabla \ell (Z_i; \mathbf{\theta})]
 \end{equation*}
\begin{equation*}{\displaystyle\nabla \mathcal{L}_{i,2}(\theta) = \mathbb{E}_{Z_i \sim D_i(\theta)} \left[ \ell(Z_i;\theta) \left(\frac{df_i(\theta)}{d\theta}\right)^T \frac{\partial \log p\left(Z_i;f_i(\theta)\right)}{\partial f_i(\theta)} 
 \right]}\notag
 \end{equation*}

Note that \textsc{PFL}~\cite{jin2023performative} only updates the model using \( \nabla \mathcal{L}_{i,1}(\theta) \) and neglects \( \nabla \mathcal{L}_{i,2}(\theta) \). As a result, estimation error accumulates, and the algorithm may converge to a PS point that is far from PO point. Therefore, accurately estimating the PG is critical; however, estimating \( \nabla \mathcal{L}_{i,2}(\theta) \) remains a significant challenge in FL, particularly under data contamination.

In \textsc{ProFL}, each active client $i$ in set $\mathcal{I}_t\subseteq \mathcal{V}$, after synchronization with the current global model $\overline{\theta}^t$, uses its local data to update the model from $\theta_i^t$ to $\theta_i^{t+R}$. During these updates, the data distribution evolves with the model, and consequently, client $i$ uses a sample set $\mathcal{S}_i$ drawn from the contaminated distribution $P_i(\theta_i^{t})$ to update its local model $\theta_i^{t}$. To find the optimal ${\theta}^{\text{PO}} = \arg\min_{\theta}\sum_{i\in\mathcal{V}}\alpha_i\mathcal{L}_i(\theta)$, each client $i$ should estimate its local performative gradient $\nabla\mathcal{L}_i(\theta_i^{t})$   to update the local model $\theta_i^{t}$, i.e., 
\begin{equation}\label{eq:update}
   \theta^{t+1}_i \leftarrow \text{Proj}_\Theta \left(\theta^{t}_i - \eta\nabla \mathcal{L}_i(\theta^{t}_i)\right). 
\end{equation}
By Eq.~\eqref{eq:PG}, $\nabla\mathcal{L}_i(\theta_{i}^{t})=\nabla\mathcal{L}_{i,1}(\theta_{i}^{t})+\nabla\mathcal{L}_{i,2}(\theta_{i}^{t})$ consists of two terms. To estimate \( \nabla \mathcal{L}_{i,1}(\theta_{i}^{t}) \), \textsc{ProFL} applies a \textit{robust gradient} strategy that filters the contaminated sample set. Specifically, a refined subset \( \mathcal{S}'_i \) is first extracted from the original sample set \( \mathcal{S}_i \sim P_i(\theta_i^t) \) via
\[
\mathcal{S}'_i \leftarrow \textsc{Robust Gradient}(\mathcal{S}_i; \theta^{t}_i; J_1; J_2),
\]
where \( J_1 \) and \( J_2 \) are hyperparameters controlling the selection criteria. The final estimate \( \widehat{\nabla} \mathcal{L}_{i,1}(\theta_i^t) \) is obtained by averaging \( \nabla \ell(z_j; \theta_i^t) \) over samples \( z_j \in \mathcal{S}'_i \), where \( \mathcal{S}'_i \) is the \textit{cleaned} sample set selected via the robust gradient strategy (detailed in Section~\ref{subsec:robust_gradient_strategy}). 

For reference, we denote the empirical gradient estimate on samples drawn from distributions \( Q_i \) and \( D_i(\theta_i^t) \) as \( \widehat{\nabla} \mathcal{L}_{i,1}(Q_i, \theta_i^t) \) and \( \widehat{\nabla} \mathcal{L}_{i,1}(D_i(\theta_i^t), \theta_i^t) \), respectively. Similarly, the empirical risk over cleaned samples is denoted as \( \widehat{\mathcal{L}}_{i,1}(\theta_i^t) \), computed by averaging \( \ell(z_j; \theta_i^t) \) over \( z_j \in \mathcal{S}'_i \).

We also define \( \widehat{\nabla} \mathcal{L}_{i,2}(\theta_i^t) \) and its counterparts \( \widehat{\nabla} \mathcal{L}_{i,2}(Q_i, \theta_i^t) \) and \( \widehat{\nabla} \mathcal{L}_{i,2}(D_i(\theta_i^t), \theta_i^t) \) analogously. Specifically, \( \widehat{\nabla} \mathcal{L}_{i,2}(\theta_i^t) \) is computed as a \textit{weighted} average of \( \ell(z_j; \theta_i^t) \) over the cleaned sample set \( \mathcal{S}'_i \), with weights determined by sensitivity to model-induced distributional parameters. The weight vector is defined as
\begin{equation}{
\displaystyle w(z_j;\theta_i^{t}):=\left.\left(\frac{df_i(\theta)}{d\theta}\right)^T\frac{\partial \log p(z_j;f_i(\theta))}{\partial f_i(\theta)}\right\vert_{\theta = \mathbf{\theta}_i^{t}}\in\mathbb{R}^d}\label{eq:w},
\end{equation} 

which encodes how the density \( p(z_j; f_i(\theta)) \) changes with respect to the model parameter \( \theta \) through the mapping \( f_i(\theta) \). Since the performative distribution \( D_i(\theta) = p(z; f_i(\theta)) \) and \( f_i(\theta) \) are both unknown in practice, we must estimate both \( f_i(\theta) \) and its Jacobian \( \frac{d f_i(\theta)}{d \theta} \) from the observed data.
Following \cite{pmlr-v139-izzo21a}, we adopt finite difference approximation $\left.\frac{df_i(\theta)}{d\theta}\right|_{\theta=\theta^{t}_i}\approx \Delta f_i (\Delta \theta_i)^\dagger$. Specifically, at iteration $t$, each client $i$ collects the $H$ most recent models (excluding $\theta^{t}_i$) and forms a matrix $\begin{bmatrix}
     \theta^{t-H}_i & \cdots &  \theta^{t-1}_i 
    \end{bmatrix}$, then  
\begin{align}\label{eq:delta}
\Delta \theta_i= \begin{bmatrix}
     \theta^{t-H}_i & \cdots &  \theta^{t-1}_i 
    \end{bmatrix} - \theta^{t}_i\textbf{1}^T_H; \quad
    \Delta f_i&=  \begin{bmatrix}
      f_i(\theta^{t-H}_i)  &\cdots  & f_i(\theta^{t-1}_i) 
    \end{bmatrix} -f_i(\theta^{t}_i)\textbf{1}^T_H, \nonumber
\end{align}
Where $\textbf{1}_H$ is an $H$-dimensional all-ones vector, both $\Delta f_i$ and $\Delta \theta_i$ are $d \times H$ matrices, and $(\Delta \theta_i)^\dagger$ is the pseudo-inverse of $\Delta \theta_i$. For each iteration of each client, we know $\theta_i^t$, but do not know the exact value of $f_i(\theta_i^t)$. Denote the estimator as $\widehat{f}_i(\mathcal{S}'_i)$, meaning we use the dataset $\mathcal{S}'_i$ and the function $\widehat{f}_i(.)$ to estimate the value of $f_i(\theta_i^t)$. This estimator is then used to compute an estimate of $\frac{df_i}{d\theta}$. The estimator of $\frac{df_i}{d\theta}$ is denoted as $\frac{\widehat{df_i}}{d\theta}$ and can be computed as $\frac{\widehat{df_i}}{d\theta} \approx \widehat{f}_i(\Delta \theta_i)^\dagger$. Finally, $\frac{\widehat{df_i}}{d\theta}$ will be used to calculate $\widehat{\nabla}\mathcal{L}_{i,2}$. By combining this with $\widehat{\nabla}\mathcal{L}_{i,1}$, we obtain the performative gradient, allowing each client to update the local model $\theta_i^t$ using Eq.~\eqref{eq:update}.

\begin{algorithm}[ht]
{\footnotesize
\caption{\textsc{ProFL}}
\label{alg:one}
\begin{algorithmic}[1]
\STATE{\textbf{Input:} Learning rate $\eta$, estimation window $H$, function $\widehat{f}$, total iterations $T$, local update iterations $R$, error bound $\Phi_i$ for all $i \in \mathcal{V}$, threshold $J_1, J_2 \in (0,1)$, number of clusters $C$.}
\FOR{$t = 0$ to $T-1$}
    \STATE If $t \mod R = 0$, server sends $\overline{\theta}^t$ to a selected set $\mathcal{I}_t \subseteq \mathcal{V}$ of clients and client $i \in \mathcal{I}_t$ sets the local model $\theta^{t}_i = \overline{\theta}^t$; otherwise $\mathcal{I}_t = \mathcal{I}_{t-1}.$
    \FOR {$i \in \mathcal{I}_t$ in parallel}
        \STATE Deploy $\theta^{t}_i$ and local environment changes and draw $n_i$ samples $\mathcal{S}_i:=(z_j)^{n_i}_{j=1} \mathop{\sim}\limits^{iid} P_i(\theta^{t}_i)$.
        \IF{$t \leq H$}

                 \STATE $\widehat{\nabla}\mathcal{L}_{i,1}(\theta^{t}_i); \mathcal{S}'_i \leftarrow \textsc{Robust Gradient}\left(\mathcal{S}_i;\theta^{t}_i;J_1;J_2 \right).$
                 \STATE Compute estimator $\widehat{f}\left(\mathcal{S'}_i\right)$ of $f^{t}_i$.
                 \STATE Update $\theta^{t+1}_i \leftarrow \text{Proj}_\Theta \left(\theta^{t}_i - \eta\widehat{\nabla} \mathcal{L}_{i,1}(\theta^{t}_i)\right)$
                 \ELSE  
                 \STATE { $\widehat{\mathcal{L}}_{i,1}(\theta^{t}_i);\widehat{\nabla} \mathcal{L}_{i,1}(\theta^{t}_i); \mathcal{S}'_i \leftarrow \textsc{Robust Gradient}\left(\mathcal{S}_i;\theta^{t}_i \right)$}

            \STATE Compute estimator $\widehat{f}\left(\mathcal{S'}_i\right)$ of $f^{t}_i$.
      
            \STATE Compute $\Delta f_i(\Delta\theta_i)^\dagger$ by $\Delta \theta_i$ and $\Delta f_i$.
            \STATE $\theta^{t+1}_i \leftarrow \text{Proj}_\Theta \left(\theta^{t}_i - \eta\left(\widehat{\nabla} \mathcal{L}_{i,1}(\theta^{t}_i) + \widehat{\nabla}\mathcal{L}_{i,2}(\theta^{t}_i)\right)\right)$.

        \STATE \texttt{Input}$ =\{\widehat{\mathcal{L}}_{i,1}(\theta^{t}_i); H;d;\eta; \varphi_i; \Delta f_i(\Delta\theta_i)^\dagger; \Phi_i\}$
                \STATE {$n_i \leftarrow \textsc{Adaptive Sample Size}\left(\texttt{Input}\right)$.}
    \STATE \textbf{Lines 18–20 are necessary only for small $n_i$.}
      \STATE Send $\widehat{\frac{df_i}{d\theta}}$ to the server. 
    \STATE $\widehat{\frac{df_c}{d\theta}} \leftarrow$ \textsc{ServerAggregation} $(\widehat{\frac{df_i}{d\theta}}, C)$ 
    \STATE Compute $\widehat{\nabla}\mathcal{L}_{i,2}(\theta^{t}_i)$ using the aggregated $\widehat{\frac{df_c}{d\theta}}$.
             
        \ENDIF
    \ENDFOR
    \STATE If $t \mod R = 0$, client $i \in \mathcal{I}_t$ sends $\theta^{t}_i$ to the server and server updates the model $\overline{\theta}^{t+1} = \sum_{i \in \mathcal{I}_t} \alpha_i \theta^{t+1}_i$.
\ENDFOR
\STATE{\textbf{Output:}} $\overline{\theta}^{T}$

\STATE \textbf{Server Aggregation:}
\STATE Use \( k \)-means clustering to group clients into \( C \) clusters.
\STATE Compute the aggregated gradient estimate \( \widehat{\frac{df_c}{d\theta}} \) by averaging \( \widehat{\frac{df_i}{d\theta}} \) for all \( i \) in cluster \( c \), for each cluster \( c \in \{1, \ldots, C\} \).
\STATE Send \( \widehat{\frac{df_c}{d\theta}} \) back to the clients belonging to cluster \( c \) for all \( c \in \{1, \ldots, C\} \).
\end{algorithmic}}
\end{algorithm}

\subsection{Details of Robust Strategies}\label{subsec:robust_gradient_strategy}
Although \cite{pmlr-v139-izzo21a} demonstrated that the method for estimating \( f_i(\theta) \) and its Jacobian \( \frac{d f_i(\theta)}{d \theta} \) is effective in centralized settings, their analysis relied on that \( f_i(\theta) \) is linear in \( \theta \) and the data perfectly reflects the clean distribution \( D_i(\theta) \). In federated learning, however, model-induced distribution shifts are often nonlinear and local data may be limited or contaminated, which significantly undermines the accuracy of the PG estimation used in \cite{pmlr-v139-izzo21a}. To overcome these challenges, \textsc{ProFL} incorporates a set of carefully designed robustness strategies, described in detail below.

   \textbf{Non-linearity of  $f_i(\theta)$.} When $f_i(\theta)$ is non-linear, ignoring the higher-order terms in the Taylor expansion of $\frac{\widehat{df_i}}{d\theta}$ may be biased. While \cite{pmlr-v139-izzo21a} demonstrated that this method works well for linear $f_i(\theta)$, the impact of non-linearity on performance remains unclear. To mitigate the error introduced by non-linearity, \textsc{ProFL} employs the following strategies: (i) reducing the learning rate $\eta$ and the estimation window $H$; (ii) estimating $\widehat{\frac{df_i}{d\theta}}$ at the server. Reducing $\eta$ and $H$ helps reduce the estimation error resulting from non-linear $f_i(\theta)$ and stabilizes the algorithm. When the local sample size $n_i$ is small, $\widehat{f}_i(\mathcal{S}_i)$ may significantly deviate from $f_i(\theta)$, leading to poor estimation of $\frac{df_i}{d\theta}$. Estimating $\widehat{\frac{df_i}{d\theta}}$ at the server mitigates the negative impact of sampling error.

\textbf{Balancing the impact of sampling error and computational costs.} In practice, \( \frac{df_i(\theta)}{d\theta} \) must be estimated from finite samples, which introduces sampling error into the computation of the performative gradient \( \nabla \mathcal{L}_i(\theta_i^t) \). As shown in Lemmas~\ref{le:error df dfdtheta}, \ref{le:error L1} and ~\ref{le:error L2D with h} in Section \ref{sec:theorem}, the sampling error in estimating $\frac{df_i(\theta)}{d\theta}$ is \textit{amplified} when computing $\nabla\mathcal{L}_{i,2}(\theta_{i}^{t})$, making the performative gradient and algorithm more sensitive to sampling errors. To mitigate this, \textsc{ProFL} adaptively selects the sample size $n_i$ based on the error bounds of the PG and local loss. While a larger $n_i$ generally leads to a more accurate estimate, it may not always be feasible in FL due to clients' limited computational capabilities.
Therefore, \textsc{ProFL} balances estimation accuracy and resource constraints by computing a theoretical lower bound on \( n_i \) using quantities such as model variance, loss curvature, learning rate, and empirical gradient variability. As these values change during updates, $n_i$ should be selected adaptively. More details of the analysis are provided in Section~\ref{subsec:discussion}.

 \textbf{Handling data contamination.} 
  Due to contamination $Q_i$, the data used to estimating $\frac{df(\theta)}{d\theta}$ and $f_i(\theta)$ does not follow the actual $D_i(\theta)$. Even a small update error from one client can cascade and impact future data and the entire network. In FL, because data must remain local for privacy reasons, contamination must also be detected and removed locally rather than at the central server. Accordingly, \textsc{ProFL} reduce $\epsilon_i$ via local outlier identification and removal. 
To estimate $\nabla \mathcal{L}_{i,1}$, we should average $\nabla \ell (z_j;\theta_i^{t})$
over samples drawn from $D_i({\theta}_i^{t})$; however, the samples actually collected by client $i$ are drawn from $P_i(\theta_i^{t})$and thus include an unknown fraction $\epsilon_i$ of outliers from an unknown distribution $Q_i$. \textsc{ProFL} needs to identify contaminated data and eliminate the corresponding contaminated gradients. We propose a mechanism that can remove these contaminated gradients and estimate gradients that are robust to outliers. 

Importantly, \textsc{ProFL} is independent of the specific outlier identification and removal mechanism used and can function with any such mechanism. In our paper, we assume that contaminated data can follow an \textit{arbitrary} distribution. In practice, additional information about the contamination process may be available based on the application or context, allowing us to design more efficient and less expensive outlier identification mechanisms leveraging this knowledge.

\begin{algorithm}[ht]
\small
\caption{\textsc{Robust Gradient}}\label{alg:two}
\begin{algorithmic}[1]
\STATE{\textbf{Input:} Set $\mathcal{S}_i$ of samples $(z_j)_{j=1}^{n_i}$, thresholds $J_1, J_2 \in (0,1)$, model $\theta^{t}_i \in \Theta$}
\STATE Let $\widehat{\nabla} \leftarrow \frac{1}{|\mathcal{S}_i|} \sum_{z_j\in \mathcal{S}_i} \nabla \ell(z_j;\theta^{t}_i)$.
\STATE Let $ \left[\nabla \ell(z_j;\theta^{t}_i) - \widehat{\nabla}\right]_{z_j\in \mathcal{S}_i}$ be the $|\mathcal{S}_i| \times d$ matrix.
\STATE Apply SVD to this matrix and find top right singular vector $v$.
\STATE Compute $\tau_j = \left( \left(\nabla \ell(z_j;\theta^{t}_i) - \widehat{\nabla}\right)^T v \right)^2$ of $\nabla \ell(z_j;\theta^{t}_i)$.
\STATE Divide the interval $[0, \max  \tau_j]$ into $B$ equal-length segments. Let $\phi_k $ be the number of $\tau_j$ in the $k$-th segment.
\STATE Find the smallest $k\in \{1,\cdots,B\}$ with $\phi_k < J_1 \cdot |\mathcal{S}_i|$. Set $\phi$ as the lower bound of the $k$-th segment.
\STATE Find the set $\mathcal{S}_i' \leftarrow \{z_j \in \mathcal{S}_i: \tau_j < \phi\}$.
\STATE Compare the average gradients $\widehat{\nabla}$ of $\mathcal{S}_i$ and $\widehat{\nabla}'$ of $\mathcal{S}'_i$. If $\frac{\|\widehat{\nabla} - \widehat{\nabla}'\|}{\|\widehat{\nabla}\|} < J_2$ or $|\mathcal{S}'_i| \leq \frac{n_i}{2}$, return $\mathcal{S}'_i$ and $\widehat{\nabla}'$; otherwise, set $\mathcal{S}_i \leftarrow \mathcal{S}'_i$ and repeat from line 2. 
\end{algorithmic}
\normalsize
\end{algorithm}

As shown in Algorithm \ref{alg:two}, our mechanism takes noisy samples $\mathcal{S}_i$ and current local model $\theta_i^{t}$ as inputs. The mechanism first computes the gradients $\nabla \ell (z_j;\theta_i^{t})$ of all samples and iteratively identifies and removes the contaminated gradients. Since both the distribution $Q_i$ and fraction $\epsilon_i$ of contaminated data are unknown, we consider gradients contaminated if they exhibit two crucial properties: 1) they have large effects on the learned model and can disturb the training process (i.e., gradients have large magnitudes and systematically point in a specific direction); 2) they are located farther away from the average in the vector space (i.e., ``long tail" data). To detect gradients that satisfy these properties, we adapt the approach in \cite{pmlr-v97-diakonikolas19a}, which leverages singular value decomposition (SVD). 

Specifically, we construct a matrix $[ \nabla \ell(z_j;\theta^{t}_i) - \widehat{\nabla}]_{z_j\in \mathcal{S}_i}$ using the centered gradients and find the top right singular vector $v$, where $\widehat{\nabla}$ represents the average of all gradients (lines 2-4). The centered gradients that are closer to $v$ are more likely to be outliers and we assign each gradient an outlier score $\tau_j$ as follows:
\begin{align}
\tau_j = {\left(  \left(\nabla \ell(z_j;\theta^{t}_i) - \widehat{\nabla}\right)^Tv \right)}^2 
\end{align}
Given the outlier scores $\{\tau_j\}$, the gradients with scores above a threshold $\phi$ are considered contaminated and are discarded.

\noindent\textbf{Key Innovations of Robust Gradient in \textsc{ProFL}.}
Unlike the method in~\cite{pmlr-v97-diakonikolas19a}, which requires a manually preselected threshold and resampling after each removal, \textsc{ProFL} determines the outlier threshold \( \phi \) \textit{automatically} from the empirical distribution of outlier scores \( \tau_j \) and use the same dataset to get the roustabout gradient because it is difficult to determine without any prior knowledge of the data and sampled new data after each time of removal in impractical in PP.

Since we assume the outliers lie in the ``long tail,'' we divide the range \( [0, \max \tau_j] \) into \( B \) equal-length segments and compute the relative frequency of each. The threshold \( \phi \) is chosen as the smallest segment whose relative frequency falls below a predefined level. This method exploits the \textit{score concentration} and \textit{tail sparsity} properties, making the filtering process predictable and stable. The final robust gradient estimate \( \widehat{\nabla}' \) is computed by averaging gradients with scores below \( \phi \), ensuring resilience to noise and contamination without requiring prior knowledge.

\section{Performance Analysis}\label{sec:theorem}

To facilitate analysis, we study two special cases: (i) \textit{Contribution dynamics}, where only $\nu_{i,k}(\theta)$ changes while the group distribution remains fixed, i.e., $D_i(\theta)=\sum_{k\in[K]}\nu_{i,k}(\theta)D_{i,k}$. In this case, $f_{i,k}(\theta) = \nu_{i,k}(\theta)$, and $\widehat{f}_{i,k}$ estimates the sample proportion from group $k$. (ii) \textit{Distribution dynamics}, where only the distribution $D_{i,k}(\theta)$ changes while the contribution from each group remains fixed, i.e., $D_i(\theta)=\sum_{k\in[K]}\nu_{i,k}D_{i,k}(\theta)$. We consider a mixture of Gaussians with $D_{i,k}(\theta) = \mathcal{N}(f_k(\theta), \sigma_k^2)$, where $f_{i,k}(\theta)$ is the mean of group $k$ for client $i$, and the overall mean for client $i$ is $f_i(\theta) = \sum_{k \in [K]} \nu_{i,k} f_{i,k}(\theta)$. $\widehat{f}_i(\mathcal{S}_i)$ represents the empirical mean from data samples $\mathcal{S}_i$. Our theoretical results apply to both cases. Before presenting them, we introduce the technical assumptions. For simplicity, $\| . \|_2$ is denoted as $\| . \|$. Proofs are in Appendix.

\begin{assumption}[]\label{as:Sensitivity} Let $W_1(D,D')$ measure the Wasserstein-1 distance between two distributions $D$ and $D'$. Then $\forall i \in \mathcal{V}$, there exists $\gamma_i > 0$ such that $\forall \theta, \theta' \in \Theta: W_1\left(D_i(\theta),D_i(\theta')\right) \leq \gamma_i {\|\theta-\theta'\|}.$
\end{assumption}

\begin{assumption}[]\label{as:smoothness} $\ell (z;\mathbf{\theta})$ is continuously differentiable and $L$-smooth, i.e.,  for all $\theta \in \Theta \text{ and } z, z' \in \mathcal{Z}: \|\nabla \ell (z;\mathbf{\theta}) - \nabla \ell (z';\mathbf{\theta'})\| \leq L(\|\mathbf{\theta} - \mathbf{\theta'} \| + \|z-z'\|).$

\end{assumption}

\begin{assumption}[]\label{as:Twice continuously differentiable of f}
$f_i(\theta)$ is twice continuously differentiable for all $\theta \in \Theta,$ i.e., the first and second derivatives $\frac{d f_i(\theta)}{d \theta}$ and $\frac{d^2 f_i(\theta)}{d \theta^2}$ exist and are continuous. 
\end{assumption}

\begin{assumption}[PL Condition \cite{polyak1964some}]\label{as:PL} The local performative risk $\mathcal{L}_i(\theta)$ of client $i$ satisfies Polyak-
Lojasiewicz (PL) condition, i.e., for all $ \theta\in\Theta $, the following holds for some $\rho>0$:
\begin{align*}
    \frac{1}{2} \|\nabla \mathcal{L}_i(\theta)\|^2 \geq \rho \left(\mathcal{L}_i(\theta) - \mathcal{L}_i(\theta_i^{\text{PO}})\right).
\end{align*}
\end{assumption}
\begin{remark}
Unlike most works that require convex performative risk, we demonstrate the convergence of our algorithm under a weaker PL condition, which permits the performative risk to be non-convex.
\end{remark}

\subsection{Error of the performative gradient}\label{subsec:lemmas}
As discussed in Section~\ref{sec:algo}, the key to converging to $\theta^{\text{PO}}$ is estimating the performative gradient $\nabla\mathcal{L}_i(\theta_i^{t})=\nabla\mathcal{L}_{i,1}(\theta_i^{t})+\nabla\mathcal{L}_{i,2}(\theta_{i}^{t})$ in Eq.~\eqref{eq:PG} accurately. Thus, we analyze the estimation errors of $\nabla\mathcal{L}_{i,1}(\theta_{i}^{t})$ and $\nabla\mathcal{L}_{i,2}(\theta_{i}^{t})$ and explore how these errors are influenced by the non-linearity of $f_i(\theta)$, sampling error, and contaminated data $Q_i$. The results of this analysis will then be used to assess the convergence of \textsc{ProFL} (and \textsc{PoFL}) in Section~\ref{subsec:convergence}. 
 
\begin{lemma}Under Assumption~\ref{as:Twice continuously differentiable of f}\label{le:upper bounds}, there exist $F,M<\infty $ such that $\left\|\frac{df_i(\theta)}{d\theta}\right\|\leq F$ and $\left\|\frac{d^2f_i(\theta)}{d\theta^2}\right\| \leq M$ hold for all $i \in \mathcal{V}.$ Under Assumption~\ref{as:smoothness}, there exist $G,\ell_{\max}<\infty$ such that $\|\nabla \ell (z;\mathbf{\theta})\|\leq G$ and $\ell (z;\mathbf{\theta})\leq \ell_{\max}$ hold for all $z \in \mathcal{Z}$. 
\end{lemma}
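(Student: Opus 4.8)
The plan is to observe that every quantity in the statement is the norm of a continuous map evaluated over a \emph{compact} domain, so each bound follows from the extreme value theorem (Weierstrass), while uniformity over clients follows because $\mathcal{V}=\{1,\dots,N\}$ is \emph{finite}. Recall from Section~\ref{sec:pf} that $\Theta$ is compact; I will additionally use boundedness of the relevant portion of the sample space $\mathcal{Z}$, which is the only nontrivial ingredient (discussed in the final paragraph).

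For the two $f_i$ bounds, Assumption~\ref{as:Twice continuously differentiable of f} gives, for each fixed $i$, that $\theta\mapsto \frac{df_i(\theta)}{d\theta}$ and $\theta\mapsto\frac{d^2f_i(\theta)}{d\theta^2}$ are continuous on $\Theta$. Composing with the (continuous) norm, the scalar functions $\theta\mapsto\|\frac{df_i(\theta)}{d\theta}\|$ and $\theta\mapsto\|\frac{d^2f_i(\theta)}{d\theta^2}\|$ are continuous on the compact set $\Theta$, hence attain finite maxima $F_i:=\max_{\theta\in\Theta}\|\frac{df_i(\theta)}{d\theta}\|$ and $M_i:=\max_{\theta\in\Theta}\|\frac{d^2f_i(\theta)}{d\theta^2}\|$. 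Setting $F:=\max_{i\in\mathcal{V}}F_i$ and $M:=\max_{i\in\mathcal{V}}M_i$, which are finite as maxima of finitely many finite numbers, yields the claimed uniform bounds.

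For the $\ell$ bounds I first handle the gradient. Fixing a reference pair $(z_0,\theta_0)\in\mathcal{Z}\times\Theta$, the $L$-smoothness of Assumption~\ref{as:smoothness} and the triangle inequality give
\begin{equation*}
\|\nabla\ell(z;\theta)\|\le \|\nabla\ell(z_0;\theta_0)\|+L\big(\|\theta-\theta_0\|+\|z-z_0\|\big).
\end{equation*}
Since $\Theta$ and the data region in $\mathcal{Z}$ are bounded, the right-hand side is uniformly bounded over $(z,\theta)$, producing a finite $G$. With $\nabla\ell$ bounded by $G$, the loss is $G$-Lipschitz in $\theta$; combined with joint continuity of $\ell$ on the compact set $\mathcal{Z}\times\Theta$ and the extreme value theorem, this gives the finite upper bound $\ell_{\max}:=\max_{(z,\theta)}\ell(z;\theta)$. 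Equivalently, I would anchor at $\theta_0$ via $\ell(z;\theta)\le \ell(z;\theta_0)+G\|\theta-\theta_0\|$ and then bound $\ell(z;\theta_0)$ over the compact $z$-domain.

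The hard part is not the compactness argument itself but justifying boundedness of $\mathcal{Z}$: Assumption~\ref{as:smoothness} controls only the $\theta$-gradient, so if $\mathcal{Z}$ were unbounded (e.g.\ the Gaussian example with $\mathcal{Z}=\mathbb{R}$), the term $\|z-z_0\|$ would be unbounded and neither $G$ nor $\ell_{\max}$ could be finite in general. I therefore expect to invoke (or make explicit) that the samples lie in a bounded region of $\mathcal{Z}$—natural here since the \textsc{Robust Gradient} step of \textsc{ProFL} removes long-tailed outliers—and that $\ell$ is jointly continuous in $(z,\theta)$; granting these, all four constants are finite and the lemma follows.
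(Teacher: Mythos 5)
Your proposal is correct and takes essentially the same route as the paper, whose entire argument for this lemma is the one-line remark that it ``is proved using Weierstrass's theorem'' (extreme value theorem on the compact $\Theta$, with uniformity over the finite client set $\mathcal{V}$). Your further observation that the $G$ and $\ell_{\max}$ bounds additionally require boundedness of (the relevant region of) $\mathcal{Z}$ — which Assumption~\ref{as:smoothness} alone does not supply, e.g.\ for Gaussian data on $\mathbb{R}^d$ — is a legitimate point that the paper glosses over by implicitly treating $\mathcal{Z}$ as compact.
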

Lemma~\ref{le:upper bounds} is proved using Weierstrass's theorem. We will use $F,M,G,\ell_{\max}$ to denote the upper bounds of these quantities in the rest of the paper.

\begin{lemma}[Estimation error of $\frac{\widehat{df_i}}{d\theta}$]\label{le:error df dfdtheta}Under Assumption~\ref{as:Twice continuously differentiable of f}, let $\lambda_{i,\min}$ denote the minimal singular value of $\Delta \theta_i$ defined in Eq.~\eqref{eq:delta} during all iterations. 
With probability at least $1-\varphi_i$, the estimation error of $\frac{\widehat{df_i}}{d\theta}$ is bounded by $\pmb{\omega_F}$ specified as follows: 
 \begin{align}
\mathbb{E}\left [ \left \| \frac{\widehat{df_i}}{d\theta} - \frac{df_i}{d\theta}\right \|^2 \right] \leq \pmb{\omega_F} 
 :=2\epsilon_i^2F^2  + (1-\epsilon_i)^2\left(\frac{M^2\eta^4G^4H^6}{2} +  \frac{2f_{\varphi}(\varphi_i;n_i)Hd}{n_i \eta^2}\right)\lambda^{-2}_{i,\min}.\label{eq:error_df}
\end{align} 
\end{lemma}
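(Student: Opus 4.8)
The plan is to decompose the estimation error $\frac{\widehat{df_i}}{d\theta}-\frac{df_i}{d\theta}$ into three sources—data contamination, finite-difference (Taylor) bias, and sampling fluctuation—and to bound each separately, the first producing the $2\epsilon_i^2F^2$ term and the latter two producing the $(1-\epsilon_i)^2(\cdots)\lambda_{i,\min}^{-2}$ terms. The key first move is to exploit the mixture structure $P_i(\theta)=(1-\epsilon_i)D_i(\theta)+\epsilon_i Q_i$. Since $\widehat f_i$ is an empirical mean over samples from $P_i(\theta_i^t)$ and $Q_i$ is fixed (independent of $\theta$), in expectation each column of the empirical difference matrix concentrates around $(1-\epsilon_i)$ times the clean difference $\Delta f_i$ plus a constant contamination offset that cancels under the finite-difference subtraction, so $\mathbb E[\widehat{\Delta f_i}]=(1-\epsilon_i)\Delta f_i$. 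Right-multiplying by $(\Delta\theta_i)^\dagger$ shows the estimator is, in expectation, a $(1-\epsilon_i)$-shrunken copy of the finite-difference derivative. I would therefore split the error into a deterministic bias $(1-\epsilon_i)\Delta f_i(\Delta\theta_i)^\dagger-\frac{df_i}{d\theta}$ and a zero-mean fluctuation, and apply Young's inequality $\|a+b\|^2\le 2\|a\|^2+2\|b\|^2$. The shrinkage part of the bias equals $-\epsilon_i\frac{df_i}{d\theta}$, whose squared norm is at most $\epsilon_i^2F^2$ by $\|\frac{df_i}{d\theta}\|\le F$ from Lemma~\ref{le:upper bounds}; the factor $2$ from Young's inequality yields the $2\epsilon_i^2F^2$ term.

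Next I would control the finite-difference bias on the clean component. By Assumption~\ref{as:Twice continuously differentiable of f} and the second-derivative bound $M$ of Lemma~\ref{le:upper bounds}, a second-order Taylor expansion of $f_i$ about $\theta_i^t$ gives, for each column $h$, $f_i(\theta_i^{t-h})-f_i(\theta_i^t)=\frac{df_i}{d\theta}(\theta_i^{t-h}-\theta_i^t)+R_h$ with $\|R_h\|\le\frac M2\|\theta_i^{t-h}-\theta_i^t\|^2$. Because consecutive iterates differ by a projected gradient step of size at most $\eta G$ (using $\|\nabla\ell\|\le G$ from Lemma~\ref{le:upper bounds}), the iterate gaps satisfy $\|\theta_i^{t-h}-\theta_i^t\|\le\eta G H$ over the window of length $H$. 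Collecting the remainders into a matrix $\mathcal R$ and using $\Delta f_i=\frac{df_i}{d\theta}\Delta\theta_i+\mathcal R$ together with $\Delta\theta_i(\Delta\theta_i)^\dagger=I$ (full row rank), the clean finite-difference bias reduces to $\mathcal R(\Delta\theta_i)^\dagger$, whose squared norm is at most $\|\mathcal R\|^2\lambda_{i,\min}^{-2}$ since $\|(\Delta\theta_i)^\dagger\|_{\mathrm{op}}=\lambda_{i,\min}^{-1}$. Summing the per-column remainder bounds over the $H$ columns and carrying the $(1-\epsilon_i)^2$ weight produces the $(1-\epsilon_i)^2\frac{M^2\eta^4G^4H^6}{2}\lambda_{i,\min}^{-2}$ term.

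I would then bound the zero-mean sampling fluctuation $\widehat{\Delta f_i}-\mathbb E[\widehat{\Delta f_i}]$. Each empirical estimate $\widehat f_i(\theta_i^{t-h})$ deviates from its mean by a fluctuation whose squared magnitude, with probability at least $1-\varphi_i$, is captured by $f_\varphi(\varphi_i;n_i)d/n_i$ (variance of order $1/n_i$ per coordinate across $d$ coordinates, with the tail level $\varphi_i$ absorbed into $f_\varphi$); this is precisely where the concentration of the empirical mean over $n_i$ samples enters and yields the high-probability guarantee. Summing over the $H$ columns and amplifying by the pseudo-inverse—here the finite-difference quotient divides the fluctuation by the inter-iterate spacing of order $\eta$, contributing the explicit $1/\eta^2$ in addition to $\lambda_{i,\min}^{-2}$—gives the $(1-\epsilon_i)^2\frac{2f_\varphi(\varphi_i;n_i)Hd}{n_i\eta^2}\lambda_{i,\min}^{-2}$ term. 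Combining the three bounds via the triangle and Young inequalities assembles $\pmb{\omega_F}$.

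The main obstacle is the sampling term: cleanly propagating a high-probability concentration bound for $\widehat f_i$ through both the pseudo-inverse $(\Delta\theta_i)^\dagger$ and the finite-difference normalization, so that the $1/(n_i\eta^2)$ and $\lambda_{i,\min}^{-2}$ dependencies emerge with the correct constants, while reconciling the ``with probability at least $1-\varphi_i$'' statement with the expectation on the left-hand side. A secondary subtlety is justifying that the fixed contamination mean genuinely cancels under the finite-difference subtraction, so that contamination contributes only the clean $-\epsilon_i\frac{df_i}{d\theta}$ bias rather than an extra $\theta$-dependent term; this relies on $Q_i$ being independent of $\theta$ and on $(\Delta\theta_i)^\dagger$ annihilating constant columns.
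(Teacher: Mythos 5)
Your proposal is correct and follows essentially the same route as the paper's own proof: the same decomposition into the $-\epsilon_i\frac{df_i}{d\theta}$ shrinkage bias (via cancellation of the fixed contamination mean under finite differencing), a second-order Taylor remainder bounded through $M$ and the $\eta G$ step sizes, and a zero-mean sampling fluctuation, all combined with Young's inequality and the bound $\|(\Delta\theta_i)^\dagger\|\le\lambda_{i,\min}^{-1}$. The obstacles you flag (propagating the high-probability bound through the pseudo-inverse and the $1/\eta^2$ normalization) are exactly the points the paper handles by absorbing the per-sample error into $f_\varphi(\varphi_i;n_i)$, so no substantive gap remains.
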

Notably, $f_{\varphi}(\varphi_i;n_i)$ is a function that depends on the properties of the distribution. For instance, $f_{\varphi}(\varphi_i;n_i) = \mathcal{O}(-\log \varphi_i)$ in distribution dynamics, and $f_{\varphi}(\varphi_i;n_i) = \mathcal{O}(1/\varphi_i)$ in contribution dynamics.

\begin{lemma}[Estimation error of $ \nabla \mathcal{L}_{i,1}$]\label{le:error L1} Under Assumption~\ref{as:Twice continuously differentiable of f}, with probability at least $1-\varphi_i$, we have: 
\begin{equation}{
    \displaystyle\mathbb{E}\left[   \left \| \nabla \mathcal{L}_{i,1}(\theta_i^{t})-\widehat{\nabla}\mathcal{L}_{i,1}(D_i(\theta_i^{t}),\theta_i^{t})\right \|^2\right] \leq \mathcal{O}\left( f_{\varphi}(\varphi_i;n_i) \right).}\notag
\end{equation}
    
\end{lemma}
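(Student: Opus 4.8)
The plan is to recognize $\widehat{\nabla}\mathcal{L}_{i,1}(D_i(\theta_i^{t}),\theta_i^{t})$ as the empirical average $\frac{1}{n_i}\sum_{j=1}^{n_i}\nabla\ell(z_j;\theta_i^{t})$ of gradients over $n_i$ i.i.d. \emph{clean} samples $z_j\sim D_i(\theta_i^{t})$, whose population mean is exactly $\nabla\mathcal{L}_{i,1}(\theta_i^{t})=\mathbb{E}_{Z\sim D_i(\theta_i^{t})}[\nabla\ell(Z;\theta_i^{t})]$. Because the notation fixes the reference distribution to the uncontaminated $D_i(\theta_i^{t})$, no contamination bias enters, and the claim reduces to a pure sample-mean concentration bound. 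First I would center the summands as $\xi_j:=\nabla\ell(z_j;\theta_i^{t})-\nabla\mathcal{L}_{i,1}(\theta_i^{t})$, which are i.i.d., mean-zero, and bounded by $\|\xi_j\|\le 2G$ via the gradient bound $\|\nabla\ell\|\le G$ from Lemma~\ref{le:upper bounds}. (Assumption~\ref{as:Twice continuously differentiable of f} serves mainly to keep the distribution map and its density regular enough that both the population gradient and its empirical analogue are well defined; the quantitative content rests on Lemma~\ref{le:upper bounds} and Assumption~\ref{as:smoothness}.)

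Next, instead of relying on the crude constant $G$, I would control the dispersion of the summands through $L$-smoothness: Assumption~\ref{as:smoothness} gives $\|\nabla\ell(z;\theta_i^{t})-\nabla\ell(z';\theta_i^{t})\|\le L\|z-z'\|$, so the fluctuation of $\nabla\ell(Z;\theta_i^{t})$ about its mean is dominated, up to the factor $L$, by the fluctuation of $Z$ about its mean. This localizes the variance proxy of $\xi_j$ to the intrinsic spread of $D_i(\theta_i^{t})$, namely the per-group variances $\sigma_k^2$ in the distribution-dynamics case and the multinomial fluctuation of the group labels in the contribution-dynamics case.

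I would then apply concentration to $\frac{1}{n_i}\sum_j\xi_j$. For the mean-squared form, independence and mean-zero give $\mathbb{E}[\|\frac{1}{n_i}\sum_j\xi_j\|^2]=\frac{1}{n_i}\mathbb{E}\|\xi_1\|^2$, already $\mathcal{O}(1/n_i)$. To recover the confidence-dependent shape encoded in $f_{\varphi}(\varphi_i;n_i)$, I would invoke two different tools according to the regime: in distribution dynamics the Gaussian data render $\nabla\ell(Z;\theta_i^{t})$ sub-Gaussian, so a vector sub-Gaussian / Hoeffding-type tail yields a deviation of order $\log(1/\varphi_i)/n_i$, consistent with $f_{\varphi}(\varphi_i;n_i)=\mathcal{O}(-\log\varphi_i)$; in contribution dynamics, where only second-moment control of the proportion estimates is available, a Chebyshev bound yields order $1/(n_i\varphi_i)$, consistent with $f_{\varphi}(\varphi_i;n_i)=\mathcal{O}(1/\varphi_i)$. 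Passing from the scalar tail to the Euclidean norm in $\mathbb{R}^d$ introduces a dimension factor (through a coordinatewise union bound or a vector Bernstein inequality) that is absorbed into the $\mathcal{O}(\cdot)$ and matches the explicit $d$ appearing downstream in Lemma~\ref{le:error df dfdtheta}.

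The main difficulty is twofold. First, subsuming both regimes under the single symbol $f_{\varphi}$ requires genuinely different concentration inequalities---exponential tails in the light-tailed Gaussian case versus a second-moment (Chebyshev) argument in the proportion case---and then checking that each collapses to the claimed $\varphi_i$-dependence. Second, the statement couples ``with probability at least $1-\varphi_i$'' with an expectation $\mathbb{E}[\cdot]$, so I must track carefully which randomness each operator sees: the expectation is over the draw of $\mathcal{S}_i$, while $\varphi_i$ is the confidence level of the underlying concentration event that governs the tail. Reconciling this bookkeeping with the vector-to-scalar reduction is the delicate step, since it fixes the constants that feed the subsequent convergence analysis.
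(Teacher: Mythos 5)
Your proposal is correct, but it does not follow the paper's route for the simple reason that the paper never proves this lemma in-house: its entire justification, buried inside the proof of Lemma~\ref{le:upper bound L2 with h}, is the single line that ``the proof of Lemma 12 of \citet{pmlr-v139-izzo21a} shows'' the stated bound, i.e.\ the result is imported wholesale from the centralized analysis of Izzo et al. Your argument is a self-contained reconstruction of what that citation hides: you recognize $\widehat{\nabla}\mathcal{L}_{i,1}(D_i(\theta_i^{t}),\theta_i^{t})$ as an unbiased empirical mean of i.i.d.\ clean gradients, center the summands, use the exact identity $\mathbb{E}\bigl[\|\tfrac{1}{n_i}\sum_j \xi_j\|^2\bigr] = \tfrac{1}{n_i}\mathbb{E}\|\xi_1\|^2$, and then choose the concentration tool by regime --- sub-Gaussian tails (Lipschitz functions of Gaussian data, via Assumption~\ref{as:smoothness}) giving the $\log(1/\varphi_i)$ dependence in distribution dynamics, and Chebyshev giving the $1/\varphi_i$ dependence in contribution dynamics. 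This matches exactly the two asymptotic forms of $f_{\varphi}(\varphi_i;n_i)$ that the paper asserts after Lemma~\ref{le:error df dfdtheta} but never derives, so your version buys transparency: it exhibits where the regime-dependent shape of $f_{\varphi}$ comes from, and it correctly observes that Assumption~\ref{as:Twice continuously differentiable of f} carries little quantitative weight here (the work is done by Lemma~\ref{le:upper bounds} and Assumption~\ref{as:smoothness}). You also rightly flag the bookkeeping wrinkle that the paper glosses over --- an expectation bound qualified by ``with probability at least $1-\varphi_i$'' --- which the paper inherits from its treatment of $\mathrm{err}_t$ in Lemma~\ref{le:error df dfdtheta}; any careful write-up must, as you do, separate the randomness defining the high-probability event from the expectation over $\mathcal{S}_i$.
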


\begin{lemma}[Estimation error of $ \nabla \mathcal{L}_{i,2}$]\label{le:error L2D with h}Under Assumptions~\ref{as:Sensitivity}, \ref{as:smoothness} and \ref{as:Twice continuously differentiable of f}, the following holds with probability at least $1-\varphi_i$:
{\small    \begin{align*}
    &\mathbb{E}\left[   \left \| \nabla \mathcal{L}_{i,2}(\theta_i^{t})-\widehat{\nabla}\mathcal{L}_{{i,2}}(D_i(\theta_i^{t}),\theta_i^{t}) \right \|^2\right]\leq (\pmb{\omega_D})^2
    := \mathcal{O}\left(\ell^2_{\max}d \left(\pmb{\omega_F}+\frac{f_{\varphi}(\varphi_i;n_i)F^2}{n_i}\right) \right).\\
    &\mathbb{E}\left[\left \| {\nabla}\mathcal{L}_{{i,2} }(Q_i,\theta_i^{t})-\widehat{\nabla}\mathcal{L}_{{i,2} }(Q_i,\theta_i^{t})\right\|^2 \right]\leq (\pmb{\omega_Q})^2
    := \mathcal{O}\left(2\ell^2_{\max} d \left(\pmb{\omega_F} + F^2\right) \right).
\end{align*} }
\end{lemma}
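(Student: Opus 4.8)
The plan is to exploit the product structure of $\nabla\mathcal{L}_{i,2}$. Writing $\nabla\mathcal{L}_{i,2}(\theta) = \big(\tfrac{df_i}{d\theta}\big)^T u(\theta)$ with $u(\theta) := \mathbb{E}_{Z\sim D_i(\theta)}\big[\ell(Z;\theta)\,\tfrac{\partial\log p(Z;f_i(\theta))}{\partial f_i(\theta)}\big]$, the Jacobian factors out of the expectation in Eq.~\eqref{eq:PG} because it is deterministic in $Z$. The estimator has the same shape, $\widehat{\nabla}\mathcal{L}_{i,2}(D_i(\theta^{t}_i),\theta^{t}_i) = \big(\tfrac{\widehat{df_i}}{d\theta}\big)^T \widehat u$, where $\widehat u$ is the empirical loss-weighted score over the clean samples $\mathcal{S}'_i$ and the weight is the vector $w$ in Eq.~\eqref{eq:w}. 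First I would add and subtract $\big(\tfrac{\widehat{df_i}}{d\theta}\big)^T u$ to split the error as $\nabla\mathcal{L}_{i,2}-\widehat{\nabla}\mathcal{L}_{i,2} = \big(\tfrac{df_i}{d\theta}-\tfrac{\widehat{df_i}}{d\theta}\big)^T u + \big(\tfrac{\widehat{df_i}}{d\theta}\big)^T (u-\widehat u)$, then apply $\|a+b\|^2\le 2\|a\|^2+2\|b\|^2$. This isolates the two error sources: the Jacobian estimation error (term A) and the sampling error of the weighted mean (term B).

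For term A, I would use sub-multiplicativity, $\|(\tfrac{df_i}{d\theta}-\tfrac{\widehat{df_i}}{d\theta})^T u\|^2 \le \|\tfrac{df_i}{d\theta}-\tfrac{\widehat{df_i}}{d\theta}\|^2\|u\|^2$. The key estimate is $\|u\|^2 = \mathcal{O}(\ell_{\max}^2 d)$: the loss is bounded by $\ell_{\max}$ (Lemma~\ref{le:upper bounds}), and since $\mathbb{E}_{D_i(\theta)}[\tfrac{\partial\log p}{\partial f}]=0$, $u$ is a loss--score covariance whose norm is controlled by $\ell_{\max}$ times the second moment of the score, which scales with the dimension. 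Taking expectations and invoking $\mathbb{E}[\|\tfrac{\widehat{df_i}}{d\theta}-\tfrac{df_i}{d\theta}\|^2]\le \pmb{\omega_F}$ from Lemma~\ref{le:error df dfdtheta} (valid on the stated probability-$(1-\varphi_i)$ event) yields the $\ell_{\max}^2 d\,\pmb{\omega_F}$ contribution.

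For term B, I would bound $\|\tfrac{\widehat{df_i}}{d\theta}\|^2 \le 2F^2 + 2\pmb{\omega_F}$ via $\|\tfrac{df_i}{d\theta}\|\le F$ (Lemma~\ref{le:upper bounds}) together with the same Jacobian error, and bound $\mathbb{E}[\|u-\widehat u\|^2]$ by the variance of the empirical loss-weighted score. This is the same concentration argument as in Lemma~\ref{le:error L1}, giving rate $\mathcal{O}(\ell_{\max}^2 d\, f_\varphi(\varphi_i;n_i)/n_i)$ on the probability-$(1-\varphi_i)$ event. Multiplying and discarding the subdominant $\pmb{\omega_F}\cdot f_\varphi/n_i$ cross-term gives the $\ell_{\max}^2 d\,F^2 f_\varphi(\varphi_i;n_i)/n_i$ contribution, which completes the first inequality. (Any additional error from evaluating the score at the plug-in $\widehat f$ rather than $f_i(\theta)$ enters term B analogously and is absorbed.)

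For the contaminated bound I would repeat the decomposition with all expectations and empirical averages taken over $Q_i$. Term A is identical and again contributes $\ell_{\max}^2 d\,\pmb{\omega_F}$. The essential difference is term B: because $Q_i$ is fixed but arbitrary and need not lie in the family $p(\cdot;f_i(\theta))$, the score no longer has zero mean under $Q_i$ and the empirical weighted mean cannot be concentrated around its target at rate $1/n_i$. I would therefore bound the sampling term deterministically, $\|u_Q-\widehat u_Q\|^2 \le (\|u_Q\|+\|\widehat u_Q\|)^2 = \mathcal{O}(\ell_{\max}^2 d)$, using only $\ell\le\ell_{\max}$ and the bounded score, which produces the $n_i$-free term $\ell_{\max}^2 d\,F^2$ and hence $(\pmb{\omega_Q})^2 = \mathcal{O}(2\ell_{\max}^2 d(\pmb{\omega_F}+F^2))$. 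The main obstacle is exactly this contaminated term: controlling a product of two correlated random errors (the finite-difference Jacobian and the weighted empirical mean) while recognizing that the arbitrariness of $Q_i$ forecloses variance reduction, so the contamination error is irreducible in $n_i$---precisely the gap between $\pmb{\omega_D}$ and $\pmb{\omega_Q}$ that motivates the robust cleaning step.
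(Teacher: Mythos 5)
Your proposal is correct and, on the clean-data bound, is essentially the paper's own proof written more explicitly: the paper likewise splits the error into a Jacobian-error part, bounded by $\ell^2_{\max}\cdot\mathcal{O}(d)\cdot\pmb{\omega_F}$ via the bounded loss, the $\mathcal{O}(d)$ score moment, and Lemma~\ref{le:error df dfdtheta}, plus a sampling part handled by the same concentration argument (Lemma~12 of \citet{pmlr-v139-izzo21a}); your add--subtract decomposition and high-probability treatment of the correlated factors match this step for step. The only genuine divergence is in the contaminated bound. The paper's proof starts from the observation that, because $Q_i$ is fixed and $\theta$-independent, the population term vanishes, ${\nabla}\mathcal{L}_{i,2}(Q_i,\theta_i^{t})=0$; the left-hand side therefore equals $\mathbb{E}\bigl[\|\widehat{\nabla}\mathcal{L}_{i,2}(Q_i,\theta_i^{t})\|^2\bigr]$, which is bounded by $\mathcal{O}\bigl(\ell^2_{\max}d\bigr)\,\mathbb{E}\bigl[\|\widehat{df_{i}}/d\theta\|^2\bigr]$ and then by the same Jensen split $\mathbb{E}\bigl[\|\widehat{df_i}/d\theta\|^2\bigr]\le 2F^2+2\pmb{\omega_F}$ that you use. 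You instead retain a possibly nonzero population target $u_Q$ and close the gap with a deterministic triangle-inequality bound; this requires nothing beyond the ingredients you already have, yields the same order $\mathcal{O}\bigl(2\ell^2_{\max}d(\pmb{\omega_F}+F^2)\bigr)$, and is marginally more agnostic since it never identifies the target. The trade-off is that the paper's route is shorter, and the vanishing-target fact it isolates is not incidental: it is reused in Lemma~\ref{le:upper bound L2 with h}, where $\pmb{\omega_Q}$ enters the convergence analysis precisely as a bound on the norm of the estimator computed over $Q_i$; your argument still covers that use, because bounding $\|u_Q\|$ and $\|u_Q-\widehat u_Q\|$ separately also bounds $\|\widehat u_Q\|$, but the consistency with that downstream application is worth stating explicitly.
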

Additionally, the notations $\pmb{\omega_D}$ and $\pmb{\omega_Q}$ represent upper bounds. From Lemmas \ref{le:error L1} and
\ref{le:error L2D with h}, we find that since the estimation error of performative gradient is primarily dominated by the estimation error of $ \nabla \mathcal{L}_{i,2}$, we have:
 \begin{align}
    &\mathbb{E}\left[   \left \| \nabla \mathcal{L}_{i,1}(\theta_i^{t})-\widehat{\nabla}\mathcal{L}_{i,1}(D_i(\theta_i^{t}),\theta_i^{t})\right \|^2\right] + \mathbb{E}\left[   \left \| \nabla \mathcal{L}_{i,2}(\theta_i^{t})-\widehat{\nabla}\mathcal{L}_{{i,2}}(D_i(\theta_i^{t}),\theta_i^{t}) \right \|^2\right] \leq (\pmb{\omega_D})^2.
\end{align}\label{eq:wd} 

\subsection{Convergence analysis}\label{subsec:convergence}
Given Lemmas \ref{le:error df dfdtheta}, \ref{le:error L1}, and
\ref{le:error L2D with h}, we are now ready to analyze the convergence. Denote $\overline{\gamma} = \sum_{i \in \mathcal{V}}\alpha_i\gamma_i$, $\sigma^2_{\gamma} = \sum_{i \in \mathcal{V}}\alpha_i(\overline{\gamma}-\gamma_i)^2$, $ \overline{\epsilon} = \sum_{i \in \mathcal{V}}\alpha_i\epsilon_i$, and $\varphi = \sum_{i \in \mathcal{V}} \varphi_i.$ Throughout the training process, let $W_1(D,Q)_{\max}$ represent the maximum  Wasserstein-1 distance $W_1\left(D(\theta_i^t),Q_i\right)$ between actual data distribution and contamination $\forall i\in \mathcal{V}$. Let $\overline{\pmb{\omega_Q}}, \overline{\pmb{\omega_D}}$ be the maximum of $\pmb{\omega_Q}$ and $\pmb{\omega_D}$ respectively in Lemma~\ref{le:error L2D with h} $\forall i\in \mathcal{V}$. Let $g^2_{\min}$ be the minimal value of $\mathbb{E}[\|g^t\|^2]$ with $ g^t = \sum_{i \in \mathcal{V}} \alpha_i \sum_{k=1}^K \widehat{\nabla}\mathcal{L}_{{i,k}}.$ 

\begin{theorem}[Convergence rate]
    \label{th:3}
   Let $\{\overline{\theta}^{t}\}_{t\geq 0}$ be a sequence of global models generated by \textsc{ProFL} (and \textsc{PoFL}).  Under Assumptions~\ref{as:Sensitivity}, \ref{as:smoothness},  \ref{as:Twice continuously differentiable of f}, and \ref{as:PL}, with probability at least $1-T\varphi$, we have: 
 \begin{align*}
     &\mathbb{E}\left[\mathcal{L}\left(\overline{\theta}^{T}\right)-\mathcal{L}\left({\theta}^{\text{PO}}\right)\right]\leq\\&~ \left(1-\rho\eta\right)^T\mathbb{E}\left[\mathcal{L}\left(\overline{\theta}^{0}\right)-\mathcal{L}\left({\theta}^{\text{PO}}\right)\right]\\
    &+ \left( (1-\overline{\epsilon})\overline{\pmb{\omega_D}}^2 + \overline{\epsilon} \left(\overline{\pmb{\omega_{D,Q}}}\right)^2-\frac{g^2_{\min}}{2}\right)\eta\\
    &+\left(L(1+\overline{\gamma})\left((1-\overline{\epsilon})\overline{\pmb{\omega_D}}^2 + \overline{\epsilon}\left(\overline{\pmb{\omega_{D,Q}}}\right)^2+ G^2\right)\right)\eta^2\\
    &+  \frac{1}{2}L^2(R-1)^2G^2\left((1+\overline{\gamma})^2+ \sigma^2_{\gamma}\right)\eta^3.
\end{align*}
where $\overline{\pmb{\omega_{D,Q}}} = \overline{\pmb{\omega_D}} + LW_1(D,Q)_{\max} + \overline{\pmb{\omega_Q}}.$
\end{theorem}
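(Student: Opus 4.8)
The plan is to establish a single global-step descent recursion of the form $\mathbb{E}[\mathcal{L}(\overline{\theta}^{t+1}) - \mathcal{L}(\theta^{\text{PO}})] \le (1-\rho\eta)\,\mathbb{E}[\mathcal{L}(\overline{\theta}^{t}) - \mathcal{L}(\theta^{\text{PO}})] + (\text{error})$ and then unroll it over $T$ iterations. The first ingredient is the effective smoothness of the performative risk: combining the loss $L$-smoothness (Assumption~\ref{as:smoothness}) with the Wasserstein sensitivity of the distribution map (Assumption~\ref{as:Sensitivity}), I would show that $\nabla\mathcal{L}_i$ is Lipschitz with a constant proportional to $L(1+\gamma_i)$, so that $\mathcal{L}$ has effective smoothness $L(1+\overline{\gamma})$; this is the constant that surfaces in the $\eta^2$ term.

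Next I would invoke the descent lemma for the aggregated update. Writing one global step as $\overline{\theta}^{t+1} = \text{Proj}_\Theta(\overline{\theta}^{t} - \eta g^t)$ with $g^t = \sum_{i}\alpha_i\sum_k \widehat{\nabla}\mathcal{L}_{i,k}$, smoothness gives $\mathcal{L}(\overline{\theta}^{t+1}) \le \mathcal{L}(\overline{\theta}^{t}) - \eta\langle \nabla\mathcal{L}(\overline{\theta}^{t}), g^t\rangle + \tfrac{1}{2}L(1+\overline{\gamma})\eta^2\|g^t\|^2$. I would then decompose $g^t = \nabla\mathcal{L}(\overline{\theta}^t) + (\text{estimation error})$ and split the error according to the contamination mixture $P_i = (1-\epsilon_i)D_i + \epsilon_i Q_i$: the clean fraction $(1-\overline{\epsilon})$ contributes the squared error $\overline{\pmb{\omega_D}}^2$ from Lemma~\ref{le:error L2D with h} (dominated by the $\nabla\mathcal{L}_{i,2}$ error, per Lemmas~\ref{le:error L1} and~\ref{le:error L2D with h}), while the contaminated fraction $\overline{\epsilon}$ contributes $(\overline{\pmb{\omega_{D,Q}}})^2$, where the bias from evaluating the gradient on $Q_i$ rather than $D_i(\theta_i^t)$ is controlled by $L$-smoothness and the Wasserstein distance to give the $L\,W_1(D,Q)_{\max}$ term, and $\overline{\pmb{\omega_Q}}$ captures the residual estimation error on $Q_i$. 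Expanding the inner product via $\langle a,b\rangle = \tfrac12(\|a\|^2+\|b\|^2-\|a-b\|^2)$ produces the $-\tfrac{g^2_{\min}}{2}$ contribution and a $\|\nabla\mathcal{L}(\overline{\theta}^t)\|^2$ term.

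The FL-specific $\eta^3$ term comes from client drift: between two aggregations a client takes up to $R-1$ local steps, so its local model deviates from $\overline{\theta}^{t}$ by at most $\mathcal{O}((R-1)\eta G)$ using the bounded gradient $G$ (Lemma~\ref{le:upper bounds}); crucially, because each local step also shifts the induced distribution $D_i(\theta_i)$, this drift is amplified by the sensitivity, yielding the $((1+\overline{\gamma})^2 + \sigma^2_{\gamma})$ factor, where $\sigma^2_{\gamma}$ measures the heterogeneity of client sensitivities. Propagating this drift through the smoothness bound gives the $\tfrac12 L^2(R-1)^2 G^2((1+\overline{\gamma})^2+\sigma^2_{\gamma})\eta^3$ term. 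I would then apply the PL condition (Assumption~\ref{as:PL}), lifted from the per-client form to the global objective, to replace $\|\nabla\mathcal{L}(\overline{\theta}^t)\|^2$ by $2\rho(\mathcal{L}(\overline{\theta}^t)-\mathcal{L}(\theta^{\text{PO}}))$, turning the descent into the contraction factor $(1-\rho\eta)$. Unrolling over $t=0,\dots,T-1$ yields the $(1-\rho\eta)^T$ prefactor, and a union bound over the $T$ iterations (each holding with probability $1-\varphi$, $\varphi=\sum_i\varphi_i$) gives the stated $1-T\varphi$ confidence.

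The hard part will be the interaction between the performative distribution shift and the local client drift: unlike standard FedAvg under PL, here the distribution $D_i(\theta_i^t)$ itself moves as the local model drifts, so the drift and the gradient-error terms must be controlled \emph{simultaneously} while tracking the sensitivity-weighted factors $(1+\overline{\gamma})$ and $\sigma^2_{\gamma}$. A secondary subtlety is lifting the per-client PL inequality to a PL-type bound for the aggregated objective $\mathcal{L}$, and verifying that the projection $\text{Proj}_\Theta$ is nonexpansive so that it does not increase the suboptimality gap across the descent step.
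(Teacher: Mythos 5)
Your proposal is correct and follows essentially the same route as the paper's proof: the effective smoothness constant $L(1+\overline{\gamma})$ from combining Assumptions~\ref{as:Sensitivity} and~\ref{as:smoothness}, the descent lemma on the aggregated step, the polarization identity $\langle a,b\rangle=\tfrac12\left(\|a\|^2+\|b\|^2-\|a-b\|^2\right)$ producing the $-\tfrac{g^2_{\min}}{2}$ term, the contamination split into $(1-\overline{\epsilon})\overline{\pmb{\omega_D}}^2+\overline{\epsilon}\left(\overline{\pmb{\omega_{D,Q}}}\right)^2$ with the $L\,W_1(D,Q)_{\max}$ bias (the paper's Lemma~\ref{le:upper bound L2 with h}), the client-drift bound $(R-1)^2G^2\eta^2\left((1+\overline{\gamma})^2+\sigma^2_{\gamma}\right)$ (Lemma~\ref{le:upper bound variance}), the PL contraction, and a union bound over $T$ iterations. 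The two subtleties you flag are real but are in fact glossed over by the paper itself, which applies the per-client PL inequality directly to the global $\mathcal{L}$ and drops $\text{Proj}_\Theta$ from the update when writing $\overline{\theta}^{t+1}-\overline{\theta}^{t}=-g^t\eta$.
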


Theorem \ref{th:3}, together with Lemmas \ref{le:error df dfdtheta}, \ref{le:error L1} and
\ref{le:error L2D with h} in Section~\ref{subsec:lemmas}, highlight the impact of sampling error, data contamination, client heterogeneity, and the non-linearity of $f_i(\theta)$ on convergence, as discussed below.

When there is no data contamination and $\epsilon_i=0$, 
 $\pmb{\omega_F}$ is reduced to $\left(\frac{M^2\eta^2G^4H^6}{2} + \frac{2f_{\varphi}(\varphi_i;n_i)Hd}{n_i \eta^2}\right)\lambda_{i,\min}^{-2}$, where the first term is due to the non-linearity of $f_i(\theta)$ and becomes zero if $f_i(\theta)$ is linear with $M=0$; the second term, related to sampling error, approaches zero as the sample size $n_i \to \infty$. With sufficient samples, the error term $\frac{f_{\varphi}(\varphi_i;n_i)F^2}{n_i}$ in $\pmb{\omega_D}$ also approaches zero. 
 
 When there is contamination or $f_i(\theta)$ is non-linear, the error always exists even with a sufficiently large sample size. The term $\overline{\epsilon}(\overline{\pmb{\omega_D}} + LW_1(D,Q)_{\max} + \overline{\pmb{\omega_Q}})^2$  reflects the impact of contaminated data, increasing as the fraction of contamination $\epsilon_i$ and/or the distance between actual data distribution $D_i(\theta)$ and contamination $Q_i$ increase. 
 
 The term $L^2(R-1)^2G^2\eta\left((1+\overline{\gamma})^2 + \sigma^2_{\gamma}\right)$ captures client heterogeneity. Recall that $R$ is the number of local updates. In the special case where $R=1$ , meaning clients update and aggregate models at each iteration, this term becomes zero.

\begin{corollary}\label{cor:convergence without contamination}
If $\overline{\epsilon}= 0$ (without contamination), the bound in Theorem \ref{th:3} is reduced to the following:
\begin{align*}
     &\mathbb{E}\left[\mathcal{L}(\overline{\theta}^{T})-\mathcal{L}({\theta}^{\text{PO}})\right]\leq (1-\rho\eta)^T\mathbb{E}\left[\mathcal{L}(\overline{\theta}^{0})-\mathcal{L}({\theta}^{\text{PO}})\right]\\
     &+ \left( \overline{\pmb{\omega_D}}^2-\frac{g^2_{\min}}{2}\right)\eta+ L(1+\overline{\gamma})\left(\overline{\pmb{\omega_D}}^2+ G^2\right)\eta^2\\
     &+\left( \frac{1}{2}L^2(R-1)^2G^2\left((1+\overline{\gamma})^2+ \sigma^2_{\gamma}\right) \right)\eta^3.
\end{align*}
\end{corollary}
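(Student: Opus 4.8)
The plan is to treat Corollary~\ref{cor:convergence without contamination} as a direct specialization of Theorem~\ref{th:3} obtained by setting the aggregate contamination level $\overline{\epsilon}=0$; there is no new analytic machinery to build, so the argument reduces to a careful substitution into the bound already established. First I would record that $\overline{\epsilon}=\sum_{i\in\mathcal{V}}\alpha_i\epsilon_i=0$, together with $\alpha_i\ge 0$ and $\epsilon_i\in[0,1]$, forces $\epsilon_i=0$ for every client carrying positive weight $\alpha_i>0$ (clients with $\alpha_i=0$ never enter the bounds). Hence, in the clean regime each per-client quantity $\pmb{\omega_F}$ of Lemma~\ref{le:error df dfdtheta} loses its $2\epsilon_i^2F^2$ term while its $(1-\epsilon_i)^2$ prefactor collapses to $1$, which is exactly the reduction noted in the discussion following Theorem~\ref{th:3}; consequently the bound $\overline{\pmb{\omega_D}}$ inherited from Lemma~\ref{le:error L2D with h} is the clean-data version appearing in the corollary.

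Next I would substitute $\overline{\epsilon}=0$ term by term into the right-hand side of Theorem~\ref{th:3}. The leading geometric factor $(1-\rho\eta)^T$ and the initial-gap term are untouched; in the $\eta$- and $\eta^2$-coefficients the convex combination $(1-\overline{\epsilon})\overline{\pmb{\omega_D}}^2+\overline{\epsilon}(\overline{\pmb{\omega_{D,Q}}})^2$ collapses to $\overline{\pmb{\omega_D}}^2$, since the factor $(1-\overline{\epsilon})$ becomes $1$ and the $\overline{\epsilon}(\overline{\pmb{\omega_{D,Q}}})^2$ contribution vanishes; and the $\eta^3$ heterogeneity term contains no $\overline{\epsilon}$, so it carries over verbatim. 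Collecting these yields precisely the stated inequality, and the high-probability qualifier $1-T\varphi$ is inherited unchanged because it does not involve $\epsilon_i$.

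There is essentially no obstacle here: the genuine analytic content---the PL-based descent inequality, the gradient-estimation errors of Lemmas~\ref{le:error df dfdtheta}--\ref{le:error L2D with h}, and the heterogeneity accounting---all resides in Theorem~\ref{th:3}, which is assumed. If forced to name the only delicate point, it is bookkeeping consistency: I would confirm that the symbol $\overline{\pmb{\omega_D}}$ in the corollary is understood as the $\epsilon_i=0$ evaluation of the same bound, so that the statement is an equality of the retained terms rather than merely a weaker upper bound that happens to coincide.
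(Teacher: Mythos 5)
Your proposal is correct and matches the paper's (implicit) argument: the corollary is obtained exactly by substituting $\overline{\epsilon}=0$ into the bound of Theorem~\ref{th:3}, which collapses the convex combination $(1-\overline{\epsilon})\overline{\pmb{\omega_D}}^2+\overline{\epsilon}(\overline{\pmb{\omega_{D,Q}}})^2$ to $\overline{\pmb{\omega_D}}^2$ in the $\eta$ and $\eta^2$ terms and leaves the remaining terms untouched. Your additional bookkeeping observations (that $\overline{\epsilon}=0$ forces $\epsilon_i=0$ for all clients with $\alpha_i>0$, and that $\overline{\pmb{\omega_D}}$ is then the clean-data bound) are consistent with the paper's discussion following Theorem~\ref{th:3} and Lemma~\ref{le:error df dfdtheta}.
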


 Because $g^2_{\min} \geq 0,$ we consider two cases:
 
 1) If $g^2_{\min} > 0$, we can always adjust learning rate $\eta$, estimation window $H$, number of local updates $R$, and sample size $n_i$  such that $\overline{\pmb{\omega_D}}^2<g^2_{\min}/2$ and the following holds
\begin{equation*}{\mathbb{E}\left[\mathcal{L}(\overline{\theta}^{T})-\mathcal{L}(\overline{\theta}^{\text{PO}})\right]\leq (1-\rho\eta)^T\mathbb{E}\left[\mathcal{L}(\overline{\theta}^{0})-\mathcal{L}({\theta}^{\text{PO}})\right],}
\end{equation*}
i.e., $\overline{\theta}^{t}$ converges to ${\theta}^{\text{PO}}$ at linear convergence rate.

2) If $g^2_{\min} = 0$, this means there is at least one iteration $t$ that has $\|g^t\| = 0$ and  $\mathbb{E}[\mathcal{L}(\overline{\theta}^{t+1})-\mathcal{L}({\theta}^{\text{PO}})] = \mathbb{E}[\mathcal{L}(\overline{\theta}^{t})-\mathcal{L}({\theta}^{\text{PO}})]$. Except for the iterations where $\|g^t\| = 0$, all the other iterations have positive $\|g^t\| > 0$. We can choose $\eta, H, n_i$ such that $\overline{\pmb{\omega_D}}^2<g^2_{\min}/2$ and the following holds
\begin{equation*}{\mathbb{E}\left[\mathcal{L}(\overline{\theta}^{T})-\mathcal{L}({\theta}^{\text{PO}})\right]\leq (1-\rho\eta)^{T-b}\mathbb{E}\left[\mathcal{L}(\overline{\theta}^{0})-\mathcal{L}({\theta}^{\text{PO}})\right],}
\end{equation*}
where $b$ is the number of iterations with $\|g^t\| = 0$.
All above theoretical results apply to both \textsc{ProFL} and \textsc{PoFL}. The robust strategies in \textsc{ProFL} reduce the upper bound in Theorem~\ref{th:3}.

\subsection{Discussion}\label{subsec:discussion}
\textbf{Handling complex distribution shifts with non-linear  $f_i(\theta)$.} Based on Lemma~\ref{le:error df dfdtheta}, the non-linear function $f_i(\theta)$ primarily impacts the estimation error through the term $\frac{M^2\eta^2G^4H^6}{2}$, which decreases as $\eta$ and$H$ are reduced. However, to ensure the matrix $\Delta \theta_i$ in Eq.~\eqref{eq:delta} is nonsingular, $H - 1$ must be larger than the dimensionality of $\theta$. In cases where $H$ cannot be decreased, reducing $\eta$ is essential to attain a smaller $\pmb{\omega_F}$. Moreover, the upper bound in Theorem~\ref{th:3}  takes the form $A\eta + B\eta^2 + C\eta^3$, where $B, C, \eta > 0$. Thus, a smaller $\eta$ results in a smaller upper bound. However, reducing $\eta$ will also increase the error term $\frac{2f_{\varphi}(\varphi_i;n_i)Hd}{n_i \eta^2}$ in Eq.~\eqref{eq:error_df}. Nevertheless, if the sample size $n_i$ is sufficiently large and $\frac{M^2\eta^2G^4H^6}{2} \gg \frac{2f_{\varphi}(\varphi_i;n_i)Hd}{n_i \eta^2}$, reducing the learning rate is still effective, as we verify in Fig.~\ref{fig:different_lr}. In cases where local samples are limited, estimating $\widehat{\frac{df_i}{d\theta}}$ at the server side could be considered, e.g., by clustering clients with similar performative distribution map $D_i$ and perform global aggregation to mitigate estimation error.

\textbf{Balancing Sampling Error and Computational Costs.}  
We know that $f_{\varphi}(\varphi_i;n_i)$ depends on the distribution. Once this information, along with the error bound $\Phi_i$, is known, we can calculate the lower bound of $n_i$. For example, when $d = 1$, the estimator $\widehat{f}_i(\theta)$ in distribution dynamics estimates the mean of $D_i(\theta)$. In this case, $f_{\varphi}(\varphi_i;n_i) = \sigma^2_i \log(2/\varphi_i)$, where $\sigma^2_i$ represents the data variance of client $i$.
With the error bound $\Phi_i$, \textsc{ProFL} can compute $n_i$ as follows:
$$n_i \geq \frac{2 \left( 2 \ell^2_{\max} H  \|\Delta \theta\|^{-2} + F^2 \right) \sigma^2_i \log(2/\varphi_i)}{2 \Phi_i - M^2 \eta^2 G^4 H^6 \|\Delta \theta\|^2 \ell^2_{\max}}.$$
Here, $H$, $\varphi_i$, $\|\Delta \theta\|$, and $\eta$ are known, while $M$, $G$, and $F$ can be estimated from previous updates. For $\ell_{\max}$, we use the loss from the last iteration, as the loss typically decreases during training. The variance $\sigma^2_i$ is estimated from $S_i$ in the last iteration.
 
\textbf{Handling contaminated data.} Theorem~\ref{th:3} shows that contaminated data affects convergence mainly through the term $\overline{\epsilon}\left(\overline{\pmb{\omega_D}} + LW_1(D,Q)_{\max} + \overline{\pmb{\omega_Q}}\right)^2 $, which is difficult to mitigate by tuning parameters. A more effective approach is to reduce $\overline{\epsilon}$ by removing contamination. For instance, if all contaminated data is removed ($\overline{\epsilon} = 0$), \textsc{ProFL} converges to the global PO point, as stated in Corollary~\ref{cor:convergence without contamination}.

\section{Experiments}\label{sec:exp}
\subsection{Experimental Setup.} 
\noindent \textbf{Case Studies.} We evaluate our algorithm on five case studies covering both types of dynamics introduced in Section~\ref{sec:theorem}: 
(i) \textit{Contribution dynamics}, where group contributions change dynamically; and 
(ii) \textit{Distributional dynamics}, where group distributions change dynamically. 
We adopt the same mixture-model formulation $D_i(\theta)$ as in the theoretical analysis.

Specifically, we study five scenarios covering both dynamics:
(1) {Dynamic Pricing with Demand Shift}—prices influence mean demands across retailers; 
(2) {Dynamic Pricing with Group Contributions}—group shares of demand adjust based on remaining budgets; 
(3) {Binary Strategic Classification}—clients strategically manipulate features to maximize favorable outcomes; 
(4) {House Pricing Regression}—listing and selling prices interact with the model’s predictions; and 
(5) {Regression with Dynamic Contributions}—group fractions of retailers adapt to model prediction errors. Scenarios (2) and (5) correspond to \textit{Contribution dynamics}, while the remaining scenarios correspond to \textit{Distributional dynamics}.

\noindent \textbf{Client Configuration and Heterogeneity.} 
For each experimental setting, every method is independently run 10 times with random seeds 0–9. We report the average and variance across these runs. Table~\ref{tab:2} and Figures~\ref{fig:limited sample size},~\ref{fig:Different Sample Sizes and Learning Rates} (case study: {pricing with dynamic demands}) use 10 heterogeneous clients, except the $\alpha = 0$ case in Table~\ref{tab:2}, which uses homogeneous clients, and Figure~\ref{fig:on_server}, which uses 100 clients. Figures~\ref{fig:Binary_Classification_main} ({binary strategic classification}),~\ref{fig:Enrollment Fractions_and_Heterogeneity} ({house pricing regression}),~\ref{fig:Participation_1} ({pricing with dynamic contribution}), and~\ref{fig:Participation_4} ({regression with dynamic contribution}) all use 10 heterogeneous clients.

\noindent \textbf{Baselines.} To date, only \cite{jin2023performative} has addressed \textit{model-dependent} distribution shifts in FL. Thus, we compare our algorithm with this \textit{Performative Federated Learning} (\textsc{PFL}). Unlike our algorithm, theirs converges to a performative stable solution ${\theta}^{\text{PS}}$ under stricter assumptions. We also compare with the \textit{Performative Gradient} (PG) algorithm from \cite{pmlr-v139-izzo21a} in the centralized setting, as well as a straightforward extension of PG to the FL setting without our robust strategies, which we refer to as \textsc{PoFL} (Performative Optimal Federated Learning). 

\noindent \textbf{Dataset.} Binary classification uses both synthetic and real data (Adult\cite{misc_adult_2} and Give Me Some Credit\cite{GiveMeSomeCredit} datasets), while the other cases use synthetic data. All dynamics are synthetic. 

The supplementary material provides complete descriptions of the case study settings, data-generation models, training configurations, and all parameter values used in the experiments.

\subsection{Numerical Results}

\noindent \textbf{Comparison to Centralized Setting.} Table~\ref{tab:2} shows that adapting PG to the FL framework improves performance. $\gamma_i \in [\overline{\gamma} -\alpha,\overline{\gamma} +\alpha]$. Larger $\alpha$ indicates more heterogeneity. As client heterogeneity increases, the performance of both algorithms degrades. Under homogeneous distribution shifts ($\alpha = 0$), both algorithms perform similarly. \textsc{ProFL} outperforms \textsc{PG} at $\alpha = \{0.25, 0.5\}$, since the centralized method estimates only one $\frac{df}{d\theta}$ on the server and fails under variance shifts.

\begin{table}[ht]
\begin{center}
\resizebox{0.48\textwidth}{!}{
\begin{tabular}{llll}
    \hline
    $\alpha$ & 0 & 0.25 & 0.5 \\
    \hline 
    \textsc{PG} & \textbf{5.56 $\pm$ 0.00} & 5.67 $\pm$ 0.02 & 6.32 $\pm$ 0.06 \\
    \hline
    \textsc{ProFL} & \textbf{5.56 $\pm$ 0.00} & \textbf{5.59 $\pm$ 0.00} & \textbf{5.67 $\pm$ 0.00} \\
    \hline
\end{tabular}
}
\vspace{1mm}
\caption{Performative risk of \textsc{ProFL} (federated) and \textsc{PG} (centralized) under varying client heterogeneity ($\alpha$). \textsc{ProFL} outperforms \textsc{PG} as heterogeneity increases.}
\label{tab:2}
\end{center}
\end{table}

\begin{figure*}[ht]
 \vspace{-0.3cm}
    \centering
    \begin{subfigure}{0.31\textwidth}
        \centering
        \includegraphics[width=\textwidth]{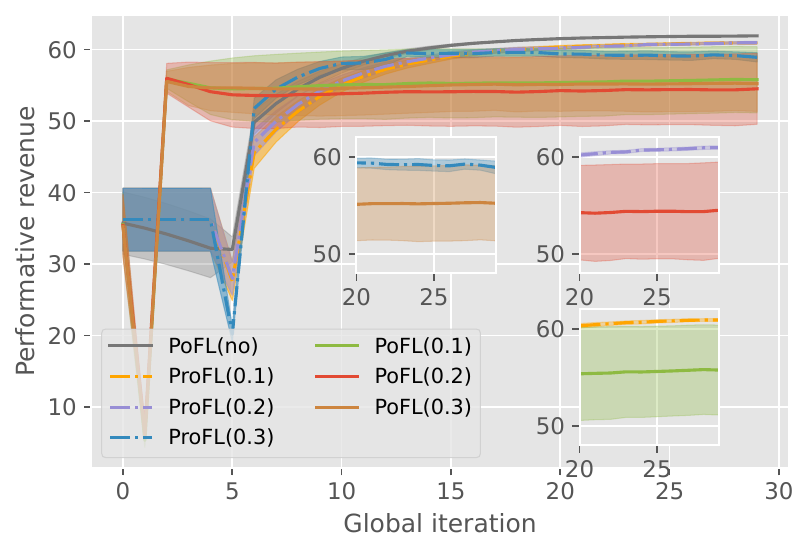}
        \caption{Depend on Learning Rates}
        \label{fig:Pricing_fraction}  
    \end{subfigure}\hspace{1em}%
    \begin{subfigure}{0.31\textwidth}
        \centering
        \includegraphics[width=\textwidth]{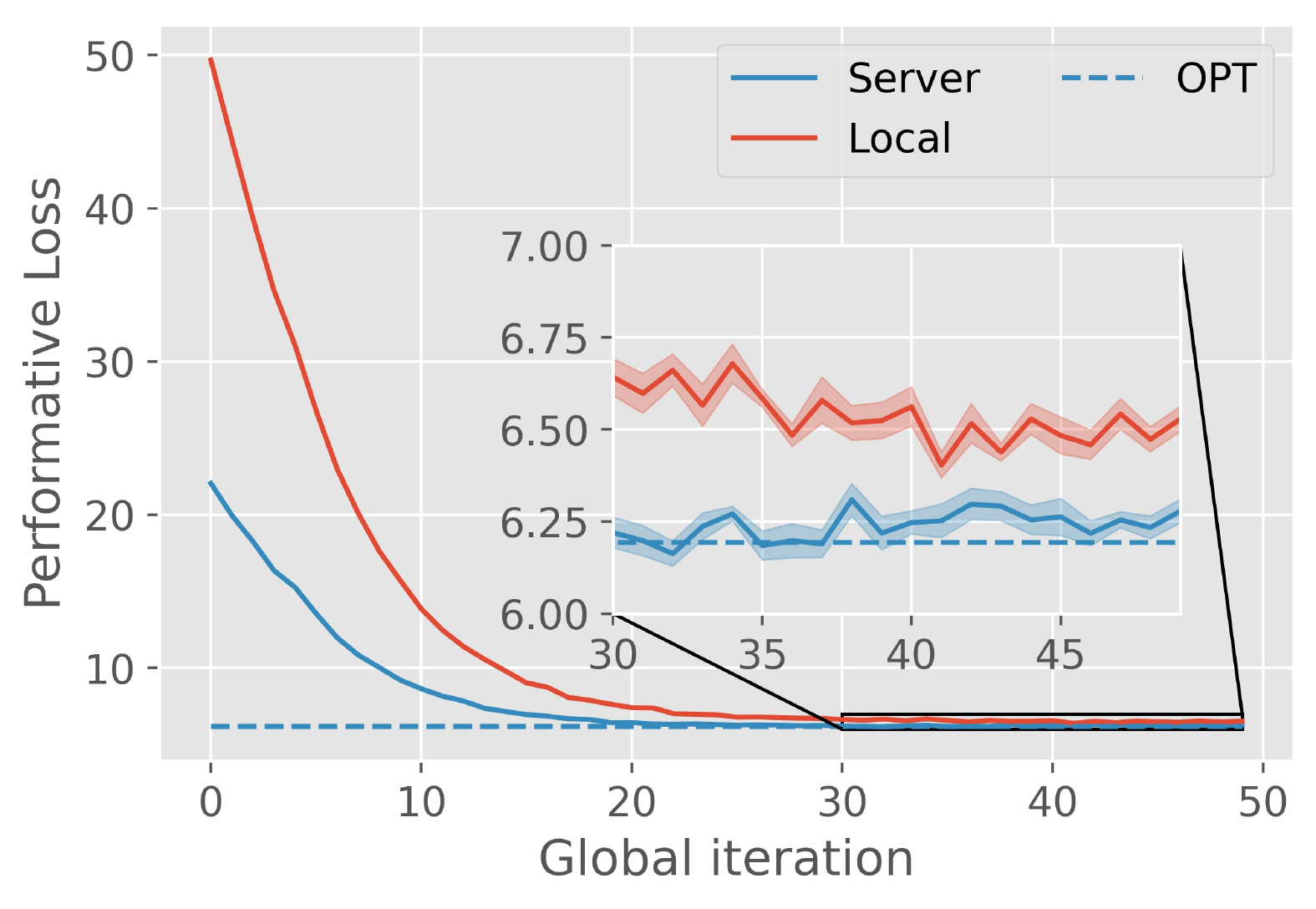}
        \caption{Estimation on Server}
        \label{fig:on_server} 
    \end{subfigure}\hspace{1em}%
    \begin{subfigure}{0.31\textwidth}
        \centering
        \includegraphics[width=\textwidth]{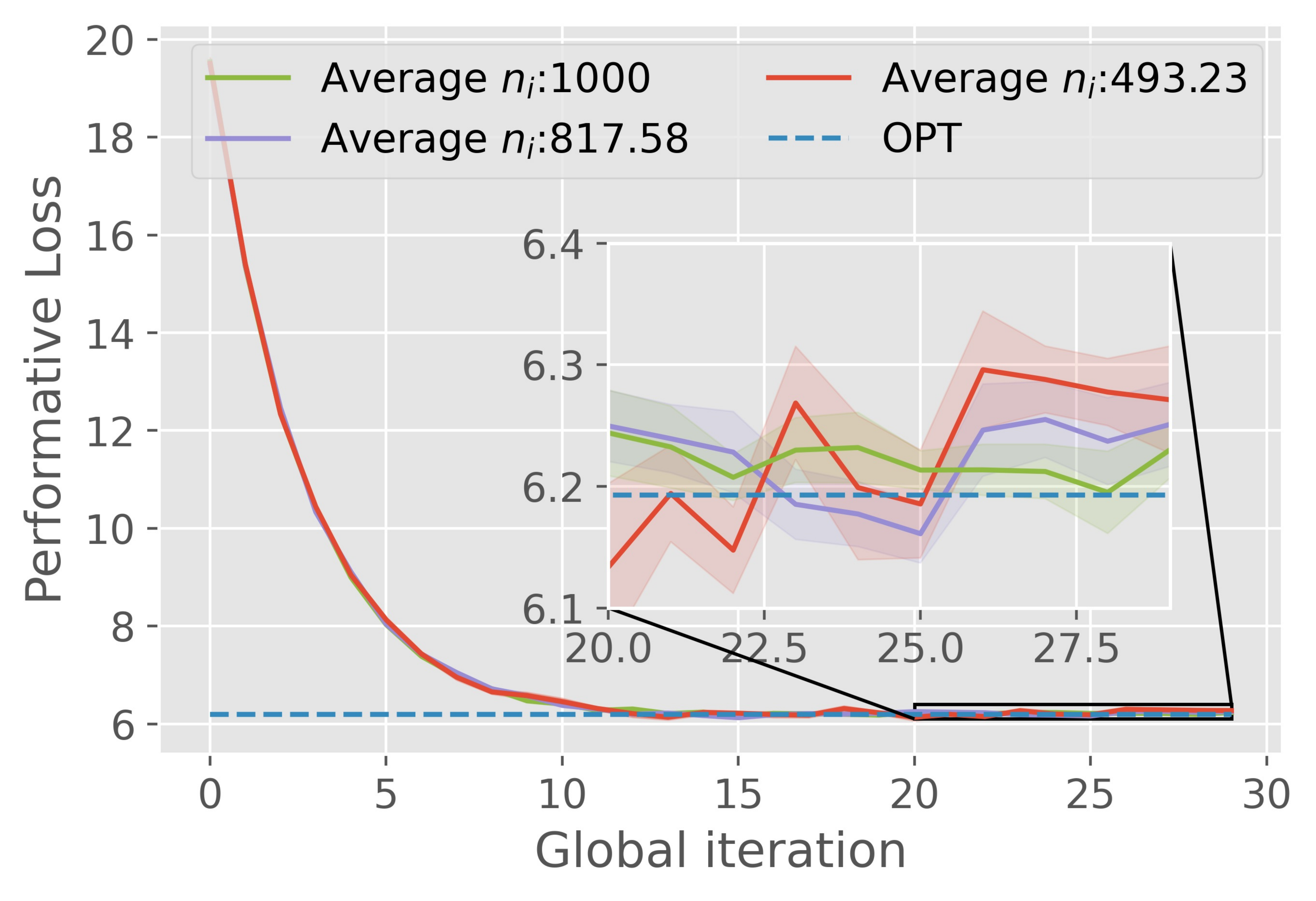}
        \caption{Adaptive Learning Rate}
        \label{fig:dynamic_sample_size} 
    \end{subfigure}
    \vspace{-0.1cm}
    \caption{(a) Robust gradient mitigates the impact of data contamination. (b) Estimation of $\frac{df_i}{d\theta}$ on the server outperforms local estimation in the large-client, limited-sample setting ($|\mathcal{V}| = 100$, $n_i = 60$). (c) Adaptive sampling (tolerances 0.05/0.1) reduces sample usage while maintaining accuracy.}
    \vspace{-0.1cm}
    \label{fig:limited sample size}
\end{figure*}

\begin{figure*}[ht]
 \vspace{-0.3cm}
    \centering
    \begin{subfigure}{0.31\textwidth}
        \centering
        \includegraphics[width=\textwidth]{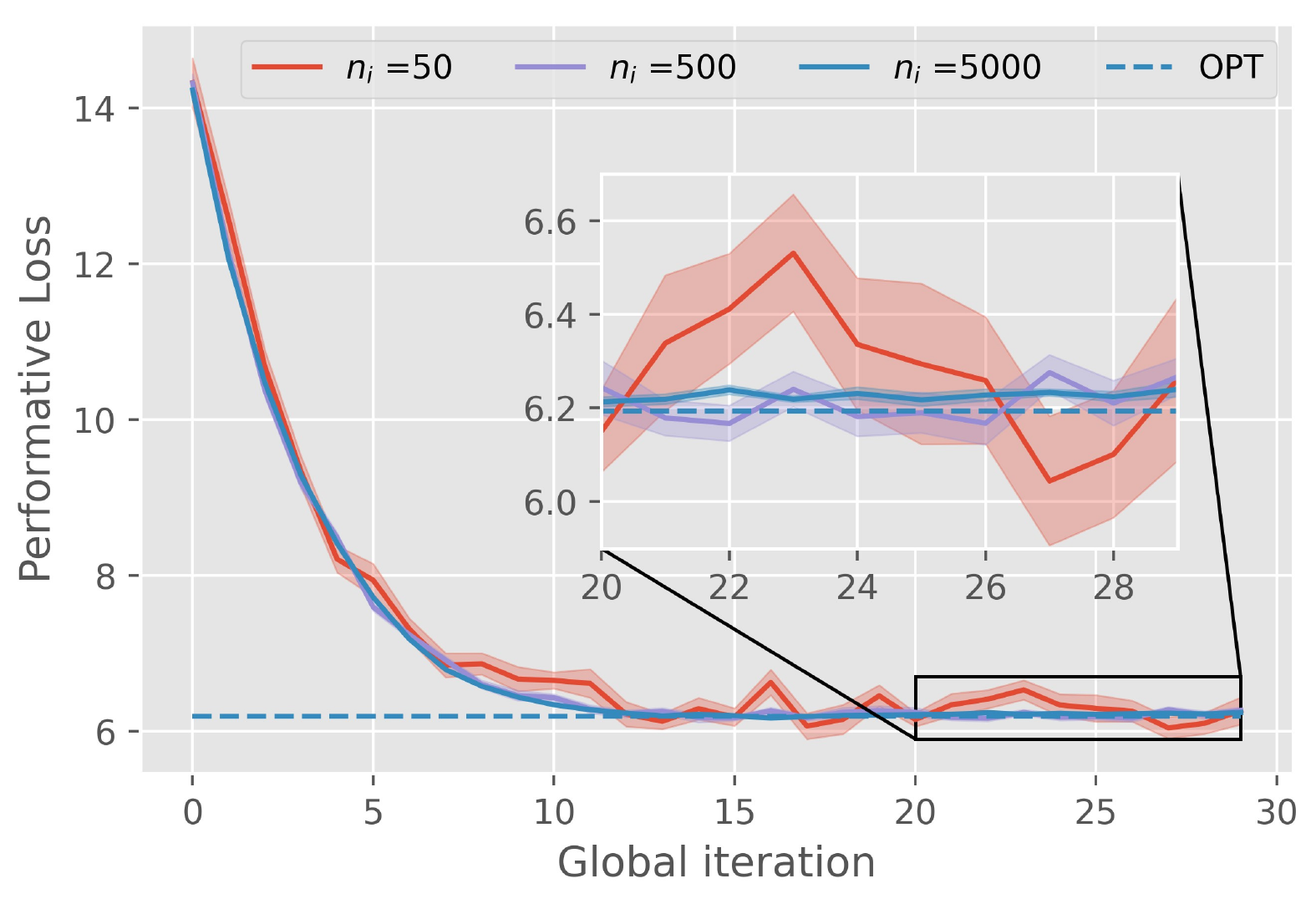}
        \caption{Different Sample Sizes}
        \label{fig:different_n}  
    \end{subfigure}\hspace{1em}%
    \begin{subfigure}{0.31\textwidth}
        \centering
        \includegraphics[width=\textwidth]{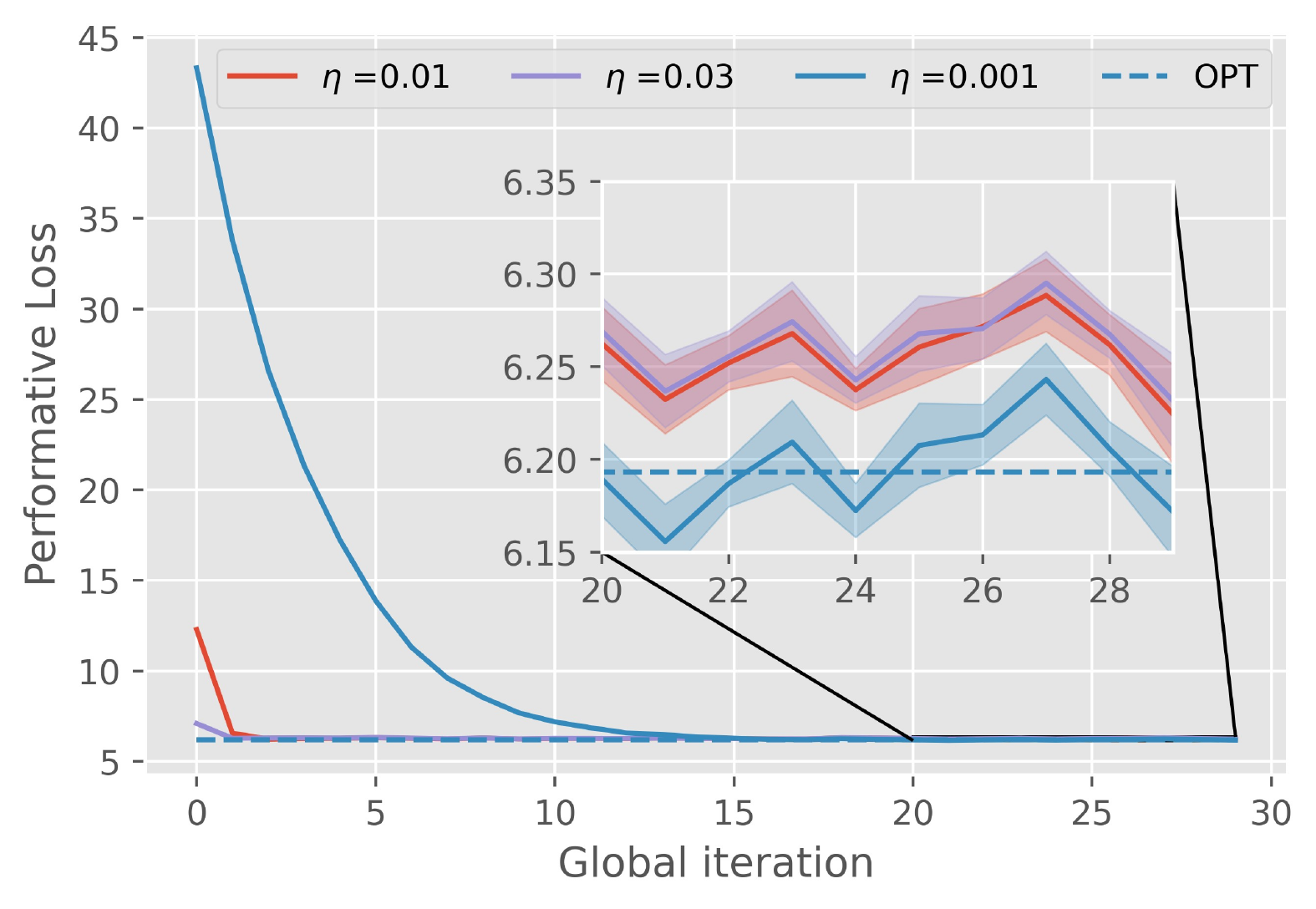}
        \caption{Different Learning Rates}
        \label{fig:different_lr} 
    \end{subfigure}\hspace{1em}%
    \begin{subfigure}{0.31\textwidth}
        \centering
        \includegraphics[width=\textwidth]{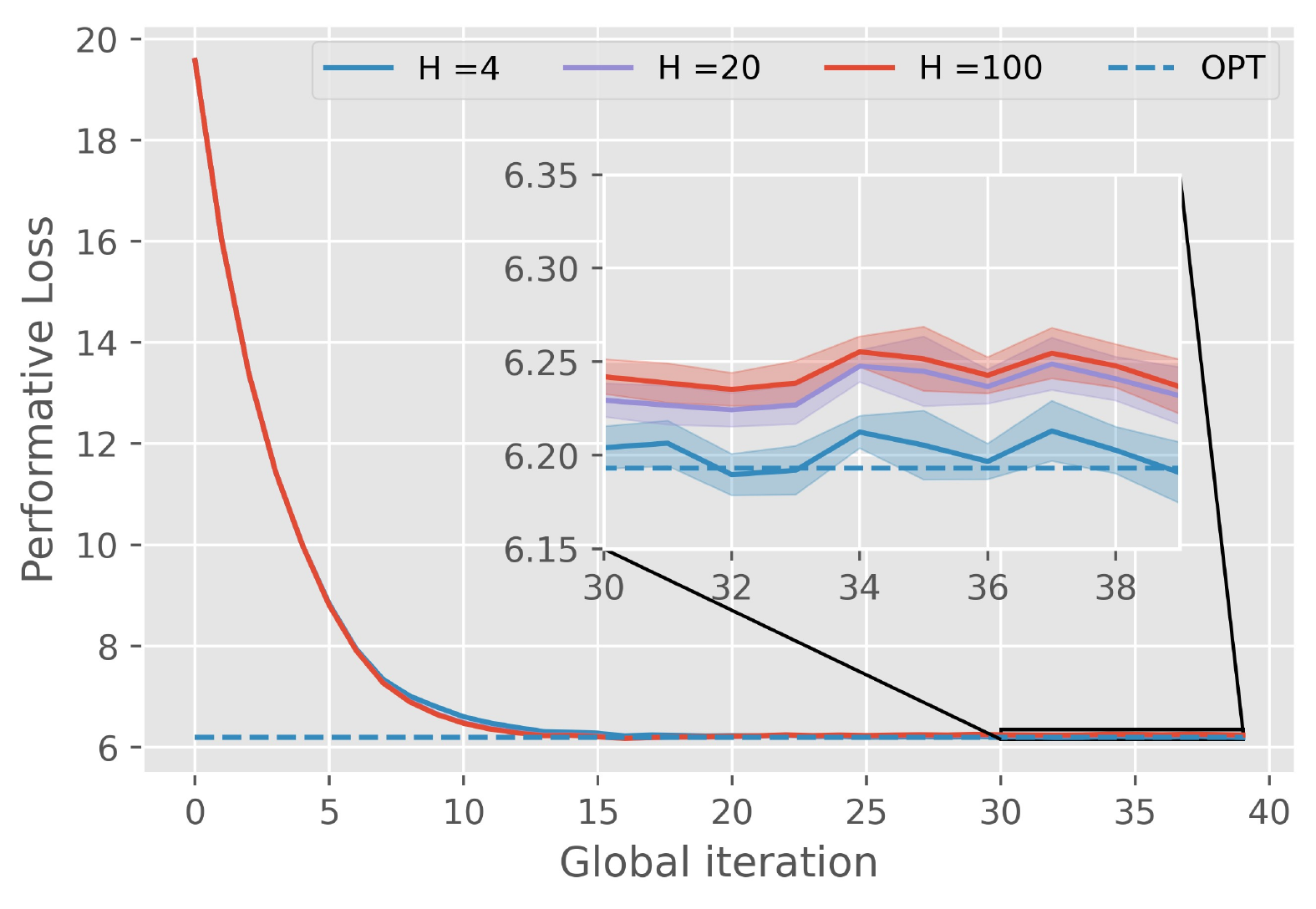}
        \caption{Different H}
        \label{fig:different_h} 
    \end{subfigure}
    \vspace{-0.1cm}
    \caption{(a) Increasing sample size $n_i$ improves stability and convergence to the PO point. (b) A smaller learning rate $\eta$ slows convergence but yields solutions closer to the PO point. (c) A smaller estimation window $H$ produces more local estimates of $\frac{df_i}{d\theta}$, reducing final error at the cost of slower convergence.}
    \vspace{-0.2cm}
    \label{fig:Different Sample Sizes and Learning Rates}
\end{figure*}

\noindent \textbf{Effectiveness of our proposed methods.} 
Fig.~\ref{fig:Pricing_fraction} shows that contamination can significantly degrade performance, but the proposed \textsc{robust gradient} can effectively reduce the impact of contaminated data when computing the gradients i.e., the performance with the robust gradient in our method is almost the same as the case when outliers do not exist at all. Each client encounters an $\epsilon_i$ fraction of contaminated data from a fixed but unknown exogenous distribution $Q_i$. We select \( B = 50 \), \( J_1 = 0.01 \), and \( J_2 = 0.05 \) for the robust gradient method. Although we present the results when $Q_i\sim \mathcal{N}(\mu_o,\sigma^2_o)$, we observed similar results when $Q_i$ follows other distributions and are different among clients. Fig.~\ref{fig:on_server} shows that with a limited sample size and many clients  ($n_i = 60, |\mathcal{V}| = 100$), estimating $\widehat{\frac{df_i}{d\theta}}$ on the server side results in a lower loss value compared to local estimation and approaches the optimal value. Fig.~\ref{fig:dynamic_sample_size} demonstrates the effectiveness of adaptive sample sizing. We tested two tolerances (0.05 for purple, 0.1 for red), with each client processing up to 1000 samples per iteration. The adaptive approach significantly reduces the sample size while achieving similar results within the error tolerance, leading to substantial computational savings. Both experiments have non-linear $f_i(\theta)$.

\noindent \textbf{Impact of hyperparameters.}
We examine the effects of sample size $n_i$, learning rate $\eta$, and estimation window $H$. Fig.~\ref{fig:different_n} shows that increasing $n_i$ improves stability and brings the algorithm closer to the optimal point. Fig.~\ref{fig:different_lr} indicates that a smaller $\eta$ slows convergence but moves $\theta$ nearer to the optimum. Fig.~\ref{fig:different_h} demonstrates that a smaller $H$ results in estimates estimates of $\frac{df_i}{d\theta}$ based on closer but smaller groups of $\widehat{f_i}$ and $\theta_i$, resulting in convergence closer to the performative optimum, albeit at a slightly slower speed.

\begin{figure}[ht]
\vspace{-0.2cm}
\centering
\begin{subfigure}{0.23\textwidth}
    \centering
    \includegraphics[trim=0.21cm 0.35cm 0.25cm 0.2cm,clip,width=\textwidth]{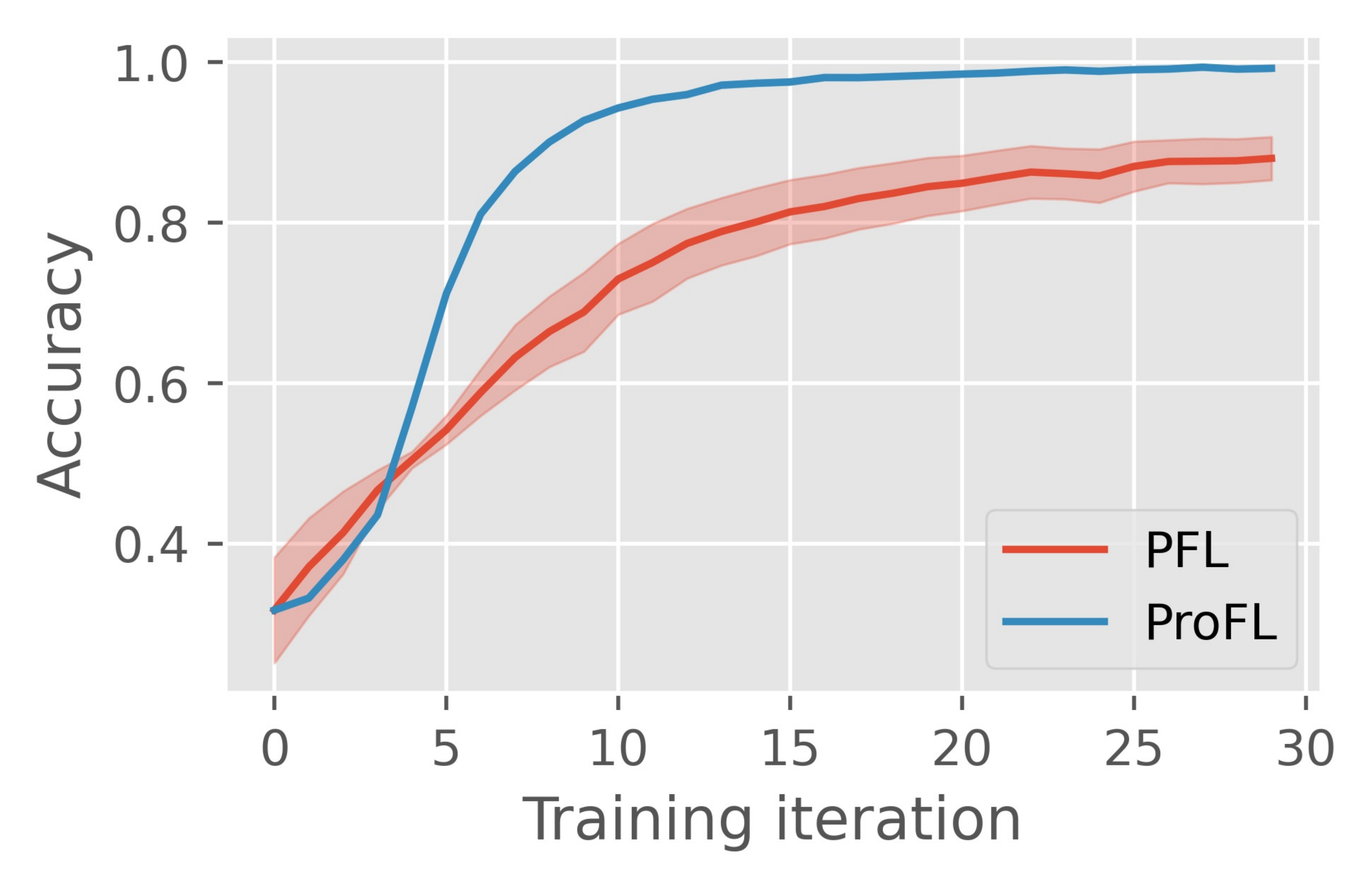}
    \caption{Same Distributions}
    \label{fig:Binary Classification}
\end{subfigure}\hspace{0.5em}%
\begin{subfigure}{0.23\textwidth}
    \centering
    \includegraphics[trim=0.21cm 0.35cm 0.25cm 0.2cm,clip,width=\textwidth]{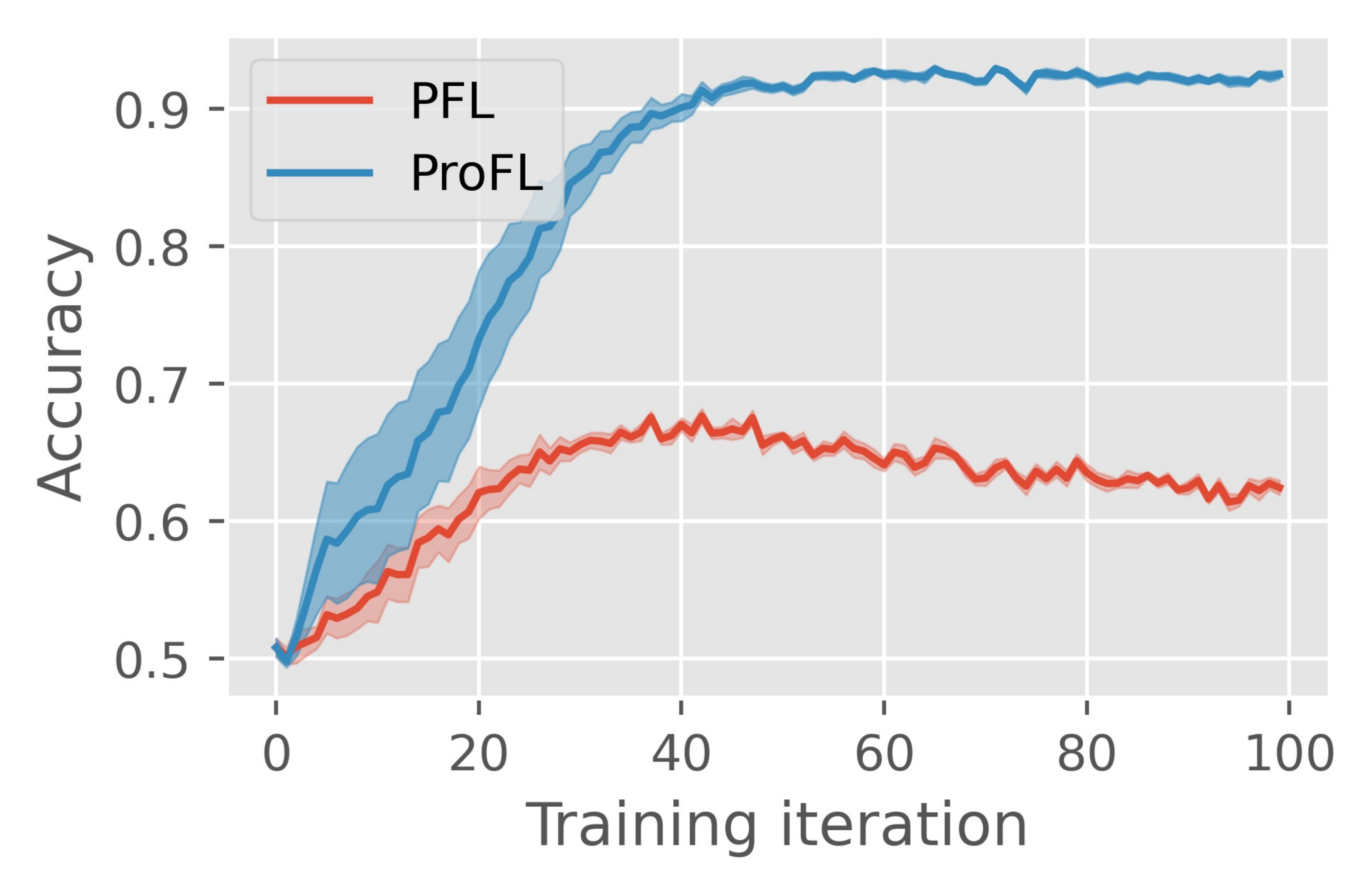}
    \caption{Different Distributions}
    \label{fig:Binary Classification_d3}
\end{subfigure}\hspace{0.5em}%
\begin{subfigure}{0.23\textwidth}
    \centering
    \includegraphics[width=\textwidth]{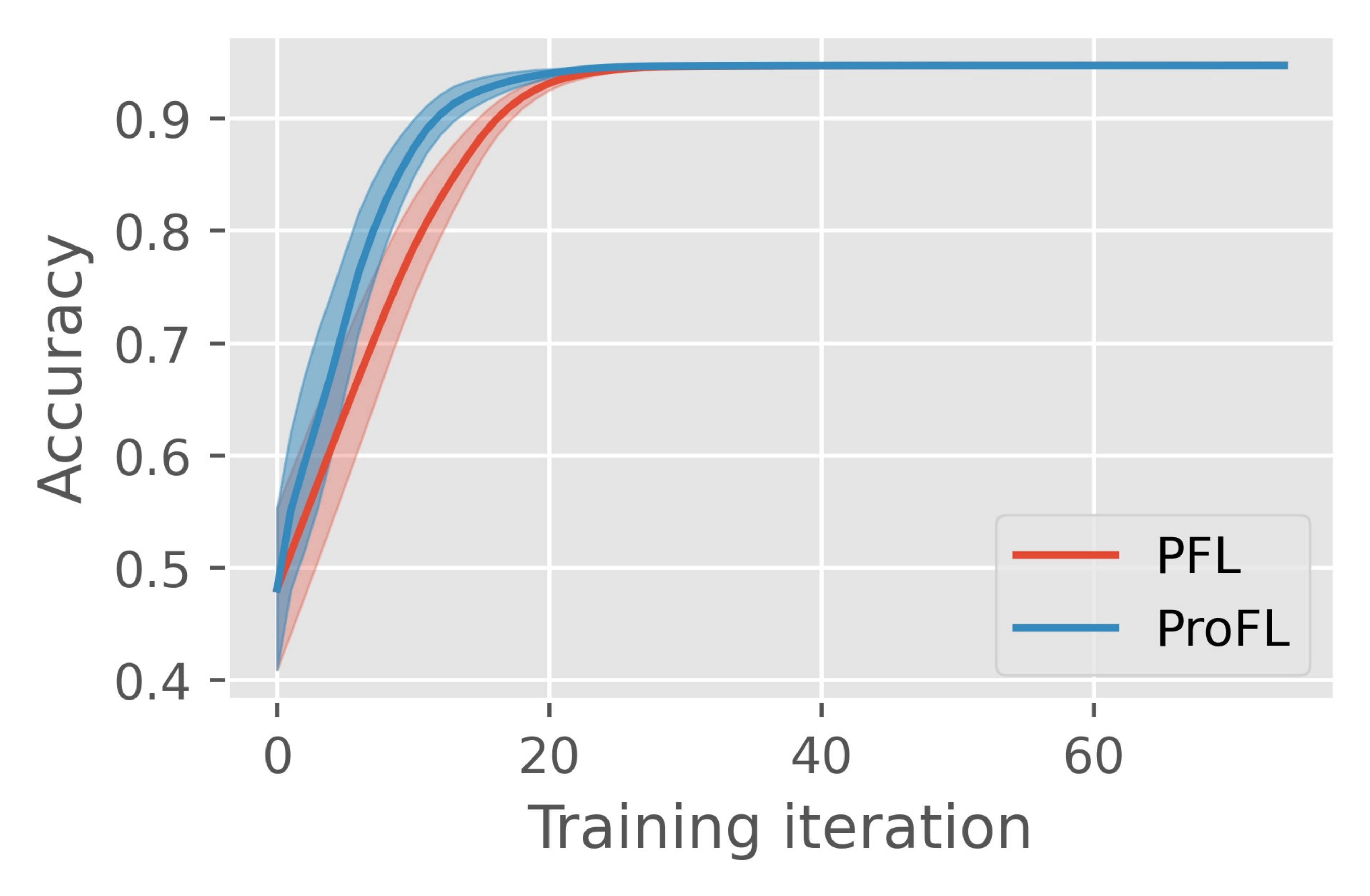}
    \caption{Give Me Some Credit}
    \label{fig:Binary Classification_Cridit}
\end{subfigure}\hspace{0.5em}%
\begin{subfigure}{0.23\textwidth}
    \centering
    \includegraphics[width=\textwidth]{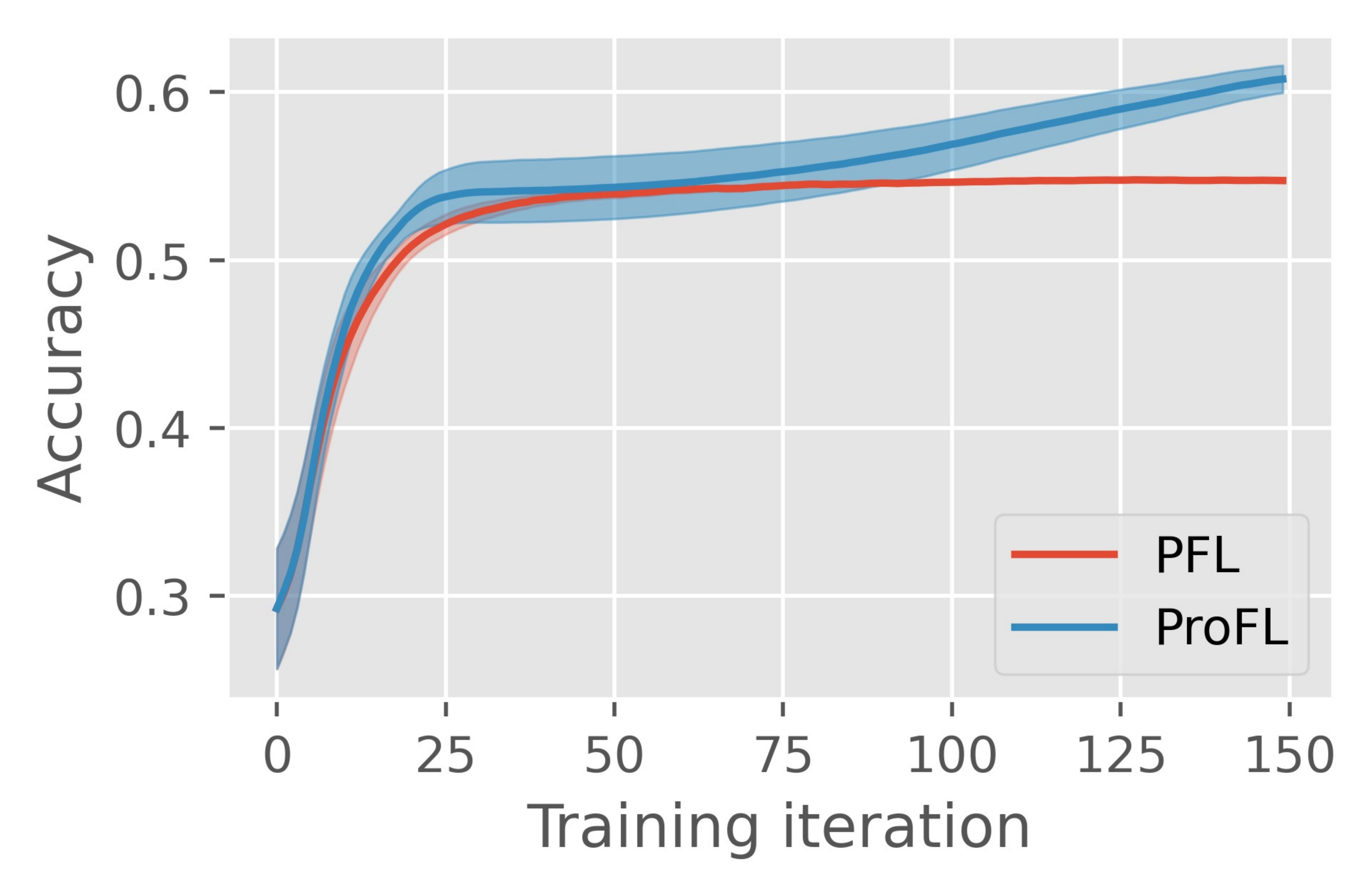}
    \caption{Adult}
    \label{fig:Adult}
\end{subfigure}
\vspace{-0.15cm}
\caption{Convergence comparison between \textsc{ProFL} and \textsc{PFL} on both synthetic and real-world datasets. \textsc{ProFL} consistently achieves higher accuracy with fewer communication rounds.}
\vspace{-0.15cm}
\label{fig:Binary_Classification_main}
\end{figure}

\noindent \textbf{Convergence to the performative optimal solution.} 
\textsc{PFL} converges to the PS point $\theta^{\text{PS}}$, which can be suboptimal, whereas \textsc{ProFL} consistently converges closer to the PO point $\theta^{\text{PO}}$. We conduct several experiments across diverse case studies to support this claim. On Gaussian synthetic data, \textsc{ProFL} demonstrates significantly better accuracy and stability than \textsc{PFL} (Figs.~\ref{fig:Binary Classification},~\ref{fig:Binary Classification_d3}), which converges to $\theta^{\text{PS}}$. On real datasets (Figs.~\ref{fig:Binary Classification_Cridit},~\ref{fig:Adult}), \textsc{ProFL} achieves faster convergence and higher final accuracy (e.g., $60.78\%$ vs. $54.74\%$ on \textit{Adult}). Since \textsc{PFL} is sensitive to outliers, we set $\epsilon_i = 0$ for fair comparison. In the non-linear $f_i(\cdot)$ cases (Fig.~\ref{fig:Participation_1} and~\ref{fig:Participation_4}), \textsc{ProFL} again converges faster and reaches a point much closer to $\theta^{\text{PO}}$, while \textsc{PFL} remains at the suboptimal $\theta^{\text{PS}}$. Additionally, in Fig.~\ref{fig:Participation_4}, \textsc{ProFL} demonstrates more stable convergence despite a persistent gap to $\theta^{\text{PO}}$ caused by the non-linearity of $f_i(\cdot)$. Strategies to mitigate this gap are discussed in Subsection~\ref{subsec:discussion}.
\begin{figure}[ht]
    \centering
    \begin{subfigure}{0.23\textwidth}
        \centering
        \includegraphics[width=\textwidth]{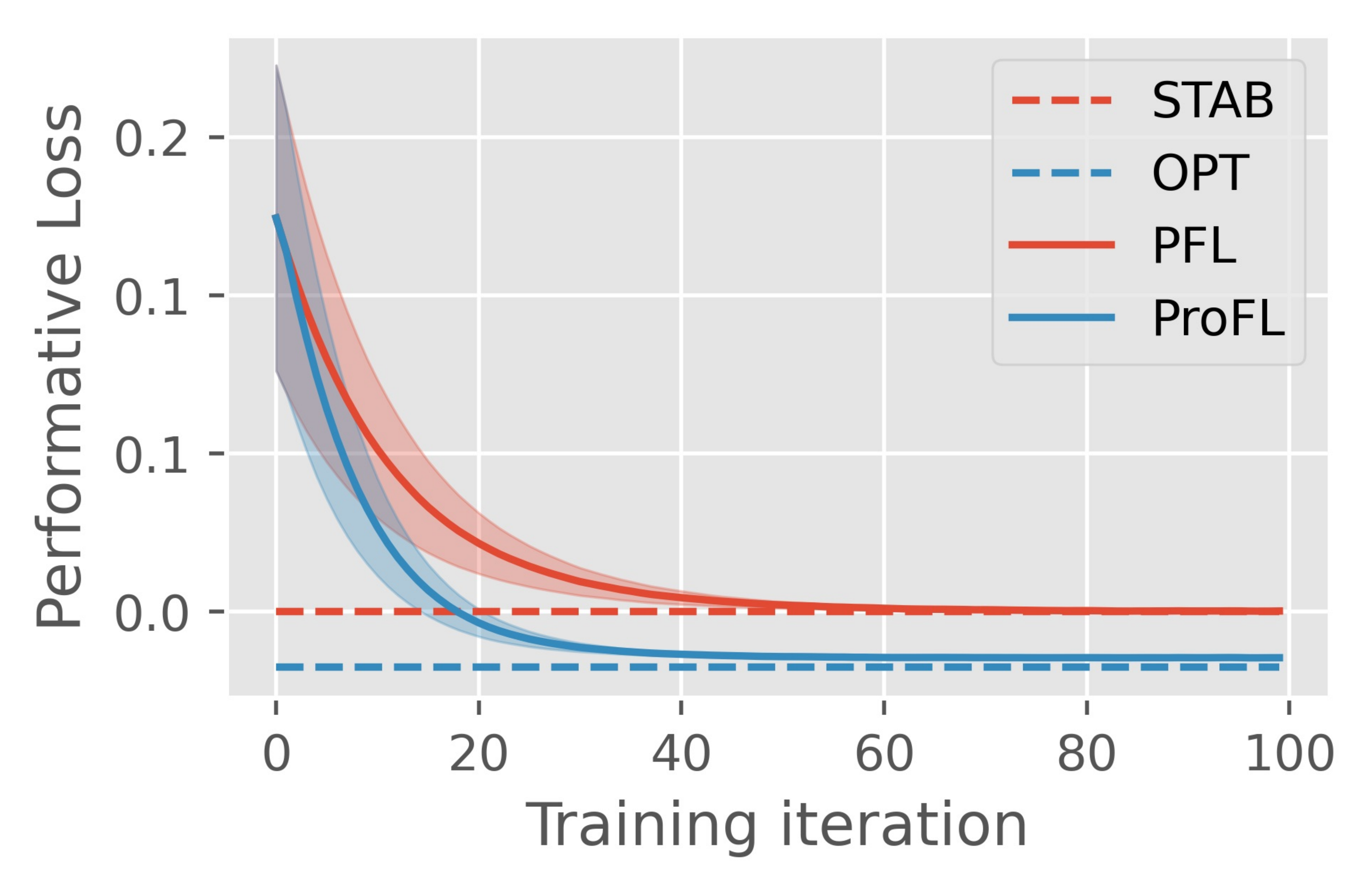}
        \caption{Pricing}
        \label{fig:Participation_1}
    \end{subfigure}\hspace{0.5em}%
    \begin{subfigure}{0.23\textwidth}
        \centering
        \includegraphics[width=\textwidth]{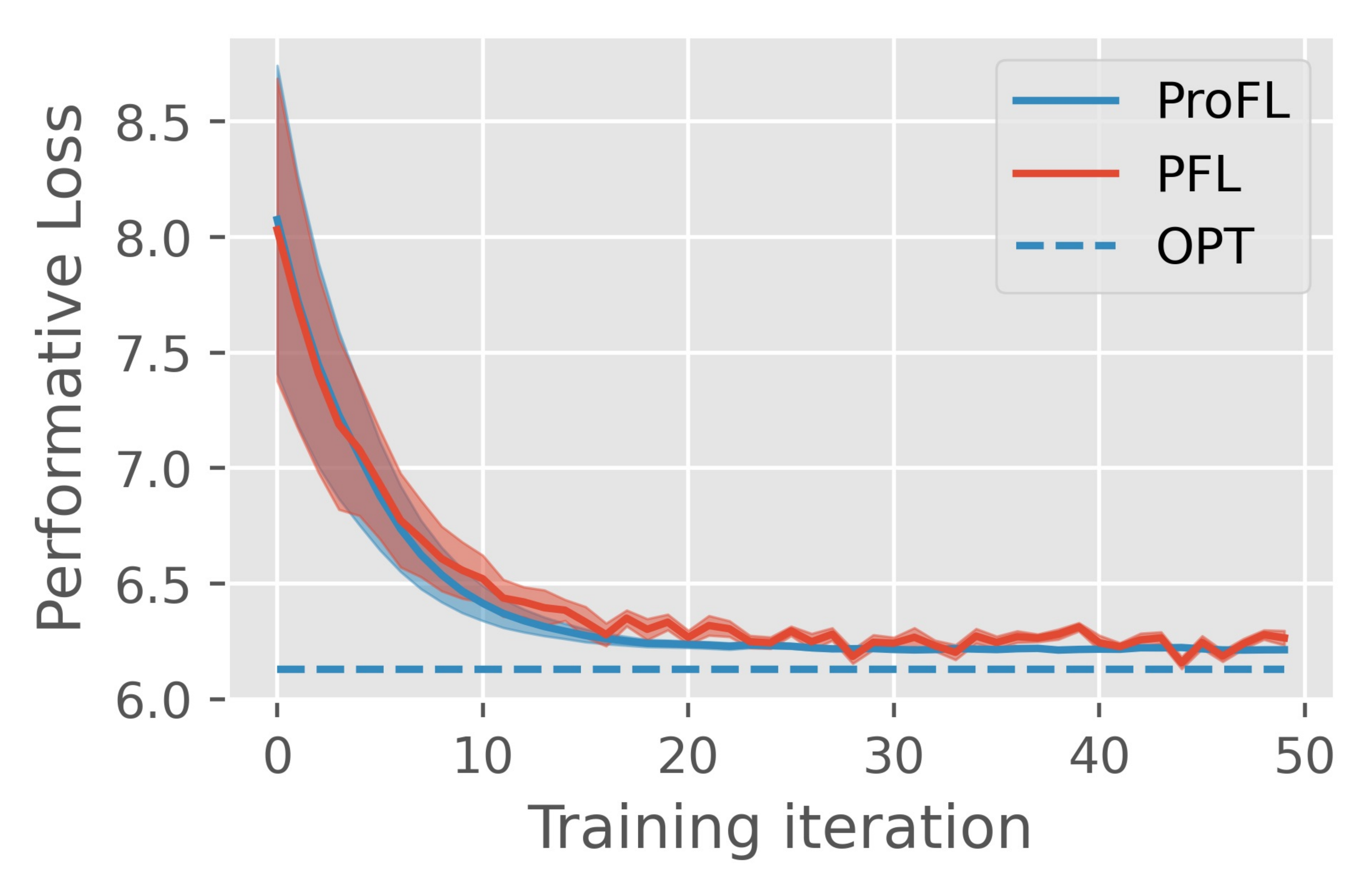}
        \caption{Regression}
        \label{fig:Participation_4}
    \end{subfigure}
    \caption{Convergence for \textit{pricing with dynamic contribution} and \textit{regression with dynamic contribution}. \textsc{ProFL} approaches the performative optimum more closely than \textsc{PFL}, with faster convergence and reduced oscillation, despite a gap due to non-linear $f_i(\cdot)$.}
    \label{fig:Participation Dynamics}
\end{figure}

\begin{figure}[ht]
 \vspace{-0.3cm}
    \centering
    \begin{subfigure}{0.23\textwidth}
        \centering
        \includegraphics[width=\textwidth]{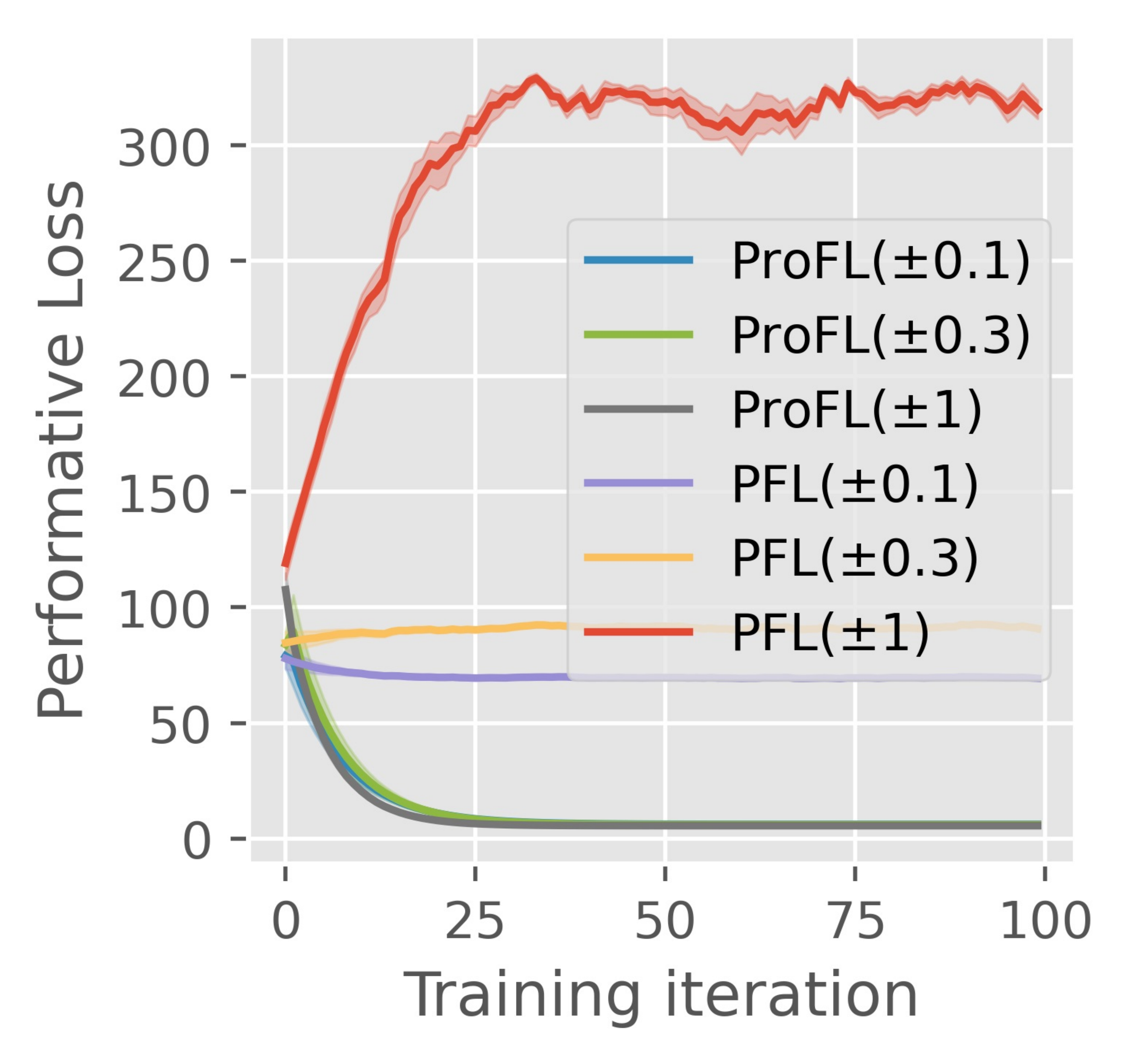}
        \caption{Different Heterogeneity}
        \label{fig:Heterogeneity}
    \end{subfigure}
    \begin{subfigure}{0.23\textwidth}
        \centering
        \includegraphics[width=\textwidth]{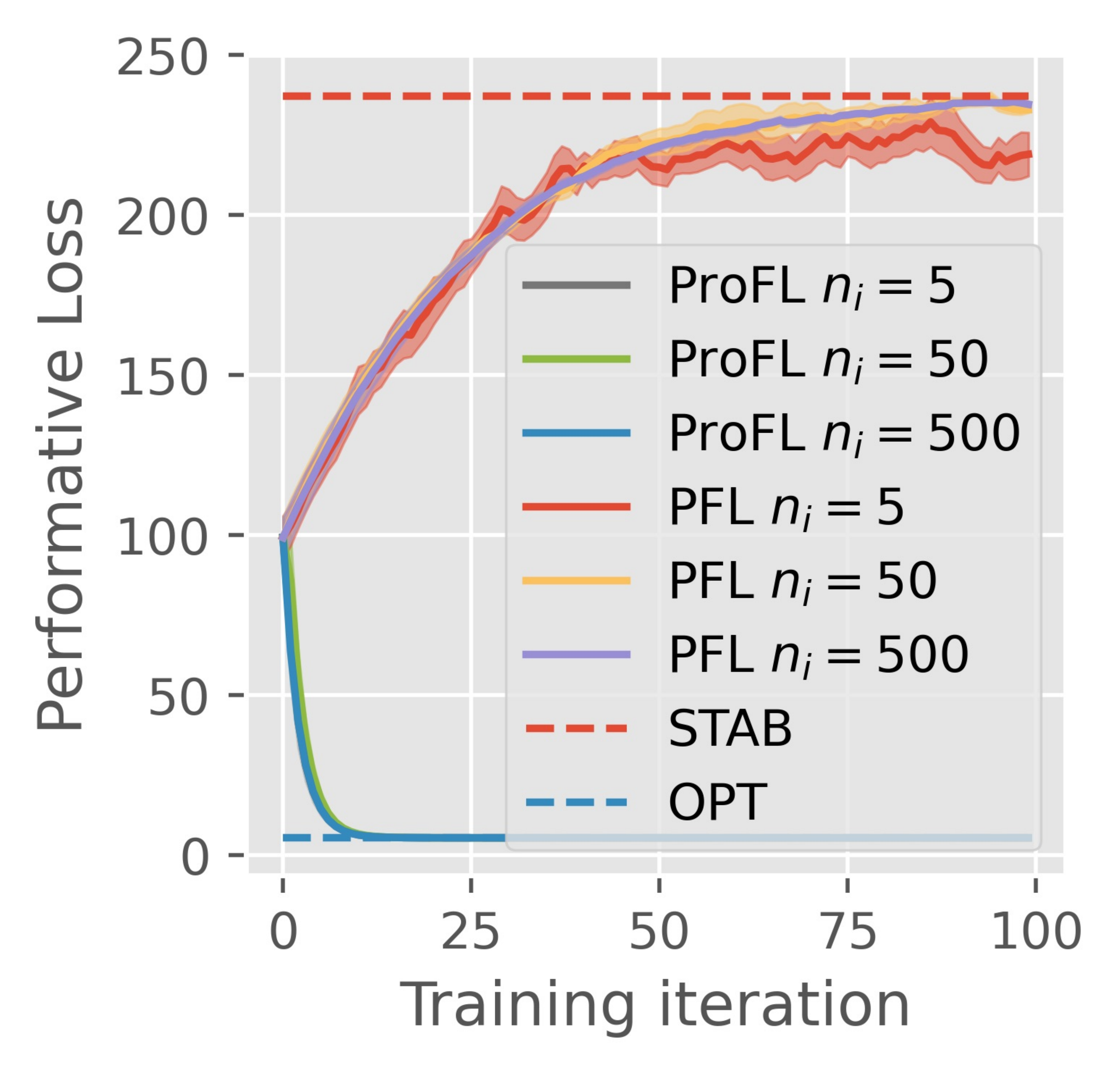}
        \caption{Sample Size}
        \label{fig:Sample_size}
    \end{subfigure}\hspace{0.5em}%
    \caption{Impact of (a) heterogeneity and (b) sample size on convergence. Only \textsc{ProFL} reaches the PO point.}
    \label{fig:Enrollment Fractions_and_Heterogeneity}
\end{figure}
In Fig.~\ref{fig:Enrollment Fractions_and_Heterogeneity} only \textsc{ProFL} reaches the optimal solution. Across all settings, \textsc{PFL} either stagnates at $\theta^{\text{PS}}$ or exhibits oscillations under complex dynamics. Fig.~\ref{fig:Heterogeneity} illustrates that \textsc{ProFL} maintains convergence to $\theta^{\text{PO}}$ across increasing heterogeneity ($\alpha \in \{0.1,\,0.3,\,1\}$), and Fig.~\ref{fig:Sample_size} shows that increasing sample size improves the stability of both algorithms, although only \textsc{ProFL} ultimately reaches the optimal solution.

\section{Conclusion and Future Work}

In this work, we proposed \textsc{ProFL}, the first approach to find the PO point in a FL setting. We also analyzed the impact of data contamination, addressing key challenges commonly observed in FL: limited local data sample sizes, constrained local computation, and the potentially high communication cost associated with multiple rounds of model exchange. We provided a convergence analysis under the PL
condition, which is weaker than the assumptions required by PFL. However, several limitations remain. Our experiments rely on synthetic distribution shifts due to the lack of real-world datasets with dynamic performative feedback. Future work includes exploring real datasets for more realistic evaluation and relaxing the PL condition to cover broader models such as neural networks, especially large language models.


\appendix

\section{Additional Proofs not Detailed in the Main Paper}\label{ap:extral proofs}

\subsection{Proof of Theorem \ref{th:3}}\label{pr:convergence}
\begin{proof}

With Assumption~\ref{as:Sensitivity} of $\gamma_i$-sensitivity and Assumption~\ref{as:smoothness} of $L$-smoothness, we obtain
\begin{align*}
    &\|\nabla \ell (z;\mathbf{\theta}) - \nabla \ell (z';\mathbf{\theta'})\|\leq L(\|\mathbf{\theta} - \mathbf{\theta'} \| + \|z-z'\|)\leq L(1+\gamma_i)\|\mathbf{\theta} - \mathbf{\theta'} \|
\end{align*}
Then we have 
\begin{align*}
    \mathcal{L}(\overline{\theta}^{t+1}) \leq  \mathcal{L}(\overline{\theta}^{t}) + \langle \nabla \mathcal{L}(\overline{\theta}^{t}), \overline{\theta}^{t+1} - \overline{\theta}^{t} \rangle + \frac{L\sum^N_{i=1} \alpha_i(1+\gamma_i)}{2}{\|\overline{\theta}^{t+1}-\overline{\theta}^{t}\|}^2,
\end{align*}
where $\overline{\theta}^{t+1} - \overline{\theta}^{t} = -g^t\eta$ and $g^t = \sum_{i=1}^N \alpha_i g^t_i$.

By taking expectation on both sides we obtain that
\begin{align}
    &\mathbb{E}[\mathcal{L}(\overline{\theta}^{t+1})]  \notag \\ 
    &\leq \mathbb{E}[\mathcal{L}(\overline{\theta}^{t})] + \mathbb{E}\left[\langle \nabla \mathcal{L}(\overline{\theta}^{t}), \overline{\theta}^{t+1} - \overline{\theta}^{t} \rangle\right] + \frac{L\sum_{i=1}^N  \alpha_i(1+\gamma_i)}{2}\mathbb{E}\left[\|\overline{\theta}^{t+1}-\overline{\theta}^{t}\|^2\right]  \notag \\ 
    &= \mathbb{E}[\mathcal{L}(\overline{\theta}^{t})]\underbrace{-\mathbb{E}\left[\langle \nabla \mathcal{L}(\overline{\theta}^{t}), g^t \rangle\right]}_{T_1}\eta +\frac{L\sum_{i=1}^N  \alpha_i(1+\gamma_i)}{2} \underbrace{\mathbb{E}\left[\|g^t\|^2\right]}_{T_2}\eta^2. \label{eq:upper bound}
\end{align}
Let's start to find the upper bound of $T_2$.
With $R$ steps local updates, by using Jensen's inequality, we obtain that
\begin{align*}
    &\mathbb{E}\left[\|g^t\|^2\right] \\
    &= \mathbb{E}\left[\left\|\sum^N_{i=1} \alpha_i  g^t_i\right\|^2\right]\\
    &\leq  \mathbb{E}\left[\left\| \sum^N_{i=1} \alpha_i \left( g^{t}_i - \nabla \mathcal{L}_i(\theta^{t}_i) + \nabla \mathcal{L}_i(\theta^{t}_i)\right)\right\|^2\right]\\
    &\leq  2\mathbb{E}\left[\left\| \sum^N_{i=1} \alpha_i \left( g^{t}_i - \nabla \mathcal{L}_i(\theta^{t}_i)\right)\right\|^2\right] + 2\mathbb{E}\left[ \left\|\sum^N_{i=1} \alpha_i \nabla \mathcal{L}_i(\theta^{t}_i)\right\|^2\right]\\
     &\leq  2\mathbb{E}\left[\left\| \sum^N_{i=1} \alpha_i \left( g^{t}_i - \nabla \mathcal{L}_i(\theta^{t}_i)\right)\right\|^2\right] + 2G^2.
\end{align*}
Then we start to find the upper bound of $T_1$.
{\small
\begin{align*}
 &-\mathbb{E}\left[\langle \nabla \mathcal{L}(\overline{\theta}^{t}), g^t \rangle\right] \\
 &= -\frac{1}{2} \mathbb{E}\left[ \left \| \nabla \mathcal{L}(\overline{\theta}^{t}) \right \|^2\right]-\frac{1}{2}\mathbb{E}\left[ \left \| g^t \right \|^2\right] + \frac{1}{2}\mathbb{E}\left[ \left \| \nabla \mathcal{L}(\overline{\theta}^{t})-g^t \right \|^2\right]\\ 
 &= -\frac{1}{2} \mathbb{E}\left[ \left \| \nabla \mathcal{L}(\overline{\theta}^{t}) \right \|^2\right]-\frac{1}{2}\mathbb{E}\left[ \left \| g^t \right \|^2\right] + \frac{1}{2}\mathbb{E}\left[ \left \| \nabla \mathcal{L}(\overline{\theta}^{t})-  \sum^N_{i=1} \alpha_i \nabla \mathcal{L}_i({\theta}^{t}_i) + \sum^N_{i=1} \alpha_i \nabla \mathcal{L}_i({\theta}^{t}_i) -g^t \right \|^2\right]\\ 
 &\leq -\rho\mathbb{E}\left[ \mathcal{L}(\overline{\theta}^{t}) - \mathcal{L}(\overline{\theta}^{\text{PO}})\right] -\frac{1}{2}\mathbb{E}\left[ \left \| g^t \right \|^2\right]+ \mathbb{E}\left[ \left \| \sum^N_{i=1} \alpha_i\left(\nabla \mathcal{L}_i(\overline{\theta}^{t})-\nabla \mathcal{L}_i({\theta}^{t}_i) \right)\right \|^2\right] + \mathbb{E}\left[ \left \| \sum^N_{i=1} \alpha_i  \nabla \mathcal{L}_i({\theta}^{t}_i) -g^t \right \|^2 \right]\\ 
  &\leq -\rho\mathbb{E}\left[ \mathcal{L}(\overline{\theta}^{t}) - \mathcal{L}(\overline{\theta}^{\text{PO}})\right] -\frac{1}{2}\mathbb{E}\left[ \left \| g^t \right \|^2\right]+ L^2\mathbb{E}\left[\sum_{i=1}^N \alpha_i (1+\gamma_i)^2\left \| \overline{\theta}^{t}- {\theta}^{t}_i \right \|^2\right] + \mathbb{E}\left[ \left \| \sum^N_{i=1} \alpha_i  \nabla \mathcal{L}_i({\theta}^{t}_i) -g^t \right \|^2 \right]\\ 
&\leq -\rho\mathbb{E}\left[ \mathcal{L}(\overline{\theta}^{t}) - \mathcal{L}(\overline{\theta}^{\text{PO}})\right]-\frac{1}{2}\mathbb{E}\left[ \left \| g^t \right \|^2\right] + L^2(R-1)^2 G^2\eta^2\left((1+\overline{\gamma})^2 + \sigma^2_{\gamma}\right) + \mathbb{E}\left[ \left \| \sum^N_{i=1} \alpha_i\nabla \mathcal{L}_i({\theta}^{t}_i) -g^t \right \|^2 \right].\\
\end{align*}
}

The first inequality arises from the application of Jensen’s inequality to two terms. The second inequality arises from Assumption~\ref{as:smoothness}. The last inequality arises form Lemma~\ref{le:upper bound variance}.

Finally, bring $T_1$ and $T_2$ into Eq.~\eqref{eq:upper bound} and subtract \(\mathcal{L}(\overline{\theta}^{\text{PO}})\) from both sides, we can get that

\begin{align*}
    &\mathbb{E}[\mathcal{L}(\overline{\theta}^{t+1})-\mathcal{L}({\theta}^{\text{PO}})]\\
    &\leq (1-\rho\eta)\mathbb{E}[\mathcal{L}(\overline{\theta}^{t})-\mathcal{L}({\theta}^{\text{PO}})]+\frac{L^2(R-1)^2G^2\eta^3}{2}\left((1+\overline{\gamma})^2 + \sigma^2_{\gamma}\right)+ LG^2\eta^2(1+\overline{\gamma})\\
    &+ L\eta^2(1+\overline{\gamma})\mathbb{E}\left[\left \| \sum^N_{i=1} \alpha_i \nabla \mathcal{L}_i({\theta}^{t}_i) -g^t\right\|^2\right] + \mathbb{E}\left[|\left \| \sum^N_{i=1} \alpha_i \nabla \mathcal{L}_i({\theta}^{t}_i) -g^t\right\|^2 -\frac{1}{2}\left \|g^t\right\|^2\right]\eta,\\
\end{align*}

where $\overline{\gamma} = \sum^N_{i=1}\alpha_i\gamma_i$ and $\sigma^2_{\gamma} = \sum^N_{i=1}\alpha_i(\overline{\gamma}-\gamma_i)^2$.

Denote $\mathbb{E}[\mathcal{L}(\overline{\theta}^{t})-\mathcal{L}({\theta}^{\text{PO}})]$ as $a_t$. Then we can obtain that
{\small
\begin{align}
    a_{t+1} &\leq (1-\rho\eta)a_{t} +\frac{L^2(R-1)^2G^2\eta^3}{2}\left((1+\overline{\gamma})^2 + \sigma^2_{\gamma}\right) \notag \\
    &\quad + LG^2\eta^2(1+\overline{\gamma}) + L\eta^2(1+\overline{\gamma}) \mathbb{E}\left[\left \| \sum^N_{i=1} \alpha_i \nabla \mathcal{L}_i({\theta}^{t}_i) -g^t\right \|^2\right] \notag \\
    &\quad +\mathbb{E}\left[\left\|\sum^N_{i=1} \alpha_i \nabla \mathcal{L}_i({\theta}^{t}_i) -g^t\right \|^2 -\frac{1}{2}\left\|g^t\right \|^2\right]\eta \notag \\
    &= (1-\rho\eta)a_{t}+\underbrace{\mathbb{E}\left[\left\|\sum^N_{i=1} \alpha_i \nabla \mathcal{L}_i({\theta}^{t}_i) -g^t\right \|^2 -\frac{1}{2}\left\|g^t\right \|^2\right]}_{T_4}\eta \notag \\
    &\quad +\underbrace{\frac{1}{2}\left(L^2(R-1)^2G^2\eta\left((1+\overline{\gamma})^2+ \sigma^2_{\gamma}\right)+ 2LG^2(1+\overline{\gamma})+ 2L(1+\overline{\gamma})\mathbb{E}\left[\left\| \sum^N_{i=1} \alpha_i \nabla \mathcal{L}_i({\theta}^{t}_i) -g^t\right \|^2\right]\right)}_{T_3}\eta^2. \label{eq:convergence}
\end{align}
}

 Denote the empirical result of the performative gradient of  client $i$ in global iteration $t$ and local iteration $r$ as $\hat{\nabla} \mathcal{L}_i(\theta_i^{t}) = \hat{\nabla} \mathcal{L}_{i,1}(\theta_i^{t})+ \hat{\nabla} \mathcal{L}_{i,2}(\theta_i^{t})$. According to Lemma~\ref{le:upper bound L2 with h}, we have

\begin{align*}
    \mathbb{E}\left[\left\|\sum^N_{i=1} \alpha_i \nabla \mathcal{L}_i({\theta}^{t}_i) -g^t\right \|^2 \right] &\leq \sum^N_{i=1} \alpha_i \mathbb{E}\left[  \left \| \nabla\mathcal{L}_i(\theta_i^{t}) - g_i^{t} \right\|^2\right]\\
    &\leq  (1-\overline{\epsilon})  \overline{\pmb{\omega_D}}^2 + \overline{\epsilon}\left( L W_1\left( D , Q \right)_{\max}+ \overline{\pmb{\omega_Q}}+ \overline{\pmb{\omega_D}}\right)^2.
\end{align*}
where $\overline{\epsilon} = \sum^N_{i=1}\alpha_i\epsilon_i, \sigma^2_{\epsilon} = \sum^N_{i=1}\alpha_i(\overline{\epsilon}-\epsilon_i)^2,$ and $\overline{\pmb{\omega_Q}}, \overline{\pmb{\omega_D}}$ are the maximum of $\pmb{\omega_Q}$ and $\pmb{\omega_D}$ of all clients and all iterations. We can find the upper bound of $T_3$.
{\small
\begin{align*}
    T_3 &\leq \frac{1}{2}\left(L^2(R-1)^2G^2\eta\left((1+\overline{\gamma})^2+ \sigma^2_{\gamma}\right)+ 2LG^2(1+\overline{\gamma})+ 2L(1+\overline{\gamma})\mathbb{E}\left[\left\| \sum^N_{i=1} \alpha_i \nabla \mathcal{L}_i({\theta}^{t}_i) -g^t\right \|^2\right]\right)\\
    & \leq \frac{1}{2}\left(L^2(R-1)^2G^2\eta\left((1+\overline{\gamma})^2+ \sigma^2_{\gamma}\right)\right) + L(1+\overline{\gamma})\left((1-\overline{\epsilon})  \overline{\pmb{\omega_D}}^2 + \overline{\epsilon}\left( L W_1\left( D , Q \right)_{\max}+ \overline{\pmb{\omega_Q}}+ \overline{\pmb{\omega_D}}\right)^2 + G^2\right).
\end{align*}
}

Let 
{\small
\begin{align*}
    C =& \left( \frac{L^2(R-1)^2G^2\eta\left((1+\overline{\gamma})^2+ \sigma^2_{\gamma}\right)}{2}+ L(1+\overline{\gamma})\left((1-\overline{\epsilon})  \overline{\pmb{\omega_D}}^2 + \overline{\epsilon}\left( L W_1\left( D , Q \right)_{\max}+ \overline{\pmb{\omega_Q}}+ \overline{\pmb{\omega_D}}\right)^2 + G^2\right)\right)\eta^2\\
    &+ \left( (1-\overline{\epsilon})  \overline{\pmb{\omega_D}}^2 + \overline{\epsilon}\left( L W_1\left( D , Q \right)_{\max}+ \overline{\pmb{\omega_Q}}+ \overline{\pmb{\omega_D}}\right)^2\right)\eta.
\end{align*}
}

We will obtain 
\begin{align*}
    &a_{t+1} \leq (1-\rho\eta)a_{t} + C
\end{align*} 
and
\begin{align*}
     a_{t} \leq (1-\rho\eta)^ta_{0} + \frac{1-(1-\rho\eta)^t}{1-(1-\rho\eta)}C = (1-\rho\eta)^ta_{0} + \frac{1-(1-\rho\eta)^t}{\rho\eta}C \leq (1-\rho\eta)^ta_{0} + C,
\end{align*}
 which means
\begin{align*}
    &\mathbb{E}[\mathcal{L}(\overline{\theta}^{t})-\mathcal{L}({\theta}^{\text{PO}})]\leq (1-\rho\eta)^t\mathbb{E}[\mathcal{L}(\overline{\theta}^{0})-\mathcal{L}({\theta}^{\text{PO}})]+C.\\
\end{align*}

\end{proof}

\subsection{Proof of Lemma~\ref{le:error df dfdtheta}}\label{pr:error df dfdtheta}
\begin{proof}
$\frac{\widehat{df}}{d\theta}$ is the estimate of $\frac{df}{d\theta}$. To simplify the notation in this proof we denote ${f_i}_{z\sim D_i(\theta^{t}_i)}(z)$ as ${f_t}$ and $\nabla{f_i}_{z\sim D_i(\theta^{t}_i)}(z)$ as $\nabla{f_t}$. $\widehat{f}_t$ is the estimate of $f(\theta).$ Because $\widehat{f}_t$ is estimated by data sampled from $P_i(\theta^{t}_i) = (1-\epsilon_i)D_i(\theta^{t}_i) + \epsilon_iQ_i$ and  we estimate the mean of mixture Gaussian or the $p$ of Binomial distribution $X \sim (n,p)$ we have 
\begin{align*}
    \widehat{f}_r &=\widehat{f_i}_{z\sim P_i(\theta^{t}_i)}(z) \\&= (1-\epsilon_i) \widehat{f_i}_{z\sim D_i(\theta^{t}_i)}(z) + \epsilon_i \widehat{f_i}_{z\sim Q_i}(z)\\
    &=  (1-\epsilon_i)\left( {f_i}_{z\sim D_i(\theta^{t}_i)}(z) + \text{err}^{t}_i \right) + \epsilon_i \widehat{f_i}_{z\sim Q_i}(z),
\end{align*}

where $\text{err}^{t}_i$ is the error result in sampling of client $i$ on iteration $t$ and $[\text{err}^{t}_i] = 0$ for all $t \in [0, T-1].$ Similarly we have denote $\theta^{t}_i$ as $\theta_t$ , $\theta^{t-1}_i$ as $\theta_{t-1}$ , $\text{err}^{t}_i$ as $\text{err}_t$ , and $\text{err}^{t-1}_i$ as $\text{err}_{t-1}.$  

For each $1 \leq j \leq d$ by an approximation of Taylor's series (ignoring higher order terms), we obtain
\begin{align*}
    &f_{t,j}-f_{t-1,j} =  \nabla f_{t-1,j}^\intercal (\theta_{t} - \theta_{t-1}) + \frac{1}{2} (\theta_{t} - \theta_{t-1})^\intercal\nabla^2 f_i(\xi_{t-1,j}) (\theta_{t} - \theta_{t-1}),
\end{align*}
where \( \xi_{t-1,j} \) lies on the line segment joining  \( \theta_{t-1} \) and \( \theta_{t} \). 

Denote 
\begin{align*}
    a_{t-k,j} =\frac{1}{2} (\theta_{t-k+1} - \theta_{t-k})^\intercal\nabla^2 f_i(\xi_{t-k,j}) (\theta_{t-k+1} - \theta_{t-k}) ~~\text{ and }~~  a_k = \begin{bmatrix}
           a_{t-k,1} \\           
           \vdots \\
           a_{t-k,d}
          \end{bmatrix}  
\end{align*}

Then we have
\begin{align*}
    &f_{t}-f_{t-1} = \frac{df}{d\theta}(\theta_{t} - \theta_{t-1}) + a_1.
\end{align*}
By using the update rule of Algorithm~\ref{alg:one} we obtain that
\begin{align*}
    \widehat{f}_{t} - \widehat{f}_{t-1} 
    &= (1-\epsilon_i) \left({f}_{t} - {f}_{t-1}\right)+ \epsilon_i \left({f}_{z\sim Q_i}(z) - {f}_{z\sim Q_i}(z) \right)+(1-\epsilon_i)(\text{err}_{t} - \text{err}_{t-1})\\
    &= (1-\epsilon_i)\begin{bmatrix}
           f_{t,1}-f_{t-1,1} \\           
           \vdots \\
           f_{t,d}-f_{t-1,d}
          \end{bmatrix}  +(\text{err}_{t} - \text{err}_{t-1}).\\
\end{align*}
Then we can write $\Delta_1 f$ in terms of $\frac{df}{d\theta}$.
\begin{align*}
      \Delta_1 f =  \widehat{f}_{t} - \widehat{f}_{t-1}= (1-\epsilon_i) \left(\frac{df}{d\theta}(\theta_{t} - \theta_{t-1}) + a_1 + (\text{err}_{t} - \text{err}_{t-1})\right ).\\
\end{align*}
Similarly, we can obtain 
\begin{align*}
      \Delta_k f &=  \widehat{f}_{t} - \widehat{f}_{t-k}=\sum^k_{q=1} \left(\widehat{f}_{t-q+1} - \widehat{f}_{t-q}\right)\\&=(1-\epsilon_i)\frac{df}{d\theta}(\theta_{t} - \theta_{t-k})+ (1-\epsilon_i)\frac{df}{d\theta} \sum^k_{q=1} a_q +(1-\epsilon_i)(\text{err}_{t} - \text{err}_{t-k}),\\
\end{align*}
where $1 \leq k \leq H.$

Now we can get
\begin{align*}
    \Delta f &= \left[
  \begin{array}{ccc}
    \vertbar &       & \vertbar \\
    \Delta_1 f      & \ldots & \Delta_H f    \\
    \vertbar &        & \vertbar 
  \end{array}
\right];\\
    \Delta \theta &= \left[
  \begin{array}{ccc}
    \vertbar &       & \vertbar \\
    \theta_{t} -\theta_{t-1}      & \ldots & \theta_{t} -\theta_{t-H}    \\
    \vertbar &        & \vertbar 
  \end{array}
\right]
\\&=
\left[
  \begin{array}{ccc}
    \vertbar &       & \vertbar \\
     \nabla \mathcal{L}_{z\sim P_i(\theta_{t-1})}(z,\theta_{t-1})      & \ldots & \sum^H_{k=1}\nabla \mathcal{L}_{z\sim P_i(\theta_{t-k})}(z,\theta_{t-k})   \\
    \vertbar &        & \vertbar 
  \end{array}
\right]\eta.
\end{align*}
Define matrices 
\begin{align*}
    A = \left[
  \begin{array}{ccc}
    \vertbar &       & \vertbar \\
    a_1      & \ldots & \sum^H_{k=1} a_k    \\
    \vertbar &        & \vertbar 
  \end{array}
\right] ~~ \text{ and }~~ E = \left[
  \begin{array}{ccc}
    \vertbar &       & \vertbar \\
    \text{err}_t-\text{err}_{t-1}      & \ldots & \text{err}_t-\text{err}_{t-H}    \\
    \vertbar &        & \vertbar 
  \end{array}
\right].
\end{align*} 
Then we can find the difference between $\frac{\widehat{df}}{d\theta}$ and $\frac{df}{d\theta}:$
\begin{align*}
    \frac{\widehat{df}}{d\theta} - \frac{df}{d\theta}
    = \Delta f (\Delta \theta)^\dagger - \frac{df}{d\theta} = -\epsilon_i\frac{df}{d\theta} + (1-\epsilon_i) A (\Delta \theta)^\dagger + (1-\epsilon_i)E(\Delta \theta)^\dagger
\end{align*}

and we can obtain that
\begin{align}
    &\mathbb{E}\left [ \left \| \frac{\widehat{df}}{d\theta} - \frac{df}{d\theta}\right \|^2 \right] \notag \\
    & = \mathbb{E}\left [ \left \| -\epsilon_i\frac{df}{d\theta} +  (1-\epsilon_i) A(\Delta \theta)^\dagger + (1-\epsilon_i) E(\Delta \theta)^\dagger\right \|^2 \right] \notag \\
    & = \mathbb{E}\left [ \left \|  -\epsilon_i\frac{df}{d\theta} + (1-\epsilon_i) A(\Delta \theta)^\dagger\right \|^2 \right] + (1-\epsilon_i)^2\mathbb{E}\left [ \left \| E(\Delta \theta)^\dagger \right \|^2\right] \label{eq:df1} \\
    &\leq 2\epsilon_i^2\mathbb{E}\left [ \left \| \frac{df}{d\theta}\right \|^2 \right]+2 (1-\epsilon_i)^2\mathbb{E}\left [ \left \|  A(\Delta \theta)^\dagger\right \|^2 \right] + (1-\epsilon_i)^2\mathbb{E}\left [ \left \| E(\Delta \theta)^\dagger \right \|^2\right]\label{eq:df2} \\
    &\leq 2\epsilon_i^2\mathbb{E}\left [ \left \| \frac{df}{d\theta}\right \|^2 \right]+(1-\epsilon_i)^2\left(2 \mathbb{E}\left [ \left \|  A\right \|^2_F \right] + \mathbb{E}\left [ \left \| E\right \|^2_F\right]\right)\left \| \Delta \theta^\dagger\right \|^2\label{eq:df3}\\
    & \leq 2\epsilon_i^2\mathbb{E}\left [ \left \| \frac{df}{d\theta}\right \|^2 \right] +  (1-\epsilon_i)^2 {2}\mathbb{E}\left [ \sum_{k=1}^H \left \| \theta_{t}-\theta_{t-k} \right \|^4 \left(\sum_{j=1}^d\left \|\nabla^2 f_i(\xi_{t-k,j})\right \|^2\right)  \right]\left \| \Delta \theta^\dagger\right \|^2 \notag \\& + (1-\epsilon_i)^2\mathbb{E}\left [\sum_{k=1}^H \left \| \text{err}_{t}-\text{err}_{t-k} \right \|^2 \right] \left \| \Delta \theta^\dagger\right \|^2\notag\\
    & \leq 2\epsilon_i^2 \left \| \frac{df}{d\theta}\right \|^2  + \left(\frac{(1-\epsilon_i)^2M^2\eta^4G^4H^6}{2} + (1-\epsilon_i)^2\mathbb{E}\left [\sum_{k=1}^H \left \| \text{err}_{t}-\text{err}_{t-k} \right \|^2 \right] \right)\left \| \Delta \theta^\dagger\right \|^2\notag\\
     & \leq 2\epsilon_i^2 \left \| \frac{df}{d\theta}\right \|^2  + \left(\frac{(1-\epsilon_i)^2M^2\eta^4G^4H^6}{2} + (1-\epsilon_i)^2\mathbb{E}\left [\sum_{k=1}^H \left(\left \| \text{err}_{t}\right \|^2 + \left \| \text{err}_{t-k} \right \|^2\right) \right] \right)\left \| \Delta \theta^\dagger\right \|^2\label{eq:df4}\\
     & \leq 2\epsilon_i^2F^2  + (1-\epsilon_i)^2\left(\frac{M^2\eta^4G^4H^6}{2} + \frac{2Hd\left \| \text{err}\right \|^2_i}{\eta^2} \right)\lambda^{-2}_{i,\min}\label{eq:df5}.
\end{align}

Where $\left\| .\right\|_F$ denotes the Frobenius norm. (\ref{eq:df1}) and (\ref{eq:df4}) arise from the property that $\mathbb{E}\left[ \text{err}_{t}\right] = 0$ for all $t \in [0,T]$ and $\text{err}_{t}$ is independent to $a_k$ and $\frac{df}{d\theta}$. (\ref{eq:df2}) arises from Jensen's inequality of two terms. (\ref{eq:df3}) arises form the properties of matrix norm that if $A$ and $B$ are two matrix $\left \|  AB\right \| \leq \left \|  A\right \|\left \| B\right \|$ and $ \left \|  A\right \| \leq  \left \|  A\right \|_F.$  (\ref{eq:df5}) arises form $\left \| \Delta \theta^\dagger\right \|\leq \left \| \Delta \theta\right \|^{-1} \leq \lambda^{-1}_{i,min}$, where $\lambda_{i,min}$ is the smallest singular values of $\Delta \theta$ of client $i$ for all $t \in [0,T]$ and $\left \| \text{err}\right \|^2_i$ is the maximum of $\left \| \text{err}_t\right \|^2$ for all $t \in [0,T]$. Because $\left \| \text{err}_t\right \|^2$ depends on the data distribution, the estimation methods, the number of samples $n_i$, as well as the probability $\varphi_i$ of $\mathbf{Pr}(\left \| \text{err}_{t}\right \| \leq 1-\varphi_i)$, we use $f_\varphi(\varphi_i;n_i)$ to represent this term. 

Finally we obtrain
\begin{align*}
    \mathbb{E}\left [ \left \| \frac{\widehat{df}}{d\theta} - \frac{df}{d\theta}\right \|^2 \right] \leq 2\epsilon_i^2F^2  + (1-\epsilon_i)^2\left(\frac{M^2\eta^4G^4H^6}{2} + \frac{2Hdf_\varphi(\varphi_i;n_i)}{\eta^2} \right)\lambda^{-2}_{i,\min}.
\end{align*}

\end{proof}

\subsection{Proof of Lemma \ref{le:error L2D with h}}\label{proof: error L2D with h}
\begin{proof}

First we start to find the upper bound of $\mathbb{E}\left[   \left \| \nabla \mathcal{L}_{i,2}(\theta_i^{t})-\widehat{\nabla}\mathcal{L}_{{i,2} z\sim D_i(\theta_i^{t})}(z,\theta_i^{t})\right \|^2\right].$ 

By Eq.~\ref{eq:PG} we obtain that 
\begin{align*}
    &\mathbb{E}\left[   \left \| \nabla \mathcal{L}_{i,2}(\theta_i^{t})-\widehat{\nabla}\mathcal{L}_{{i,2} z\sim D_i(\theta_i^{t})}(z,\theta_i^{t})\right \|^2\right]\\
    &= \mathbb{E} \left[\left \| \ell(Z_i;\theta_i^{t}){\frac{df_{i,t}}{d\theta} }^\intercal \frac{\partial \log p\left(Z_i;f_i(\theta_i^{t})\right)}{\partial f_i(\theta_i^{t})} - \widehat{\ell}(Z_i;\theta_i^{t}){\frac{\widehat{df_{i,t}}}{d\theta} }^\intercal\frac{\partial \log p\left(Z_i;\widehat{f}_i(\theta_i^{t})\right)}{\partial \widehat{f}_i(\theta_i^{t})} \right\|^2\right].
\end{align*}

When the local client's sample size $n_i \rightarrow \infty$, we have
\begin{align*}
    &\mathbb{E}  \left[\left \| \ell(Z_i;\theta) -  \widehat{\ell}(Z_i;\theta)\right\|^2\right] \rightarrow 0 \\
   & \mathbb{E} \left[\left \|\frac{\partial \log p\left(Z_i;f_i(\theta)\right)}{\partial f_i(\theta)} - \frac{\partial \log p\left(Z_i;\widehat{f}_i(\theta)\right)}{\partial \widehat{f}_i(\theta)} \right\|^2\right] \rightarrow 0
\end{align*}

Thereby, $\mathbb{E} \left[\left \| \ell(Z_i;\theta_i^{t}){\frac{df_{i,t}}{d\theta} }^\intercal \frac{\partial \log p\left(Z_i;f_i(\theta_i^{t})\right)}{\partial f_i(\theta_i^{t})} - \widehat{\ell}(Z_i;\theta_i^{t}){\frac{\widehat{df_{i,t}}}{d\theta} }^\intercal\frac{\partial \log p\left(Z_i;\widehat{f}_i(\theta_i^{t})\right)}{\partial \widehat{f}_i(\theta_i^{t})} \right\|^2\right]$ mainly includes two parts. 

The first part is the following:
\begin{align}
    &\mathbb{E} \left[\left \| \ell(Z_i;\theta_i^{t}) \right\|^2  \left \| \frac{df_{i,t}}{d\theta}-\frac{\widehat{df_{i,t}}}{d\theta} \right\|^2   \left \|  \frac{\partial \log p\left(Z_i;f_i(\theta_i^{t})\right)}{\partial f_i(\theta_i^{t})} \right\|^2 \right] \notag\\
    &\leq \ell^2_{\max}  \left \|  \frac{\partial \log p\left(Z_i;f_i(\theta_i^{t})\right)}{\partial f_i(\theta_i^{t})} \right\|^2_{\max}  \mathbb{E} \left[ \left \| \frac{df_{i,t}}{d\theta}-\frac{\widehat{df_{i,t}}}{d\theta} \right\|^2  \right] \label{eq:error L2 hat 1}\\
    & = \mathcal{O}\left(\ell^2_{\max} d\left(\pmb{\omega_F}\right) \right).\label{eq:error L2 hat 2} 
\end{align}

 Because $\Theta$ is a closed and bounded set and a continuous function has a maximum value on $\Theta$, there is an upper bound $\ell^2_{\max}$ of $\left \| \ell(Z_i;\theta) \right\|^2$ and an upper bound $\left \|  \frac{\partial \log p\left(Z_i;f_i(\theta_i^{t})\right)}{\partial f_i(\theta_i^{t})} \right\|^2_{\max}$ of  $\left \|  \frac{\partial \log p\left(Z_i;f_i(\theta_i^{t})\right)}{\partial f_i(\theta_i^{t})} \right\|^2$. (\ref{eq:error L2 hat 2}) arises from the upper bound of $ \left \|  \frac{\partial \log p\left(Z_i;f_i(\theta_i^{t})\right)}{\partial f_i(\theta_i^{t})} \right\|^2$ is $\mathcal{O}(d)$ and it is related to the covariance $\Sigma$ of the data for the Gaussian distribution.

 The second part will decrease to zero as $n_i \rightarrow 0.$ The upper bound of this part is $\mathcal{O} \left(\ell^2_{\max}d F^2 f_\varphi(\varphi_i;n_i)\right)$ by using Lemma 12 in \citet{pmlr-v139-izzo21a}. Then we can obtain 
\begin{align*}
    \mathbb{E}\left[   \left \| \nabla \mathcal{L}_{i,2}(\theta_i^{t})-\widehat{\nabla}\mathcal{L}_{{i,2} z\sim D_i(\theta_i^{t})}(z,\theta_i^{t})\right \|^2\right] \leq\mathcal{O}\left(\ell^2_{\max}d \left( \left(\pmb{\omega_F}\right)+F^2 f_\varphi(\varphi_i;n_i)\right)\right) .
\end{align*}
Next we find the upper bound of $\mathbb{E}\left[\left \| {\nabla}\mathcal{L}_{{i,2}}(Q_i,\theta_i^{t}) - \widehat{\nabla}\mathcal{L}_{{i,2}}(Q_i,\theta_i^{t})\right\|^2 \right].$

\begin{align}
&\mathbb{E}\left[\left \| {\nabla}\mathcal{L}_{{i,2}}(Q_i,\theta_i^{t}) - \widehat{\nabla}\mathcal{L}_{{i,2}}(Q_i,\theta_i^{t})\right\|^2 \right] \notag\\
    &=\mathbb{E}\left[\left \| \widehat{\nabla}\mathcal{L}_{{i,2}}(Q_i,\theta_i^{t})\right\|^2 \right] \notag\\
    &= \mathbb{E}\left[\left \| {\ell}_{z\sim Q_i}(Z_i;\theta_i^{t}){\frac{\widehat{df_{i,t}}}{d\theta} }^\intercal\frac{\partial \log p\left(Z_i;\widehat{f}_i(\theta_i^{t})\right)}{\partial \widehat{f}_i(\theta_i^{t})} \right\|^2\right] \label{eq:hatL2Q1}\\
    &\leq  \mathcal{O}\left( \ell^2_{\max} d \mathbb{E} \left[\left \| \frac{\widehat{df_{i,t}}}{d\theta} \right\|^2 \right]\right) \label{eq:hatL2Q2}\\
     &\leq  \mathcal{O}\left( 2\ell^2_{\max} d \left(\mathbb{E} \left[\left \| \frac{{df_{i,t}}}{d\theta} \right\|^2 \right]+\mathbb{E} \left[\left \| \frac{df_{i,t}}{d\theta}-\frac{\widehat{df_{i,t}}}{d\theta} \right\|^2 \right] \right)\right)\label{eq:hatL2Q3}\\
    &= \mathcal{O}\left(2\ell^2_{\max} d \left(\left(\pmb{\omega_F}\right) + F^2\right) \right)\label{eq:hatL2Q4}.
\end{align}

(\ref{eq:hatL2Q1}) arises from Eq.~\eqref{eq:PG}. (\ref{eq:hatL2Q2}) arises from Lemma~10 of \citet{pmlr-v139-izzo21a} and Lemma~\ref{le:upper bounds}. (\ref{eq:hatL2Q3}) arises from Jensen's inequality of two terms. \eqref{eq:hatL2Q4} arises from Lemma~\ref{le:error df dfdtheta}.
\end{proof}

\subsection{Additional Lemmas and their Proofs}

\begin{lemma}\label{le:upper bound variance} Let $\overline{\gamma} = \sum^N_{i=1}\alpha_i\gamma_i$ and $\sigma^2_{\gamma}$ be the variance of  $\gamma_i$ of  all clients. Then $\mathbb{E}\left[\sum_{i=1}^N \alpha_i (1+\gamma_i)^2\left \| \overline{\theta}^{t}- {\theta}^{t}_i \right \|^2\right]$ is upper bounded by $(R-1)^2G^2\eta^2 \left( (1+\overline{\gamma})^2 + \sigma^2_{\gamma}\right)$. 
\end{lemma}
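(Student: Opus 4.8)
The plan is to control the per-client drift $\|\overline{\theta}^{t}-\theta_i^{t}\|$ by writing it as an accumulation of at most $R-1$ local gradient steps, bound each step uniformly, and then recognize the coefficient $\sum_i\alpha_i(1+\gamma_i)^2$ as a weighted second moment that splits into a squared mean plus a variance.

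First I would fix the iteration $t$ and let $t_0 = R\lfloor t/R\rfloor$ be the most recent synchronization round, so that $t_0 \le t \le t_0+R-1$. By the algorithm the server aggregates only when $t\equiv 0\pmod R$, so $\overline{\theta}^{t}=\overline{\theta}^{t_0}$ stays frozen throughout the round, and each active client was reset to $\theta_i^{t_0}=\overline{\theta}^{t_0}$ at the start of the round. Expressing the displacement as a telescoping sum of the intervening local updates then gives
\begin{equation*}
\overline{\theta}^{t}-\theta_i^{t} = \theta_i^{t_0}-\theta_i^{t} = -\sum_{s=t_0}^{t-1}\left(\theta_i^{s+1}-\theta_i^{s}\right).
\end{equation*}

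Next I would bound each increment. Since $\theta_i^{s+1}=\text{Proj}_\Theta(\theta_i^{s}-\eta g_i^{s})$ with $\theta_i^{s}\in\Theta$ and the Euclidean projection is non-expansive, $\|\theta_i^{s+1}-\theta_i^{s}\|\le \eta\|g_i^{s}\|\le \eta G$, where the final step uses the gradient bound from Lemma~\ref{le:upper bounds}. Applying the triangle inequality to the telescoping sum and using $t-t_0\le R-1$ yields $\|\overline{\theta}^{t}-\theta_i^{t}\|\le (R-1)\eta G$, hence $\|\overline{\theta}^{t}-\theta_i^{t}\|^2\le (R-1)^2\eta^2 G^2$ deterministically (so the expectation is harmless).

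Finally I would substitute this uniform bound into the weighted sum and evaluate the remaining coefficient. Treating $\{\alpha_i\}$ as weights summing to one, the bias-variance identity gives $\sum_{i=1}^N\alpha_i(1+\gamma_i)^2 = (1+\overline{\gamma})^2+\sigma^2_{\gamma}$, since shifting $\gamma_i$ by the constant $1$ leaves the weighted variance $\sigma^2_{\gamma}=\sum_i\alpha_i(\overline{\gamma}-\gamma_i)^2$ unchanged. Multiplying through produces exactly the claimed bound $(R-1)^2G^2\eta^2((1+\overline{\gamma})^2+\sigma^2_{\gamma})$. The main thing to get right is the per-step displacement: one must confirm that $\overline{\theta}^{t}$ truly remains frozen between synchronizations so the drift accumulates purely from local steps, count those steps as at most $R-1$, and justify $\|g_i^{s}\|\le G$ for the actual update direction used (via Lemma~\ref{le:upper bounds} and projection non-expansiveness); the remaining telescoping and moment computation are elementary.
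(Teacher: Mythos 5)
Your proof hinges on the claim that $\overline{\theta}^{t}$ stays frozen at $\overline{\theta}^{t_0}$ between synchronizations, so that $\overline{\theta}^{t}-\theta_i^{t}=\theta_i^{t_0}-\theta_i^{t}$ telescopes over client $i$'s own local steps. That reading matches the literal server-side description of Algorithm~\ref{alg:one}, but it is not the convention under which this lemma is actually invoked: in the proof of Theorem~\ref{th:3}, the sequence $\overline{\theta}^{t}$ satisfies $\overline{\theta}^{t+1}-\overline{\theta}^{t}=-\eta g^{t}$ with $g^{t}=\sum_i\alpha_i g_i^{t}$ at \emph{every} iteration, i.e., $\overline{\theta}^{t}$ is the virtual weighted average $\sum_i\alpha_i\theta_i^{t}$ of the local models, which moves at every local step. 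Under that meaning your key identity fails; the correct decomposition is
\begin{equation*}
\overline{\theta}^{t}-\theta_i^{t}=\sum_{i'}\alpha_{i'}\bigl(\theta_{i'}^{t}-\theta_{i'}^{t_0}\bigr)-\bigl(\theta_i^{t}-\theta_i^{t_0}\bigr),
\end{equation*}
and if you repair it with your tools (per-step bound $\eta G$ plus the triangle inequality through the synchronization point) you only get $\|\overline{\theta}^{t}-\theta_i^{t}\|\le 2(R-1)\eta G$, i.e., a bound of $4(R-1)^2G^2\eta^2\left((1+\overline{\gamma})^2+\sigma^2_{\gamma}\right)$ — a factor of $4$ worse than the statement.

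The paper avoids this loss by writing $\overline{\theta}^{t}-\theta_i^{t}$ as $\eta$ times the difference between the weighted average of accumulated local gradient sums and client $i$'s own accumulated sum, and then applying the variance-type inequality $\mathbb{E}\left[\|\mathbb{E}[X]-X\|^2\right]\le\mathbb{E}\left[\|X\|^2\right]$ to discard the average term at no cost in the constant, before bounding the remaining accumulated sum by $(R-1)^2G^2\eta^2$ via Jensen and Lemma~\ref{le:upper bounds}. Your remaining ingredients — the count of at most $R-1$ local steps, the per-step bound $\eta\|g_i^{s}\|\le\eta G$ via projection non-expansiveness, and the weighted bias--variance identity $\sum_i\alpha_i(1+\gamma_i)^2=(1+\overline{\gamma})^2+\sigma^2_{\gamma}$ — all coincide with the paper's and are fine; the missing idea is handling the fact that the reference point is itself a moving average, which requires the variance trick (or costs a constant factor) rather than a frozen-anchor telescoping argument. (As a side remark, the paper's per-client application of that variance inequality under the $(1+\gamma_i)^2$ weighting is itself somewhat loose, but that is an issue with the paper's proof, not yours.)
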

\begin{proof}[Proof of Lemma \ref{le:upper bound variance}]

\begin{align*}
    \mathbb{E}\left[\sum_{i=1}^N \alpha_i (1+\gamma_i)^2\left \| \overline{\theta}^{t}- {\theta}^{t}_i \right \|^2\right]&= \sum_{i=1}^N \alpha_i (1+\gamma_i)^2 \mathbb{E}\left[ \left \| \overline{\theta}^{t}- {\theta}^{t}_i \right \|^2\right]\\
    &= \eta^2\sum_{i=1}^N \alpha_i (1+\gamma_i)^2 \mathbb{E}\left[ \left \| \sum_{i=1}^N  \alpha_i \sum_{j=t-b}^t  \nabla \mathcal{L}_i({\theta}_i^{j})- \sum_{j=t-b}^t\nabla \mathcal{L}_i({\theta}_i^{j}) \right \|^2\right]\\
     &\leq \eta^2 \sum_{i=1}^N \alpha_i (1+\gamma_i)^2 \mathbb{E}\left[ \left \| \sum_{j=t-b}^t\nabla \mathcal{L}_i({\theta}_i^{t})\right \|^2\right]\\
     &\leq c^2G^2\eta^2 \sum_{i=1}^N \alpha_i (1+\gamma_i)^2 \\
    &\leq (R-1)^2G^2\eta^2 \sum_{i=1}^N \alpha_i (1+\gamma_i)^2. 
\end{align*}

$c = t \mod R$. The first inequality rises from $\mathbb{E}\left[\|\mathbb{E}[X]-X\|^2 \right]\leq \mathbb{E}[X^2]$. The second inequality arises from Lemma~\ref{le:upper bounds} and Jensen's inequality.

Denote $\sum^N_{i=1}\alpha_i\gamma_i$ as $\overline{\gamma}$ and variance of $\gamma_i$ as $\sigma^2_{\gamma}$. Because 
\begin{align*}
    \sum^N_{i=1} \alpha_i(1+\gamma_i)^2 &= \mathbb{E}\left[(1+\gamma_i)^2\right] \\&= \left(\mathbb{E}\left[1+\gamma_i\right]\right)^2 + \mathbb{V}\left[1+\gamma_i\right] \\&=\left(\mathbb{E}\left[1+\gamma_i\right]\right)^2 + \mathbb{V}\left[\gamma_i\right] = (1+\overline{\gamma})^2 + \sigma^2_{\gamma}
\end{align*}
we will finally obtain 
\begin{align*}
    &\mathbb{E}\left[\sum_{i=1}^N \alpha_i (1+\gamma_i)^2\left \| \overline{\theta}^{t}- {\theta}^{t}_i \right \|^2\right] \leq (R-1)^2G^2\eta^2 \left( (1+\overline{\gamma})^2 + \sigma^2_{\gamma}\right).
\end{align*}
\end{proof}

\begin{lemma}\label{le:upper bound L2 with h}
 If there is contaminated data with distribution $Q_i$ and fraction $\epsilon_i$ , with Assumption~\ref{as:smoothness}, we can obtain that
\begin{align*}
    &\mathbb{E}\left[  \left \| \nabla\mathcal{L}_i(\theta_i^{t}) - g_i^{t} \right\|^2\right] \leq  (1-\epsilon_i)  (\pmb{\omega_D})^2 + \epsilon_i\left( L W_1\left( D , Q \right)_{\max}+ (\pmb{\omega_Q})+ (\pmb{\omega_D})\right)^2.
\end{align*}
\end{lemma}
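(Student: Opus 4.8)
The plan is to exploit the mixture structure $P_i(\theta_i^{t}) = (1-\epsilon_i)D_i(\theta_i^{t}) + \epsilon_i Q_i$. Since $g_i^{t}$ averages the gradient and weight terms over $n_i$ samples drawn from $P_i(\theta_i^{t})$, I would first split it into the part built from the clean ($D_i$) samples and the part from the contaminated ($Q_i$) samples, writing $g_i^{t} = (1-\epsilon_i)\widehat{g}_i^{D} + \epsilon_i\widehat{g}_i^{Q}$, where $\widehat{g}_i^{D} = \widehat{\nabla}\mathcal{L}_i(D_i(\theta_i^{t}),\theta_i^{t})$ and $\widehat{g}_i^{Q} = \widehat{\nabla}\mathcal{L}_i(Q_i,\theta_i^{t})$. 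Writing $\nabla\mathcal{L}_i(\theta_i^{t}) = (1-\epsilon_i)\nabla\mathcal{L}_i(\theta_i^{t}) + \epsilon_i\nabla\mathcal{L}_i(\theta_i^{t})$, the error becomes the convex combination $\nabla\mathcal{L}_i(\theta_i^{t}) - g_i^{t} = (1-\epsilon_i)(\nabla\mathcal{L}_i(\theta_i^{t}) - \widehat{g}_i^{D}) + \epsilon_i(\nabla\mathcal{L}_i(\theta_i^{t}) - \widehat{g}_i^{Q})$, so applying Jensen's inequality to the convex map $\|\cdot\|^2$ with weights $1-\epsilon_i$ and $\epsilon_i$ reproduces exactly the $(1-\epsilon_i)$ and $\epsilon_i$ split of the target bound.

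For the clean contribution I would invoke Eq.~\eqref{eq:wd}, which aggregates Lemmas~\ref{le:error L1} and~\ref{le:error L2D with h}: the expected squared estimation error of the full performative gradient computed from $D_i(\theta_i^{t})$-samples is at most $\pmb{\omega_D}^2$, giving $\mathbb{E}[\|\nabla\mathcal{L}_i(\theta_i^{t}) - \widehat{g}_i^{D}\|^2] \leq \pmb{\omega_D}^2$ and hence the first summand.

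The contaminated contribution is the substantive step. I would insert the empirical clean gradient as an intermediate term, $\nabla\mathcal{L}_i(\theta_i^{t}) - \widehat{g}_i^{Q} = (\nabla\mathcal{L}_i(\theta_i^{t}) - \widehat{g}_i^{D}) + (\widehat{g}_i^{D} - \widehat{g}_i^{Q})$, and apply the triangle inequality. The first bracket is once more the estimation error on $D_i$, controlled by $\pmb{\omega_D}$. The second bracket captures the effect of replacing $D_i$ by $Q_i$, and I would split it along the two parts of the performative gradient: for the first-order part, $\|\mathbb{E}_{Z\sim D_i}[\nabla\ell(Z;\theta_i^{t})] - \mathbb{E}_{Z\sim Q_i}[\nabla\ell(Z;\theta_i^{t})]\| \leq L\,W_1(D_i(\theta_i^{t}),Q_i) \leq L\,W_1(D,Q)_{\max}$, because Assumption~\ref{as:smoothness} taken at $\theta=\theta'$ makes $\nabla\ell(\cdot;\theta_i^{t})$ an $L$-Lipschitz map of $z$, and the coupling (Kantorovich--Rubinstein) characterization of $W_1$ then bounds the gap by $L$ times the transport cost; for the second-order ($\nabla\mathcal{L}_{i,2}$) part, the gap is controlled by $\pmb{\omega_Q}$ through Lemma~\ref{le:error L2D with h}. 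Collecting the three pieces yields $\|\nabla\mathcal{L}_i(\theta_i^{t}) - \widehat{g}_i^{Q}\| \leq \pmb{\omega_D} + L\,W_1(D,Q)_{\max} + \pmb{\omega_Q}$, and squaring and taking expectations produces the second summand.

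The hard part will be twofold. First, the clean/contaminated split $g_i^{t} = (1-\epsilon_i)\widehat{g}_i^{D} + \epsilon_i\widehat{g}_i^{Q}$ is not exact, since the finite-difference estimate $\frac{\widehat{df_i}}{d\theta}$ is formed once from the whole contaminated batch and is shared by both sub-gradients; I would need to argue that this coupling only perturbs the bound by lower-order terms that are already absorbed into $\pmb{\omega_D}$ and $\pmb{\omega_Q}$, both of which (by Lemma~\ref{le:error L2D with h}) already carry the $\pmb{\omega_F}$ error of $\frac{\widehat{df_i}}{d\theta}$. Second, the $\nabla\mathcal{L}_{i,2}$ contamination term involves the possibly unbounded score $\frac{\partial\log p}{\partial f}$, so bounding it by $\pmb{\omega_Q}$ requires the uniform bounds of Lemma~\ref{le:upper bounds} together with the $Q_i$-specific estimate of Lemma~\ref{le:error L2D with h}; this is precisely where the $F^2$ (non-vanishing) term in $\pmb{\omega_Q}$ originates, explaining why the contaminated term cannot be driven to zero by increasing $n_i$.
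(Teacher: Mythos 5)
Your overall skeleton does track the paper's proof: the same clean/contaminated split of $g_i^t$ with weights $(1-\epsilon_i)$ and $\epsilon_i$, the same invocation of Lemmas~\ref{le:error L1} and~\ref{le:error L2D with h} to produce the $(\pmb{\omega_D})^2$ term, and the same three-piece target $\pmb{\omega_D}+LW_1(D,Q)_{\max}+\pmb{\omega_Q}$ for the contaminated branch; your up-front Jensen step with weights $(1-\epsilon_i,\epsilon_i)$ is actually a tidier route to the linear-in-$\epsilon_i$ coefficients than the paper's expansion of the square with all cross terms followed by $\epsilon_i^2\leq\epsilon_i$ and $(1-\epsilon_i)^2\leq 1-\epsilon_i$. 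However, there is one concrete step in your contaminated branch that would fail as written. You pivot through the empirical clean gradient $\widehat{g}_i^{D}$, so the second-order piece of $\widehat{g}_i^{D}-\widehat{g}_i^{Q}$ is $\widehat{\nabla}\mathcal{L}_{i,2}(D_i(\theta_i^{t}),\theta_i^{t})-\widehat{\nabla}\mathcal{L}_{i,2}(Q_i,\theta_i^{t})$, and you assert this "gap is controlled by $\pmb{\omega_Q}$ through Lemma~\ref{le:error L2D with h}". That lemma only controls $\widehat{\nabla}\mathcal{L}_{i,2}(Q_i,\theta_i^{t})$, whose population counterpart vanishes because $Q_i$ is static; it says nothing about $\widehat{\nabla}\mathcal{L}_{i,2}(D_i(\theta_i^{t}),\theta_i^{t})$, which concentrates near the genuinely nonzero performative term $\nabla\mathcal{L}_{i,2}(\theta_i^{t})$. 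Concretely,
\begin{equation*}
\left\|\widehat{\nabla}\mathcal{L}_{i,2}(D_i(\theta_i^{t}),\theta_i^{t})-\widehat{\nabla}\mathcal{L}_{i,2}(Q_i,\theta_i^{t})\right\|
\;\leq\; \pmb{\omega_D} \;+\; \left\|\nabla\mathcal{L}_{i,2}(\theta_i^{t})\right\| \;+\; \pmb{\omega_Q},
\end{equation*}
and the middle term is not an estimation error: it cannot be absorbed into $\pmb{\omega_Q}$, and your $W_1$ budget has already been spent on the first-order mean difference via Kantorovich--Rubinstein. So your three buckets $\pmb{\omega_D}$, $LW_1$, $\pmb{\omega_Q}$ do not cover the whole error.

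The paper closes exactly this hole by choosing a different pivot: the \emph{population} gradient under $Q_i$ rather than $\widehat{g}_i^{D}$. It uses the two facts $\nabla\mathcal{L}_{i,2}(Q_i,\theta_i^{t})=0$ (a static distribution has no performative correction) and $\mathbb{E}\bigl[\|\widehat{\nabla}\mathcal{L}_{i,2}(Q_i,\theta_i^{t})\|^2\bigr]\leq(\pmb{\omega_Q})^2$, and then bounds the \emph{full} population difference $\nabla\mathcal{L}_i(\theta_i^{t})-\nabla\mathcal{L}_i(Q_i,\theta_i^{t})$ --- which contains $\nabla\mathcal{L}_{i,2}(\theta_i^{t})$ in its entirety, not just the first-order mean shift --- by $LW_1(D,Q)_{\max}$ using Assumption~\ref{as:smoothness}. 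In the paper's accounting the whole of $\nabla\mathcal{L}_{i,2}(\theta_i^{t})$ lives inside the $LW_1$ bucket, not the $\pmb{\omega_Q}$ bucket. To repair your argument you would either adopt that pivot, or insert the telescoping term $\nabla\mathcal{L}_{i,2}(\theta_i^{t})-\nabla\mathcal{L}_{i,2}(Q_i,\theta_i^{t})$ explicitly and bound it the way the paper bounds the full gradient difference; your rigorous first-order-only use of Kantorovich--Rubinstein, while cleaner in isolation, is precisely what leaves $\nabla\mathcal{L}_{i,2}(\theta_i^{t})$ homeless.
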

The notation $(\pmb{\omega_D})$ and $(\pmb{\omega_Q})$ is defined in Lemmas~\ref{le:error L1} and \ref{le:error L2D with h}. $W_1\left( D , Q \right)_{\max}$ is the largest $W_1$ distance between $D_i(\theta)$ and $Q_i$ for all clients $i\in \mathcal{V}$ among all iteration $t \in [0,T-1]$.

\begin{proof}[Proof of Lemma \ref{le:upper bound L2 with h}]

From Eq.~\eqref{eq:PG} we can obtain 
\begin{align*}
    \nabla\mathcal{L}_i(\theta_i^{t}) - g_i^{t} =& \nabla \mathcal{L}_{i,1}(\theta_i^{t}) - \widehat{\nabla} \mathcal{L}_{i,1}(\theta_i^{t}) +\nabla \mathcal{L}_{i,2}(\theta_i^{t}) - \widehat{\nabla} \mathcal{L}_{i,2}(\theta_i^{t})\\
     =& (1-\epsilon_i)\left(\nabla \mathcal{L}_{i,1}(\theta_i^{t})-\widehat{\nabla}\mathcal{L}_{{i,1}  }(D_i(\theta_i^{t}),\theta_i^{t})+\nabla \mathcal{L}_{i,2}(\theta_i^{t})-\widehat{\nabla}\mathcal{L}_{{i,2} }(D_i(\theta_i^{t}),\theta_i^{t})\right)\\
    &+ \epsilon_i \left(\nabla \mathcal{L}_{i,1}(\theta_i^{t}) - {\nabla}\mathcal{L}_{{i,1}  }(Q_i,\theta_i^{t})+ \nabla \mathcal{L}_{i,2}(\theta_i^{t}) - {\nabla}\mathcal{L}_{{i,2} }(Q_i,\theta_i^{t})\right)\\
    & + \epsilon_i \left(
    {\nabla}\mathcal{L}_{{i,1} z\sim Q_i}(z,\theta_i^{t})-\widehat{\nabla}\mathcal{L}_{{i,1}  }(Q_i,\theta_i^{t})+{\nabla}\mathcal{L}_{{i,2} }(Q_i,\theta_i^{t})-\widehat{\nabla}\mathcal{L}_{{i,2}}(Q_i,\theta_i^{t}) \right)\\
     =& (1-\epsilon_i)\left(\nabla \mathcal{L}_{i,1}(\theta_i^{t})-\widehat{\nabla}\mathcal{L}_{{i,1}  }(D_i(\theta_i^{t}),\theta_i^{t})+\nabla \mathcal{L}_{i,2}(\theta_i^{t})-\widehat{\nabla}\mathcal{L}_{{i,2} }(D_i(\theta_i^{t}),\theta_i^{t})\right)\\
    &+ \epsilon_i \left(\nabla \mathcal{L}_{i}(\theta_i^{t}) - {\nabla}\mathcal{L}_{{i} z\sim Q_i}(z,\theta_i^{t})\right) - \epsilon_i \left(\widehat{\nabla}\mathcal{L}_{{i,2} z\sim Q_i}(z,\theta_i^{t}) \right).
\end{align*}

The last equality arises form ${\nabla}\mathcal{L}_{{i,1} }(Q_i,\theta_i^{t})-\widehat{\nabla}\mathcal{L}_{{i,1} z\sim }(Q_i,\theta_i^{t}) = 0 \text{ and } {\nabla}\mathcal{L}_{{i,2} }(Q_i,\theta_i^{t}) = 0$.
Because $\mathbb{E}\left[\nabla \mathcal{L}_{i,1}(\theta_i^{t})-\widehat{\nabla}\mathcal{L}_{{i,1} }(D_i(\theta_i^{t}),\theta_i^{t})] \right] = 0$ and Assumption \ref{as:smoothness} of $L$-smoothness.

The proof of the Lemma 12 of \citet{pmlr-v139-izzo21a} shows that 
\begin{align*}
    \mathbb{E}\left[   \left \| \nabla \mathcal{L}_{i,1}(\theta_i^{t})-\widehat{\nabla}\mathcal{L}_{{i,1}}(D_i(\theta_i^{t}),\theta_i^{t})\right \|^2\right]  = \mathcal{O}\left(f_\varphi(\varphi_i;n_i)\right).
\end{align*} 

Finally, we can obtain that
\begin{align*}
    &\mathbb{E}\left[  \left \| \nabla\mathcal{L}_i(\theta_i^{t}) - g_i^{t} \right\|^2\right]\\
    &= (1-\epsilon_i)^2 \mathbb{E}\left[   \left \| \nabla \mathcal{L}_{i,1}(\theta_i^{t})-\widehat{\nabla}\mathcal{L}_{{i,1} }(D_i(\theta_i^{t}),\theta_i^{t})\right \|^2\right]+ (1-\epsilon_i)^2 \mathbb{E}\left[   \left \| \nabla \mathcal{L}_{i,2}(\theta_i^{t})-\widehat{\nabla}\mathcal{L}_{{i,2} }(D_i(\theta_i^{t}),\theta_i^{t})\right \|^2\right]\\
    &+ (1-\epsilon_i)^2  \epsilon_i\mathbb{E}\left[\left \langle \nabla \mathcal{L}_{i,1}(\theta_i^{t})-\widehat{\nabla}\mathcal{L}_{{i,1} }(D_i(\theta_i^{t}),\theta_i^{t}), \nabla \mathcal{L}_{i,2}(\theta_i^{t})-\widehat{\nabla}\mathcal{L}_{{i,2} }(D_i(\theta_i^{t}),\theta_i^{t})\right\rangle \right ]\\
    &+ \epsilon_i^2 \mathbb{E}\left[ \left \| \nabla \mathcal{L}_{i}(\theta_i^{t}) - {\nabla}\mathcal{L}_{{i} }(Q_i,\theta_i^{t})\right \|^2 \right] + \epsilon_i^2 \mathbb{E}\left[  \left \|\widehat{\nabla}\mathcal{L}_{{i,2} }(Q_i,\theta_i^{t}) \right \|^2 \right]\\
    &+ 2(1-\epsilon_i)\epsilon_i\mathbb{E}\left[\left \langle \nabla \mathcal{L}_{i,2}(\theta_i^{t})-\widehat{\nabla}\mathcal{L}_{{i,2} }(D_i(\theta_i^{t}),\theta_i^{t}), \nabla \mathcal{L}_{i}(\theta_i^{t}) - {\nabla}\mathcal{L}_{{i} }(Q_i,\theta_i^{t})\right\rangle \right ]\\
    &- 2(1-\epsilon_i)\epsilon_i\mathbb{E}\left[\left \langle \nabla \mathcal{L}_{i,2}(\theta_i^{t})-\widehat{\nabla}\mathcal{L}_{{i,2} }(D_i(\theta_i^{t}),\theta_i^{t}), \widehat{\nabla}\mathcal{L}_{{i,2} }(Q_i,\theta_i^{t})\right\rangle \right ]\\
    &- \epsilon^2_i\mathbb{E}\left[\left \langle \nabla \mathcal{L}_{i}(\theta_i^{t}) - {\nabla}\mathcal{L}_{{i} }(Q_i,\theta_i^{t}), \widehat{\nabla}\mathcal{L}_{{i,2} }(Q_i,\theta_i^{t})\right\rangle \right ]\\
    &\leq (1-\epsilon_i)^2 \mathbb{E}\left[   \left \| \nabla \mathcal{L}_{i,1}(\theta_i^{t})-\widehat{\nabla}\mathcal{L}_{{i,1} }(D_i(\theta_i^{t}),\theta_i^{t})\right \|^2\right]+ (1-\epsilon_i)^2 \mathbb{E}\left[   \left \| \nabla \mathcal{L}_{i,2}(\theta_i^{t})-\widehat{\nabla}\mathcal{L}_{{i,2} }(D_i(\theta_i^{t}),\theta_i^{t})\right \|^2\right]\\
    &+(1-\epsilon_i)^2 \mathbb{E}\left[   \left \| \nabla \mathcal{L}_{i,1}(\theta_i^{t})-\widehat{\nabla}\mathcal{L}_{{i,1} }(D_i(\theta_i^{t}),\theta_i^{t})\right \|\left \| \nabla \mathcal{L}_{i,2}(\theta_i^{t})-\widehat{\nabla}\mathcal{L}_{{i,2} }(D_i(\theta_i^{t}),\theta_i^{t})\right \|\right]\\
    &+ \epsilon_i^2 L^2 \left \| D_i(\theta_i^{t}) - Q_i\right \|^2+ \epsilon_i^2 \mathbb{E}\left[  \left \|\widehat{\nabla}\mathcal{L}_{{i,2} }(Q_i,\theta_i^{t}) \right \|^2 \right]\\
    &+ 2(1-\epsilon_i)\epsilon_iL\left \| D_i(\theta_i^{t}) - Q_i\right \|\mathbb{E}\left[\left \| \nabla \mathcal{L}_{i,2}(\theta_i^{t})-\widehat{\nabla}\mathcal{L}_{{i,2} }(D_i(\theta_i^{t}),\theta_i^{t})\right\| \right ]\\
    & + 2(1-\epsilon_i)\epsilon_i\mathbb{E}\left[\left \| \nabla \mathcal{L}_{i,2}(\theta_i^{t})-\widehat{\nabla}\mathcal{L}_{{i,2} }(D_i(\theta_i^{t}),\theta_i^{t})\right \|  \left\| \widehat{\nabla}\mathcal{L}_{{i,2} }(Q_i,\theta_i^{t})\right\|\right ]\\
    &+ \epsilon^2_iL\left \| D_i(\theta_i^{t}) - Q_i\right \|\mathbb{E}\left[\left \| \widehat{\nabla}\mathcal{L}_{{i,2} }(Q_i,\theta_i^{t})\right\| \right]\\
     &\leq (1-\epsilon_i)^2 (\pmb{\omega_D})^2 + (1-\epsilon_i)^2 \mathcal{O}\left( f_\varphi(\varphi_i;n_i)\right)(\pmb{\omega_D})+ \epsilon_i^2\left( L^2 W^2_1\left( D , Q \right)_{\max}+ (\pmb{\omega_Q})^2+   W_1\left( D, Q \right)_{\max}(\pmb{\omega_Q})\right)\\
    & +2(1-\epsilon_i)\epsilon_i\left(LW_1\left( D , Q \right)_{\max}(\pmb{\omega_D}) + (\pmb{\omega_D})(\pmb{\omega_Q})\right)\\
\end{align*}

The notation $(\pmb{\omega_D})$ and $(\pmb{\omega_Q})$ is defined in Lemma~\ref{le:error L2D with h}. $W_1\left( D , Q \right)_{\max}$ is the largest $W_1$ distance between $D_i(\theta)$ and $Q_i$ for all clients $i\in \mathcal{V}$ among all iteration $t \in [0,T-1]$.
To simplify the upper bound first we use the fact that 
\begin{align}
    (1-\epsilon_i)^2 (\pmb{\omega_D})^2 + (1-\epsilon_i)^2 \mathcal{O}(f_\varphi(\varphi_i;n_i))(\pmb{\omega_D})  \approx   (1-\epsilon_i)^2 (\pmb{\omega_D})^2
\end{align}
because the estimation error of $\nabla \mathcal{L}_{i,1}$ is much smaller than $\nabla \mathcal{L}_{i,2}$ by using Lemmas~\ref{le:error L1} and \ref{le:error L2D with h}. Apply the fact that $\epsilon_i \in [0,1]$ and we can obtain
\begin{align*}
     &(1-\epsilon_i)^2 (\pmb{\omega_D})^2 + (1-\epsilon_i)^2 \mathcal{O}\left( f_\varphi(\varphi_i;n_i)\right)(\pmb{\omega_D})+ \epsilon_i^2\left( L^2 W^2_1\left( D , Q \right)_{\max}+ (\pmb{\omega_Q})^2+   W_1\left( D, Q \right)_{\max}(\pmb{\omega_Q})\right)\\
    & +2(1-\epsilon_i)\epsilon_i\left(LW_1\left( D , Q \right)_{\max}(\pmb{\omega_D}) + (\pmb{\omega_D})(\pmb{\omega_Q})\right)\\
    &\leq (1-\epsilon_i) (\pmb{\omega_D})^2 + \epsilon_i \left( L^2 W^2_1\left( D , Q \right)_{\max}+ (\pmb{\omega_Q})^2+   W_1\left( D, Q \right)_{\max}(\pmb{\omega_Q}) + LW_1\left( D , Q \right)_{\max}(\pmb{\omega_D}) + (\pmb{\omega_D})(\pmb{\omega_Q})\right)\\
    &\leq  (1-\epsilon_i)  (\pmb{\omega_D})^2 + \epsilon_i\left( L W_1\left( D , Q \right)_{\max}+ (\pmb{\omega_Q})+ (\pmb{\omega_D})\right)^2. 
\end{align*}
\end{proof}

\begin{lemma}[Sensitivity of contribution Dynamic]
    \label{le:1}
    For any client $i \in \mathcal{V}$, if Assumption \ref{as:Sensitivity}and \ref{as:Twice continuously differentiable of f} hold, then with $D_i(\theta) = \nu_{i,1}(\theta) D_{i,1}(\theta) + \left(1 - \nu_{i,1}(\theta)\right) D_{i,2}(\theta)$ for all $\theta, \theta' \in \Theta$, we have:
    \begin{align*}
        W_1(D_i(\theta),D_i(\theta'))\leq (\gamma_i + W_1(D_{i,1}(\theta),D_{i,2}(\theta))F) \|\theta-\theta'\|.
    \end{align*}
\end{lemma}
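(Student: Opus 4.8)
The plan is to bound the Wasserstein distance between the two mixtures by splitting the effect of changing $\theta$ into a change in the mixing weight $\nu_{i,1}$ and a change in the two component distributions, handled separately through the triangle inequality for $W_1$. Concretely, I would introduce the intermediate distribution $\hat{D} := \nu_{i,1}(\theta') D_{i,1}(\theta) + (1-\nu_{i,1}(\theta')) D_{i,2}(\theta)$, which uses the component distributions evaluated at $\theta$ but the mixing weights evaluated at $\theta'$. Then $W_1(D_i(\theta), D_i(\theta')) \le W_1(D_i(\theta), \hat D) + W_1(\hat D, D_i(\theta'))$, and the two summands isolate the weight change and the component change, respectively.

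For the first summand the two mixtures share the same component distributions $D_{i,1}(\theta), D_{i,2}(\theta)$ and differ only in the weight, from $\nu_{i,1}(\theta)$ to $\nu_{i,1}(\theta')$. I would bound this by a direct coupling argument: writing $p = \nu_{i,1}(\theta)$, $q = \nu_{i,1}(\theta')$ and assuming WLOG $p \ge q$, both mixtures contain the common part $q D_{i,1}(\theta) + (1-p) D_{i,2}(\theta)$, while the residual mass $p - q$ is carried by $D_{i,1}(\theta)$ in one mixture and by $D_{i,2}(\theta)$ in the other. Transporting the common part by the identity (zero cost) and the residual mass through the optimal coupling between $D_{i,1}(\theta)$ and $D_{i,2}(\theta)$ yields a valid coupling, so $W_1(D_i(\theta), \hat D) \le |\nu_{i,1}(\theta) - \nu_{i,1}(\theta')| \, W_1(D_{i,1}(\theta), D_{i,2}(\theta))$. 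Since in the contribution dynamics $f_{i,1} = \nu_{i,1}$, the mean value theorem together with the bound $\|df_i/d\theta\| \le F$ from Lemma~\ref{le:upper bounds} gives $|\nu_{i,1}(\theta) - \nu_{i,1}(\theta')| \le F \|\theta - \theta'\|$, so this summand is at most $F\, W_1(D_{i,1}(\theta), D_{i,2}(\theta)) \|\theta - \theta'\|$.

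For the second summand the two mixtures share the same weights $\nu_{i,1}(\theta')$ and $1-\nu_{i,1}(\theta')$ and differ only in the components, from $D_{i,k}(\theta)$ to $D_{i,k}(\theta')$. Here I would invoke the joint convexity of $W_1$ under identical mixing weights, $W_1(\sum_k a_k \mu_k, \sum_k a_k \nu_k) \le \sum_k a_k W_1(\mu_k, \nu_k)$, and apply the $\gamma_i$-sensitivity of Assumption~\ref{as:Sensitivity} to each group distribution $D_{i,k}$. Since the weights sum to one, this gives $W_1(\hat D, D_i(\theta')) \le \gamma_i \|\theta - \theta'\|$. Adding the two bounds yields the claimed $(\gamma_i + W_1(D_{i,1}(\theta), D_{i,2}(\theta)) F)\|\theta - \theta'\|$.

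I expect the main obstacle to be making the coupling argument for the weight change fully rigorous: one must verify that gluing the identity coupling on the shared mass with the optimal $D_{i,1}(\theta)$-to-$D_{i,2}(\theta)$ coupling on the residual mass is indeed a coupling of the two mixtures with the correct marginals, and treat the $p < q$ case symmetrically. A secondary subtlety is that Assumption~\ref{as:Sensitivity} is stated for the aggregate map $D_i$, so I must justify reading it at the level of each component $D_{i,k}$; in the pure contribution-dynamics case the components are fixed and this term vanishes, but the lemma is stated for the more general mixture where the components may also shift.
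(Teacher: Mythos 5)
Your proposal is correct and takes essentially the same approach as the paper: the paper performs the identical weight-change/component-change decomposition in a single mass-splitting step (routing the excess weight mass from $D_{i,1}(\theta)$ to $D_{i,2}(\theta')$ through $D_{i,2}(\theta)$ via the triangle inequality, which is exactly the transport plan your intermediate distribution $\hat{D}$ encodes), then applies the component-level $\gamma_i$-sensitivity and bounds $|\nu_{i,1}(\theta)-\nu_{i,1}(\theta')|$ by $F\|\theta-\theta'\|$ using $f_i(\theta)=\nu_{i,1}(\theta)$ and Lemma~\ref{le:upper bounds}, just as you do. Your flagged subtlety about reading Assumption~\ref{as:Sensitivity} at the level of the components $D_{i,k}$ is real but applies equally to the paper's own proof, which invokes $W_1(D_{i,k}(\theta),D_{i,k}(\theta'))\leq\gamma_i\|\theta-\theta'\|$ without further justification.
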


\begin{proof}[Proof of Lemma \ref{le:1}]
In the proof, we consider two groups with fractions $\nu_{i,1}$ and $\nu_{i,2} = 1 - \nu_{i,1}$. When \( \nu_{i,1}(\theta) \geq \nu_{i,1}(\theta') \), we obtain:
    \begin{align*}
        W_1(D_i(\theta), D_i(\theta')) 
        & \leq \nu_{i,1}(\theta') W_1(D_{i,1}(\theta), D_{i,1}(\theta')) + \left(1 - \nu_{i,1}(\theta)\right) W_1(D_{i,2}(\theta), D_{i,2}(\theta')) \\
        & \quad + (\nu_{i,1}(\theta) - \nu_{i,1}(\theta')) \left(W_1(D_{i,1}(\theta), D_{i,2}(\theta)) + W_1\left(D_{i,2}(\theta), D_{i,2}(\theta')\right)\right) \\
        & \leq \nu_{i,1}(\theta') \gamma_i \|\theta - \theta'\| + (1 - \nu_{i,1}(\theta')) \gamma_i \|\theta - \theta'\| \\
        & \quad + (\nu_{i,1}(\theta) - \nu_{i,1}(\theta')) W_1(D_{i,1}(\theta), D_{i,2}(\theta)) \\
        & = \gamma_i \|\theta - \theta'\| + (\nu_{i,1}(\theta) - \nu_{i,1}(\theta')) W_1(D_{i,1}(\theta), D_{i,2}(\theta)).
    \end{align*}
    
The first inequality arises from the triangle inequality for the Wasserstein-1 distance. The second inequality arises from Assumption \ref{as:Sensitivity}. Similarly, when \( \nu_{i,1}(\theta) \leq \nu_{i,1}(\theta') \), we obtain:
    \begin{align*}
        W_1(D_i(\theta), D_i(\theta')) & \leq \gamma_i \|\theta - \theta'\| + (\nu_{i,1}(\theta') - \nu_{i,1}(\theta)) W_1(D_{i,1}(\theta), D_{i,2}(\theta)).
    \end{align*}

The estimate function $f_i(\theta) = \nu_{i,1}$ here. By combining these results and Lemma~\ref{le:upper bounds} that $\left\|\frac{df_i(\theta)}{d\theta}\right\|$ has the upper bound $F$, we finally obtain:
    \begin{align*}
        W_1(D_i(\theta), D_i(\theta'))
        &\leq \gamma_i \|\theta - \theta'\| + \|\nu_{i,1}(\theta') - \nu_{i,1}(\theta)\| \, W_1(D_{i,1}(\theta), D_{i,2}(\theta))  \\
        & \leq \left(\gamma_i + W_1(D_{i,1}(\theta), D_{i,2}(\theta))F \right) \|\theta - \theta'\|.
    \end{align*}
\end{proof}

\section{Case Studies}\label{app:case studies}
    \subsection{Pricing with Dynamic Demands} Consider a company that aims to find the proper prices for various goods. Let $\theta\in\mathbb{R}^d$ be the prices associated with the set of $d$ goods, and $Z_i\in\mathbb{R}^d$ be the retailer $i$'s demands for these goods. We assume $Z_i\sim \mathcal{N}(\mu(\theta),\sigma^2)$ is model dependent Gaussian distribution, i.e., mean demand for each good decreases linearly as the price increases. $\mu(\theta)= \mu_0-\gamma_i\theta$, here $\gamma_i\geq 0$ measures the price sensitivity and varies across different retailers. The goal is to maximizes total revenue over all retailers $\sum_{i=1}^{N} \alpha_i \mathbb{E}_{Z_i \sim D_i(\theta)} [\theta^T Z_i]$. 
    \subsection{Pricing with Dynamic Contribution} Consider a similar setting as above where a company aims to find prices for various goods that maximize the total revenue. Suppose the retailers it faces have consumers from $K$ groups (e.g., old/young) and these consumers have fixed demands but they have options to purchase the goods from other companies. Here we assume the demands of retailer $i$'s consumers from group $k\in[K]$ have fixed distribution $D_{i,k}=\mathcal{N}(\mu_{i,k},\sigma_i^2)$, but the fraction of groups change dynamically based on prices $\theta$. Let $r_{i,k}(\theta) = B-\mathbb{E}_{Z_i\sim D_{i,k}}[\theta^T Z_i]$ be the expected remaining budget of group $k$ after purchasing the goods (with initial budget $B$) and assume the fraction of group $k$ is $\nu_{i,k}(\theta)=\frac{r_{i,k}(\theta)}{\sum_{k=1}^K r_{i,k}(\theta)}$ (i.e., groups with more remaining budgets have more incentives to stay in the system). Then the retailer $i$'s total demands would follow the mixture of distributions $D_i(\theta) = \sum_{k=1}^K\nu_{i,k}(\theta)D_{i,k}$.
     \subsection{Binary Strategic Classification} Consider an FL system is used to make binary decisions about human agents (e.g., loan application, hiring, college admission), where client $i$ with data $Z_i=(X_i,Y_i)$ may manipulate their features strategically to increasing their chances of receiving favorable outcomes at the lowest costs. Suppose the clients are assigned positive decisions whenever $1/\left(1+\exp(-X^T\theta)\right)\geq 1/2$ and the cost it takes to manipulate feature from $X_i$ to $X_i'$ is $\frac{1}{2\gamma_i}||X'-X||$, then strategic clients would react to the decision model $\theta$ by manipulating their features to $\arg\max_{X_i'}\left(- \langle X'_i, \theta \rangle - \frac{1}{2\gamma_i} \|X_i' - X_i\|_2^2 \right) = X_i - \gamma_i\theta$. For the initial data, we acquire it from a synthetic data where $X|Y=y\sim \mathcal{N}(\mu_y,\sigma_y^2)$ for $y\in\{0,1\}$ and two realistic \textit{Adult} and \textit{Give Me Some Credit} data. The details of these data are in Appendix~\ref{app:dataset}. The goal is to minimizes prediction loss over all clients.
    \subsection{House Pricing Regression} Consider a regression task that aims to find the proper listing prices for houses based on features such as size, age, number of bedrooms. Suppose the features of houses in district $i$ follow $X_i\sim \mathcal{N}(\mu_i,\sigma_i^2)$. Given predictive model $\theta$, let the listing price be $h(X_i)=\theta^TX_i$. Since the houses with higher listing prices tend to lower the demand, the
    actual selling price $Y$ depends on $\theta$ and we assume $Y_i=(a_i-\gamma_i\theta)^TX_i+\epsilon$, where $\epsilon\sim \mathcal{N}(0,\sigma^2_{n})$ is Gaussian noise. The goal is to minimizes  prediction error  
    \subsection{Regression with dynamic contribution} Consider an example of retail inventory management, where the regression model $\theta$ predicts product sales considering features like seasonal trends and advertising. Distributor $i$ has retailers from two groups $k\in\{1,2\}$, and each retailer's data distribution is fixed. The fraction of retailers from each group is $\nu_{i,k}(\theta) =\frac{\ell_{i,-k}(\theta)+c}{\sum_{k'\in\{1,2\}}\left(\ell_{i,k'}(\theta)+c\right)}$ where $-k=\{1,2\}\setminus k$ because the retailers are more likely to stop purchasing if they experience higher prediction error $\ell_{i,k}(\theta)$. The goal is to find $\theta$ that minimizes expected prediction error $\sum_{i=1}^{N} \alpha_i \mathbb{E}_{(X_i,Y_i) \sim D_i(\theta)} \left[(\theta^TX_i-Y_i)^2\right]$, where $Y_i$ is a weighted combination of fixed distributions.
    
\section{Training details}\label{Training details}
The experimental setup and parameter selections follow the methodology established in \cite{pmlr-v139-izzo21a,jin2023performative}, ensuring consistency and comparability with existing results.

\subsection{Realistic Data}\label{app:dataset}
\begin{itemize}
    \item \underline{\textit{Adult}} data \cite{misc_adult_2} where the goal is to predict whether a person's annual income exceeds \$50K based on 14 features (e.g.,  age, years of education).
    \item \underline{\textit{Give Me Some Credit}} data \cite{GiveMeSomeCredit} which has 11 features (e.g., monthly income, debt ratio) and can be utilized to predict whether a person
has experienced a 90-day past-due delinquency or worse.     
\end{itemize}

\subsection{Loss Functions and Ridge Penalty}
We use ridge-regularized cross-entropy loss for all binary classification cases, and the ridge penalty is $0.01$. We use ridge-regularized squared loss for all regression cases, and the ridge penalty is $\frac{10}{3}$.

\subsection{Parameters and Distributions}\label{app:parameters}
The experimental parameters are listed in Table \ref{tab:parameters}.

\begin{table*}[ht]
    \centering
     \caption{Parameters in the Experiments}
        \resizebox{0.9\textwidth}{!}{%
    \begin{tabular}{lllllll}
        \toprule
        Figure/Table & $\eta$ & $R$ & H & $n_i$ & $\epsilon_i$  \\
        \midrule
         Table I: \textsc{PG} & 0.01 & $-$ & 1 & 500 & 0 \\
         Table I: \textsc{ProFL} & 0.01 & 5 & 1 & 500 & 0 \\
    \midrule
        Figure 2a & 0.001 & 5 & 25 & 500 & 0, 0.1,0.2,0.4  \\
        Figure 2b & 0.0001 & 3 & 10 & 60 & 0  \\
        Figure 2c & 0.001 & 5 & 100 & 1000 & 0  \\
        Figure 3a & 0.001 & 5 & 20 & 50,500,5000 & 0  \\
        Figure 3b & 0.01,0.03,0.001 & 5 & 20 & 2500 & 0 \\
        Figure 3c & 0.001 & 5 & 4,20,100 & 500 & 0 \\
          Figure 4a  & 0.03 & 10 & 500 & 500 & 0 \\
        Figure 4b & 0.03 & 500 & 1 & 500 & 0 \\
        Figure 4c & 0.01 & 5 & 1 & 5812 & 0\\
        Figure 4d & 0.03 & 5 & 1 & 2368 & 0 \\

        Figure 5a,5b & \textsc{PFL}:0.005 \textsc{ProFL}:0.001 & 5 & 1 & 500 & 0 \\

        Figure 6a& \textsc{PFL}:0.005 \textsc{ProFL}:0.001 & 5 & 0.2, 0.5, 1 & 500 & 0\\
        Figure 6b & \textsc{PFL}:0.005 \textsc{ProFL}:0.001 & 5 & 1 & 5, 50, 500 & 0 \\

        \bottomrule

    \end{tabular}
    }
    \label{tab:parameters}
\end{table*}

\subsection{Experiments Compute Resources}\label{app:cpu}
CPU: 11th Gen Intel(R) Core(TM) i7-11800H @ 2.30GHz   2.30 GHz.

\bibliography{example_paper}

@misc{bracale2024learning,
      title={Learning the Distribution Map in Reverse Causal Performative Prediction}, 
      author={Daniele Bracale and Subha Maity and Moulinath Banerjee and Yuekai Sun},
      year={2024},
      eprint={2405.15172},
      archivePrefix={arXiv},
      primaryClass={stat.ML},
      url={https://arxiv.org/abs/2405.15172}, 
}

@article{somerstep2024learning,
  title={Learning In Reverse Causal Strategic Environments With Ramifications on Two Sided Markets},
  author={Seamus Somerstep and Yuekai Sun and Ya'acov Ritov},
  journal={ArXiv},
  year={2024},
  volume={abs/2404.13240},
  url={https://api.semanticscholar.org/CorpusID:269293784}
}

@inproceedings{jin2023performative,
  author    = {Kun Jin and Tongxin Yin and Zhongzhu Chen and Zeyu Sun and Xueru Zhang and Yang Liu and Mingyan Liu},
  title     = {Performative Federated Learning: A Solution to Model-Dependent and Heterogeneous Distribution Shifts},
  booktitle = {Proceedings of the AAAI Conference on Artificial Intelligence},
  volume    = {38},
  pages     = {12938--12946},
  year      = {2024}
}

@inproceedings{ensign2018runaway,
  title={Runaway feedback loops in predictive policing},
  author={Ensign, Danielle and Friedler, Sorelle A and Neville, Scott and Scheidegger, Carlos and Venkatasubramanian, Suresh},
  booktitle={Conference on fairness, accountability and transparency},
  pages={160--171},
  year={2018},
  organization={PMLR}
}

@misc{xie2024nonlinear,
      title={Non-linear Welfare-Aware Strategic Learning}, 
      author={Tian Xie and Xueru Zhang},
      year={2024},
      eprint={2405.01810},
      archivePrefix={arXiv},
      primaryClass={cs.AI}
}

@InProceedings{pmlr-v162-zheng22c,
  title = 	 {Online Decision Transformer},
  author =       {Zheng, Qinqing and Zhang, Amy and Grover, Aditya},
  booktitle = 	 {Proceedings of the 39th International Conference on Machine Learning},
  pages = 	 {27042--27059},
  year = 	 {2022},
  editor = 	 {Chaudhuri, Kamalika and Jegelka, Stefanie and Song, Le and Szepesvari, Csaba and Niu, Gang and Sabato, Sivan},
  volume = 	 {162},
  series = 	 {Proceedings of Machine Learning Research},
  month = 	 {17--23 Jul},
  publisher =    {PMLR},
  pdf = 	 {https://proceedings.mlr.press/v162/zheng22c/zheng22c.pdf},
  url = 	 {https://proceedings.mlr.press/v162/zheng22c.html},
  abstract = 	 {Recent work has shown that offline reinforcement learning (RL) can be formulated as a sequence modeling problem (Chen et al., 2021; Janner et al., 2021) and solved via approaches similar to large-scale language modeling. However, any practical instantiation of RL also involves an online component, where policies pretrained on passive offline datasets are finetuned via task-specific interactions with the environment. We propose Online Decision Transformers (ODT), an RL algorithm based on sequence modeling that blends offline pretraining with online finetuning in a unified framework. Our framework uses sequence-level entropy regularizers in conjunction with autoregressive modeling objectives for sample-efficient exploration and finetuning. Empirically, we show that ODT is competitive with the state-of-the-art in absolute performance on the D4RL benchmark but shows much more significant gains during the finetuning procedure.}
}

@inproceedings{miller2021outside,
  title={Outside the echo chamber: Optimizing the performative risk},
  author={Miller, John P and Perdomo, Juan C and Zrnic, Tijana},
  booktitle={International Conference on Machine Learning},
  pages={7710--7720},
  year={2021},
  organization={PMLR}
}

@inproceedings{mendler2022anticipating,
 author = {Mendler-D\"{u}nner, Celestine and Ding, Frances and Wang, Yixin},
 booktitle = {Advances in Neural Information Processing Systems},
 editor = {S. Koyejo and S. Mohamed and A. Agarwal and D. Belgrave and K. Cho and A. Oh},
 pages = {31171--31185},
 publisher = {Curran Associates, Inc.},
 title = {Anticipating Performativity by Predicting from Predictions},
 url = {https://proceedings.neurips.cc/paper_files/paper/2022/file/ca09b375e8e2b2c789698c079a9fc51c-Paper-Conference.pdf},
 volume = {35},
 year = {2022}
}

@inproceedings{perdomo2020performative,
  title={Performative prediction},
  author={Perdomo, Juan and Zrnic, Tijana and Mendler-D{\"u}nner, Celestine and Hardt, Moritz},
  booktitle={International Conference on Machine Learning},
  pages={7599--7609},
  year={2020},
  organization={PMLR}
}

@InProceedings{chen2023performative,
  title = 	 {Performative Prediction with Bandit Feedback: Learning through Reparameterization},
  author =       {Chen, Yatong and Tang, Wei and Ho, Chien-Ju and Liu, Yang},
  booktitle = 	 {Proceedings of the 41st International Conference on Machine Learning},
  pages = 	 {7298--7324},
  year = 	 {2024},
  editor = 	 {Salakhutdinov, Ruslan and Kolter, Zico and Heller, Katherine and Weller, Adrian and Oliver, Nuria and Scarlett, Jonathan and Berkenkamp, Felix},
  volume = 	 {235},
  series = 	 {Proceedings of Machine Learning Research},
  month = 	 {21--27 Jul},
  publisher =    {PMLR},
  pdf = 	 {https://raw.githubusercontent.com/mlresearch/v235/main/assets/chen24al/chen24al.pdf},
  url = 	 {https://proceedings.mlr.press/v235/chen24al.html},
  abstract = 	 {Performative prediction, as introduced by Perdomo et al., is a framework for studying social prediction in which the data distribution itself changes in response to the deployment of a model. Existing work in this field usually hinges on three assumptions that are easily violated in practice: that the performative risk is convex over the deployed model, that the mapping from the model to the data distribution is known to the model designer in advance, and the first-order information of the performative risk is available. In this paper, we initiate the study of performative prediction problems that do not require these assumptions. Specifically, we develop a parameterization framework that parametrizes the performative prediction objective as a function of the induced data distribution. We also develop a two-level zeroth-order optimization procedure, where the first level performs iterative optimization on the distribution parameter space, and the second level learns the model that induced a particular target distribution parameter at each iteration. Under mild conditions, this reparameterization allows us to transform the non-convex objective into a convex one and achieve provable regret guarantees. In particular, we provide a regret bound that is sublinear in the total number of performative samples taken and is only polynomial in the dimension of the model parameter.}
}

@article{Zhao2022OptimizingTP,
  title={Optimizing the Performative Risk under Weak Convexity Assumptions},
  author={Yulai Zhao},
  journal={OPT2022: 14th Annual Workshop on Optimization for Machine Learning},
  year={2022},
  volume={abs/2209.00771}
}

@InProceedings{pmlr-v139-izzo21a,
  title = 	 {How to Learn when Data Reacts to Your Model: Performative Gradient Descent},
  author =       {Izzo, Zachary and Ying, Lexing and Zou, James},
  booktitle = 	 {Proceedings of the 38th International Conference on Machine Learning},
  pages = 	 {4641--4650},
  year = 	 {2021},
  editor = 	 {Meila, Marina and Zhang, Tong},
  volume = 	 {139},
  series = 	 {Proceedings of Machine Learning Research},
  month = 	 {18--24 Jul},
  publisher =    {PMLR},
  pdf = 	 {http://proceedings.mlr.press/v139/izzo21a/izzo21a.pdf},
  url = 	 {https://proceedings.mlr.press/v139/izzo21a.html},
  abstract = 	 {Performative distribution shift captures the setting where the choice of which ML model is deployed changes the data distribution. For example, a bank which uses the number of open credit lines to determine a customer’s risk of default on a loan may induce customers to open more credit lines in order to improve their chances of being approved. Because of the interactions between the model and data distribution, finding the optimal model parameters is challenging. Works in this area have focused on finding stable points, which can be far from optimal. Here we introduce \emph{performative gradient descent} (PerfGD), an algorithm for computing performatively optimal points. Under regularity assumptions on the performative loss, PerfGD is the first algorithm which provably converges to an optimal point. PerfGD explicitly captures how changes in the model affects the data distribution and is simple to use. We support our findings with theory and experiments.}
}

@Conference{Hardt2016a,
  author    = {Moritz Hardt and Nimrod Megiddo and Christos Papadimitriou and Mary Wootters},
  booktitle = {Proceedings of the 2016 ACM Conference on Innovations in Theoretical Computer Science},
  title     = {Strategic Classification},
  year      = {2016},
  pages     = {111–122},
}

@InProceedings{zhang2022,
  author    = {Zhang, Xueru and Khalili, Mohammad Mahdi and Jin, Kun and Naghizadeh, Parinaz and Liu, Mingyan},
  booktitle = {Proceedings of the 39th International Conference on Machine Learning},
  title     = {Fairness Interventions as ({D}is){I}ncentives for Strategic Manipulation},
  year      = {2022},
  pages     = {26239--26264},
}

@inproceedings{dean2022,
author = {Dean, Sarah and Morgenstern, Jamie},
title = {Preference Dynamics Under Personalized Recommendations},
year = {2022},
publisher = {Association for Computing Machinery},
doi = {10.1145/3490486.3538346},
booktitle = {Proceedings of the 23rd ACM Conference on Economics and Computation},
pages = {795–816},
series = {EC '22}
}

@incollection{huber1992robust,
  title={Robust estimation of a location parameter},
  author={Huber, Peter J},
  booktitle={Breakthroughs in statistics: Methodology and distribution},
  pages={492--518},
  year={1992},
  publisher={Springer}
}

@article{polyak1964some,
  title={Some methods of speeding up the convergence of iteration methods},
  author={Polyak, Boris T},
  journal={Ussr computational mathematics and mathematical physics},
  volume={4},
  number={5},
  pages={1-17},
  year={1964},
  publisher={Elsevier},
url={https://api.semanticscholar.org/CorpusID:120243018}
}

@inproceedings{mendler2020stochastic,
 author = {Mendler-D\"{u}nner, Celestine and Perdomo, Juan and Zrnic, Tijana and Hardt, Moritz},
 booktitle = {Advances in Neural Information Processing Systems},
 editor = {H. Larochelle and M. Ranzato and R. Hadsell and M.F. Balcan and H. Lin},
 pages = {4929--4939},
 publisher = {Curran Associates, Inc.},
 title = {Stochastic Optimization for Performative Prediction},
 url = {https://proceedings.neurips.cc/paper_files/paper/2020/file/33e75ff09dd601bbe69f351039152189-Paper.pdf},
 volume = {33},
 year = {2020}
}

@InProceedings{pmlr-v97-diakonikolas19a,
  title = 	 {Sever: A Robust Meta-Algorithm for Stochastic Optimization},
  author =       {Diakonikolas, Ilias and Kamath, Gautam and Kane, Daniel and Li, Jerry and Steinhardt, Jacob and Stewart, Alistair},
  booktitle = 	 {Proceedings of the 36th International Conference on Machine Learning},
  pages = 	 {1596--1606},
  year = 	 {2019},
  editor = 	 {Chaudhuri, Kamalika and Salakhutdinov, Ruslan},
  volume = 	 {97},
  series = 	 {Proceedings of Machine Learning Research},
  month = 	 {09--15 Jun},
  publisher =    {PMLR},
  pdf = 	 {http://proceedings.mlr.press/v97/diakonikolas19a/diakonikolas19a.pdf},
  url = 	 {https://proceedings.mlr.press/v97/diakonikolas19a.html},
  abstract = 	 {In high dimensions, most machine learning methods are brittle to even a small fraction of structured outliers. To address this, we introduce a new meta-algorithm that can take in a base learner such as least squares or stochastic gradient descent, and harden the learner to be resistant to outliers. Our method, Sever, possesses strong theoretical guarantees yet is also highly scalable – beyond running the base learner itself, it only requires computing the top singular vector of a certain n{\texttimes}d matrix. We apply Sever on a drug design dataset and a spam classification dataset, and find that in both cases it has substantially greater robustness than several baselines. On the spam dataset, with 1% corruptions, we achieved 7.4% test error, compared to 13.4%-20.5% for the baselines, and 3% error on the uncorrupted dataset. Similarly, on the drug design dataset, with 10% corruptions, we achieved 1.42 mean-squared error test error, compared to 1.51-2.33 for the baselines, and 1.23 error on the uncorrupted dataset.}
}

@misc{misc_adult_2,
  author       = {Becker, Barry and Kohavi, Ronny},
  title        = {{Adult}},
  year         = {1996},
  howpublished = {UCI Machine Learning Repository},
  note         = {{DOI}: https://doi.org/10.24432/C5XW20}
}

@misc{GiveMeSomeCredit,
    author = {Credit Fusion, Will Cukierski},
    title = {Give Me Some Credit},
    publisher = {Kaggle},
    year = {2011},
    url = {https://kaggle.com/competitions/GiveMeSomeCredit}
}

@misc{guo2021towards,
      title={Towards Federated Learning on Time-Evolving Heterogeneous Data}, 
      author={Yongxin Guo and Tao Lin and Xiaoying Tang},
      year={2023},
      eprint={2112.13246},
      archivePrefix={arXiv},
      primaryClass={cs.LG},
      url={https://arxiv.org/abs/2112.13246}, 
}

@inproceedings{ma2022continual,
  title     = {Continual Federated Learning Based on Knowledge Distillation},
  author    = {Ma, Yuhang and Xie, Zhongle and Wang, Jue and Chen, Ke and Shou, Lidan},
  booktitle = {Proceedings of the Thirty-First International Joint Conference on
               Artificial Intelligence, {IJCAI-22}},
  publisher = {International Joint Conferences on Artificial Intelligence Organization},
  editor    = {Lud De Raedt},
  pages     = {2182--2188},
  year      = {2022},
  month     = {7},
  note      = {Main Track},
  doi       = {10.24963/ijcai.2022/303},
  url       = {https://doi.org/10.24963/ijcai.2022/303},
}

@inproceedings{nguyen2022fedsr,
 author = {Nguyen, A. Tuan and Torr, Philip and Lim, Ser Nam},
 booktitle = {Advances in Neural Information Processing Systems},
 editor = {S. Koyejo and S. Mohamed and A. Agarwal and D. Belgrave and K. Cho and A. Oh},
 pages = {38831--38843},
 publisher = {Curran Associates, Inc.},
 title = {FedSR: A Simple and Effective Domain Generalization Method for Federated Learning},
 url = {https://proceedings.neurips.cc/paper_files/paper/2022/file/fd946a6c99541fddc3d64a3ea39a1bc2-Paper-Conference.pdf},
 volume = {35},
 year = {2022}
}

@inproceedings{jiang2022test,
  title={Test-Time Robust Personalization for Federated Learning},
  author={Jiang, Liangze and Lin, Tao},
  booktitle={The Eleventh International Conference on Learning Representations},
  year={2022}
}

@inproceedings{zhu2021diurnal,
  title={Diurnal or nocturnal? federated learning of multi-branch networks from periodically shifting distributions},
  author={Zhu, Chen and Xu, Zheng and Chen, Mingqing and Kone{\v{c}}n{\`y}, Jakub and Hard, Andrew and Goldstein, Tom},
  booktitle={International Conference on Learning Representations},
  year={2021}
}

@article{wang2022unified,
  title={A unified analysis of federated learning with arbitrary client participation},
  author={Wang, Shiqiang and Ji, Mingyue},
  journal={Advances in Neural Information Processing Systems},
  volume={35},
  pages={19124--19137},
  year={2022}
}

@ARTICLE{rizk2020dynamic,
  author={Sattler, Felix and Müller, Klaus-Robert and Samek, Wojciech},
  journal={IEEE Transactions on Neural Networks and Learning Systems}, 
  title={Clustered Federated Learning: Model-Agnostic Distributed Multitask Optimization Under Privacy Constraints}, 
  year={2021},
  volume={32},
  number={8},
  pages={3710-3722},
  keywords={Data models;Sociology;Statistics;Servers;Optimization;Privacy;Training;Clustering;distributed learning;federated learning;multi-task learning},
  doi={10.1109/TNNLS.2020.3015958}}

@article{park2021tackling,
  title={Tackling dynamics in federated incremental learning with variational embedding rehearsal},
  author={Park, Tae Jin and Kumatani, Kenichi and Dimitriadis, Dimitrios},
  journal={arXiv preprint arXiv:2110.09695},
  year={2021}
}

@misc{anderson2025modelmisalignmentlanguagechange,
      title={Model Misalignment and Language Change: Traces of AI-Associated Language in Unscripted Spoken English}, 
      author={Bryce Anderson and Riley Galpin and Tom S. Juzek},
      year={2025},
      eprint={2508.00238},
      archivePrefix={arXiv},
      primaryClass={cs.CL},
      url={https://arxiv.org/abs/2508.00238}, 
}
\vfill

\end{document}